\DeclareSymbolFont{rsfs}{U}{rsfs}{m}{n}
\DeclareSymbolFontAlphabet{\mathscrsfs}{rsfs}
\def\bbY{\overline{\mathbf Y}}
\def\gap{\mathsf{gap}}
\def\omu{\overline{\mu}}
\def\BM{{\rm BM}}
\def\bbB{\overline{\mathbf B}}
\def\bbF{\overline{\mathbf F}}
\def\KL{{\sf KL}}
\def\TV{{\sf TV}}
\def\ophi{\overline{\phi}}
\def\R{{\mathbb R}}
\def\hbv{\hat{\boldsymbol v}}
\def\hv{\hat{v}}
\def\de{{\rm d}}
\def\sTV{\mbox{\tiny \rm TV}}
\def\<{\langle}
\def\>{\rangle}
\def\sTV{\mbox{\tiny \rm TV}}
\def\sBayes{\mbox{\tiny \rm Bayes}}
\def\salg{\mbox{\tiny \rm alg}}
\def\sKL{\mbox{\tiny \rm KL}}
\def\sinter{\mbox{\tiny \rm between}}
\def\prob{{\mathbb P}}
\newtheoremstyle{myremark} 
    {\topsep}                    
    {\topsep}                    
    {\rm}                        
    {}                           
    {\bf}                        
    {.}                          
    {.5em}                       
    {}  
\newtheorem{claim}{Claim}[section]
\newtheorem{lemma}[claim]{Lemma}
\newtheorem{fact}[claim]{Fact}
\newtheorem{assumption}{Assumption}
\newtheorem{conjecture}[claim]{Conjecture}
\newtheorem{theorem}{Theorem}
\newtheorem{proposition}[claim]{Proposition}
\newtheorem{corollary}[claim]{Corollary}
\theoremstyle{myremark}
\def\ed{\stackrel{{\mathrm d}}{=}}
\def\top{\intercal}
\def\bA{{\boldsymbol A}}
\def\bB{{\boldsymbol B}}
\def\cuN{\mathscrsfs{N}}
\def\cA{{\mathcal A}}
\def\cL{{\mathcal L}}
\def\bphi{{\boldsymbol \varphi}}
\def\hS{\hat{S}}
\def\uk{{\underline{k}}} 
\def\snull{\mbox{\tiny\rm null}}
\def\cT{\mathcal{T}}
\def\cE{\mathcal{E}}
\def\cC{\mathcal{C}}
\def\GOE{\mbox{GOE}}
\def\bbm{\overline{\boldsymbol m}}
\def\hbx{\hat{\bx}}
\def\tby{\tilde{\by}}
\def\hby{\hat{\by}}
\def\hbm{\hat{\boldsymbol m}}
\def\S{{\mathbb S}}
\def\cE{\mathcal{E}}
\def\bA{\mathbf{A}}
\def\bB{\mathbf{B}}
\def\bD{\mathbf{D}}
\def\bG{\mathbf{G}}
\def\bI{\mathbf{I}}
\def\bP{\mathbf{P}}
\def\bQ{\mathbf{Q}}
\def\bU{\mathbf{U}}
\def\bW{\mathbf{W}}
\def\bX{\mathbf{X}}
\def\bY{\mathbf{Y}}
\def\bZ{\mathbf{Z}}
\def\ba{\boldsymbol{a}}
\def\bg{\boldsymbol{g}}
\def\bm{\boldsymbol{m}}
\def\bu{\boldsymbol{u}}
\def\bv{\boldsymbol{v}}
\def\bw{\boldsymbol{w}}
\def\bx{\boldsymbol{x}}
\def\by{\boldsymbol{y}}
\def\bz{\boldsymbol{z}}
\def\bA{\boldsymbol{A}}
\def\bB{\boldsymbol{B}}
\def\bD{\boldsymbol{D}}
\def\bG{\boldsymbol{G}}
\def\bI{\boldsymbol{I}}
\def\bP{\boldsymbol{P}}
\def\bQ{\boldsymbol{Q}}
\def\bU{\boldsymbol{U}}
\def\bW{\boldsymbol{W}}
\def\bX{\boldsymbol{X}}
\def\bY{\boldsymbol{Y}}
\def\bZ{\boldsymbol{Z}}
\def\normal{{\mathsf{N}}}
\def\bDelta{\boldsymbol{\Delta}}
\def\bzero{\boldsymbol{0}}
\def\bfone{\boldsymbol{1}}
\def\bm{\boldsymbol{m}}
\def\supp{{\rm supp}}
\def\GOE{{\sf GOE}}
\def\reals{{\mathbb R}}
\def\naturals{{\mathbb N}}
\def\sT{{\sf T}}
\def\Ball{{\sf B}}
\def\hbz{\hat{\boldsymbol z}}
\def\id{{\boldsymbol I}}
\def\proj{{\boldsymbol P}}
\def\tbZ{\tilde{\boldsymbol Z}}
\def\bfzero{{\boldsymbol 0}}
\def\rP{{\rm P}}
\def\ed{\stackrel{{\rm d}}{=}}
\def\op{{\mbox{\rm\tiny op}}}
\renewcommand{\P}{\mathbb{P}}
\newcommand{\E}{\mathbb{E}}
\newcommand{\eps}{\varepsilon}
\newcommand{\argmin}{\operatorname{argmin}}
\newcommand{\sign}{\operatorname{sign}}
\newcommand{\Unif}{\operatorname{Unif}}
\newcommand{\RN}[1]{%
  \textup{\uppercase\expandafter{\romannumeral#1}}%
}
\newcommand{\vv}[1]{\noindent{{\color{blue}\textbf{\#\#\# VV:} \textsf{#1} \#\#\#}}}
\newcommand{\RNum}[1]{\uppercase\expandafter{\romannumeral #1\relax}}
\newcommand*{\rom}[1]{\expandafter\@slowromancap\romannumeral #1@}
\begin{document}

\title{Computational bottlenecks for denoising diffusions}

\author{Andrea Montanari\thanks{Department of Statistics and Department of Mathematics, Stanford University} 
	\and 
	Viet Vu\thanks{Department of Statistics, Stanford University}
}
\pagenumbering{arabic}

\maketitle

\date{}

\begin{abstract}
 Denoising diffusions sample from a probability distribution $\mu$
in $\mathbb{R}^d$ by constructing a stochastic process $(\hbx_t:t\ge 0)$ in $\R^d$ such that  $\hbx_0$ is easy to sample, but the distribution 
of $\hbx_T$ at large $T$ approximates $\mu$. The drift 
$\bm:\R^d\times\R\to\R^d$ of this diffusion process is learned my minimizing a
score-matching objective. 

Is every probability distribution $\mu$, for which sampling is tractable, also amenable to sampling via diffusions?
We provide evidence to the contrary by studying a probability distribution
$\mu$ for which sampling is easy, but the drift of the diffusion process is intractable ---under a popular conjecture on information-computation gaps in statistical estimation.
We show that there exist drifts that are superpolynomially close to the optimum value
(among polynomial time drifts) and yet yield samples with distribution that is very far from the target one.
\end{abstract}

\tableofcontents

\section{Introduction}

\subsection{Background}\label{sec:background}

Diffusion sampling (DS) \citep{song2019generative,ho2020denoising} 
has emerged as a central paradigm in generative artificial intelligence (AI). 
Given a target distribution $\mu$ on $\reals^d$, we want to sample $\bx\sim \mu$.
Diffusions achieve this goal by generating trajectories of a stochastic process
$(\hbx_t)$ whose state $\hbx_T$ at large $T$ is approximately distributed according to $\mu$. 
This  suggests a natural question: 
\begin{itemize}[leftmargin=0.5cm]
\item[{\bf Q:}] Are there  distributions $\mu$
for which sampling via diffusions fails even if sampling from $\mu$ is easy?
\end{itemize}
In order to explain how DS might fail, it is useful to recall the setup and introduce some notations\footnote{We follow the formulation of \cite{montanari2023sampling},
which  does not require time reversal (c.f. Appendix \ref{sec:equivalence})}. 
The basic DS approach implements an approximation of the following
stochastic differential equation (SDE), with initialization $\by_0=\bzero$:
\begin{align}
\de{\by_t} &= \bm(\by_t;t)\de{t} +\de{\bB_t},\label{eq:Diffusion}\\
\bm(\by,t) &:= \E\{\bx | t\bx+\sqrt{t}\bg = \by\}\, ,\label{eq:Denoiser}
\end{align}
where $(\bB_t)_{t\geq 0}$ is Brownian motion  (BM) and  in Eq.~\eqref{eq:Denoiser}
$\bx\sim \mu$ is independent of $\bg\sim\normal(\bzero,\bI_d)$. 

It is not hard to show that, if $\by_t$ is generated according to the above SDE,
then there exists $\bx\sim \mu$ and an independent standard BM $(\bW_t)_{t\ge 0}$ (different from $(\bB_t)_{t\ge 0}$)
such that
\begin{align}
\by_t = t\, \bx+\bW_t\, . \label{eq:YtDef}
\end{align}
Therefore, running the diffusion \eqref{eq:Diffusion} until some large time $T$,
and returning $\by_T/T$ or $\bm(\by_T,T)$ yields a sample approximately distributed according to $\mu$.

In practice, the function $\bm$ is generally not accessible (cf. discussion below \eqref{eq:ScoreMatching}), and is replaced by an approximation $\hbm(\by, t)$. We can implement an Euler discretization of the SDE
\eqref{eq:Diffusion}:
\begin{align}
    \hby_{t+\Delta} &= \hby_t+ \hbm(\hby_t,t)\Delta +\sqrt{\Delta}\, \hbz_t\, ,\label{eq:DiscretizedDiffusion}
\end{align}
with $\Delta$ a small stepsize, and  $(\hbz_t)_{t\in \naturals\Delta}\sim_{iid}\normal( \bzero,\bI_d)$.
After iterating \eqref{eq:DiscretizedDiffusion} up to a large time $T$, we output $\hbx_T = \hbm(\hby_T,T)$. We refer to $\hbx_{T}$ as a \textit{diffusion sample}.  

Diffusions reduce the problem of sampling from a distribution $\mu$ to that of 
approximating the conditional expectation $\bm$ (Eq. \eqref{eq:Denoiser}) by $\hbm$. The mapping 
$\by \mapsto \bm(\by,t)$ is the Bayes-optimal estimator 
of $\bx$ in Gaussian noise: 
\begin{align}
\bm(\,\cdot\, , t)=\underset{\bphi:\reals^n\to\reals^n}{\argmin}\;\E\big\{\|\bphi(\by_t)-\bx\|^2\big\}\, .\label{eq:DenoiserOptimality}
\end{align}
In words, we are given a Gaussian observation $\by_t \sim \normal(t\bx, t\id_d)$ 
(for a single $t$) and want to estimate $\bx$ as to minimize 
mean square error (MSE). This is also known as the `score-matching objective'.

The minimization in Eq.~\eqref{eq:DenoiserOptimality} has to be modified 
for two reasons:
\emph{First,} in general we do not know the distribution of $\bx$ over which
the expectation in \eqref{eq:DenoiserOptimality} is taken;
we only have a sample $(\bx_i)_{i\le N}\sim_{iid} \mu$. We thus replace
the MSE by its sample version:
\begin{align}
\mbox{minimize}&\;\;\; \frac{1}{N}\sum_{i=1}^N\big\|\bphi(\by_{i,t})-\bx_i\big\|^2\, ,\label{eq:ScoreMatching}
\;\;\;\;
\mbox{subj. to}\;\; \bphi\in \cuN\, ,
\end{align}
where $\by_{i,t}=t\bx_i+\sqrt{t}\bg_i$ for $(\bg_i)_{i\le N}\sim_{iid} \normal(0,\id_d)$.
The minimization in \eqref{eq:DenoiserOptimality} must  be restricted to a function class $\cuN$ (e.g. neural nets). A (near)-optimal solution to \eqref{eq:ScoreMatching} will be $\hbm$. 

\emph{Second,}  
to efficiently implement the generative process \eqref{eq:DiscretizedDiffusion}, $\hbm$ should be computable in polynomial time. 
For this reason, $\cuN$ must be a set of such functions. This is a purely computational constraint, and is present even if we have access to $\mu$ (i.e., for $N=\infty$).

Most of the literature on diffusion sampling studies how samples quality deteriorates because of finite sample size $N$ or non-vanishing step size $\Delta$.
Here we focus on a more fundamental limitation that arises because 
$\hbm$ must be computable
in polynomial time (the second remark above). 

A key remark here is that the ideal drift $\bm(\by,t)$ is the
Bayes-optimal denoiser, see \eqref{eq:DenoiserOptimality}.
Namely it is the optimal function to estimate  $\bx$ with prior distribution $\mu$ from noisy observations $\by_t\sim\normal(t\bx,t\id_d)$:
$t$ can be interpreted as the signal-to-noise ratio (SNR) of this denoising problem.
We will say that an \emph{information-computation gap} arises for this problem 
(at SNR $t$) there exists a constant $\gap(t)>0$ such that, for all polynomial-time algorithms $\hbm$, if $d$ is large enough
%
\begin{align}
\E\big\{\|\hbm(\by_t)-\bx\|^2\big\}\ge
\underset{\bphi}{\inf}\; \E\big\{\|\bphi(\by_t)-\bx\|^2\big\}+\gap(t)\, .
\label{eq:gap_def}
\end{align}
Recent literature 
provides many instances of statistical estimation problems for which 
 an information-computation gap  is shown to exist 
\citep{brennan2018reducibility,bandeira2022franz,celentano2022fundamental,
schramm2022computational} conditional on certain  widely accepted conjectures. 
We stress that the  conditional/conjectural nature of these results is,
so far, unavoidable, a situation analogous to classical complexity theory that relies on P$\neq$NP.
Several of the problems for which a gap arises take the form of estimating $\bx\sim \mu$ from  observations 
$\by_t= t\bx+\sqrt{t} \, \bg$. 

\cite{koehlersampling} already pointed out informally that 
denoising problems presenting an information-computation gap can result into a failure of DS.
As a concrete example, they suggested the spiked Wigner model (c.f. next section).
While this informal remark is natural, making it mathematically precise is far from obvious.
In fact --strictly speaking-- the remark is \textbf{false}. If sampling 
from $\mu$ is easy, then the drift $\bm(\by,t)$ can be constructed to return (for all $t\ge t_0$)
a fixed random sample $\bx\sim \mu$. Then the diffusion will sample correctly.
However such $\bm$ will be very far from an optimal denoiser. (See Proposition \ref{propo:bad_score_good_sampling}
for formal version of this counter-example.)

We also note that several earlier papers provided examples of probability distributions $\mu$
from physics and Bayesian statistics for which Gibbs sampling is expected to succeed, but DS appears to fail \citep{montanari2007solving,ricci2009cavity,ghio2024sampling,huang2024sampling}.
None of these papers presented a formal claim either.

The present paper fills this gap in the literature.
We prove two general results that hold for any distribution $\mu$ that 
presents a certain version of information-computation gap
(see formal statements below).
\emph{First,} we prove that there exists drifts that are approximate optimizers 
of the score matching objective \eqref{eq:ScoreMatching} 
among polynomial time algorithms (up to an sub-polynomially small error)
and yet lead to completely incorrect sampling. 
\emph{Second,} we show that \emph{every} polynomial-time computable drift that
is a near optimum of score matching and is also Lipschitz continuous leads to incorrect sampling.
\emph{Finally,} we ilustrate the applicability of our theorems by studying a toy
example, namely sampling a sparse low-rank matrix.

We emphasize that this failure of DS is of computational of nature and
purely related to the requirement to approximate the Bayes optimal denoiser $\bm(\by,t)$
by a polytime computable function. 

\subsection{Summary of results}

Recall that the Wasserstein-1 distance between two 
measures $\mu_1,\mu_2$ on $\reals^d$ is defined as 
\begin{align*}
    W_1(\mu_1,\mu_2) := \inf_{\gamma\in\cC(\mu_1,\mu_2)}\int \|\bx_1-\bx_2\|_2\, \gamma(\de\bx_1,\de\bx_2)\, ,
\end{align*}
with the infimum taken over couplings on $\mu_1$ and $\mu_2$. Given random vectors $\bX_1, \bX_2$
we denote by $W_1(\bX_1,\bX_2)$ the $W_1$-distance of their distributions. 
We prove lower bounds on the $W_{1}$ to show incorrect sampling. 
Since we only consider distributions $\mu$ supported on vectors with bounded norm, a lower bound on $W_1$ implies lower bounds on TV distance and KL divergence. Hence our impossibility results are stated in a strong form. 
    
As a running example/application, we will let $\mu$ to be the following distribution over
$n\times n$ sparse low-rank matrices. Let $B_{n,k}:=\{\bu\in \{0,\pm1/\sqrt{k}\}^n| \|\bu\|_0= k\}$ be the set of $0/\pm (1/\sqrt{k})$ unit vectors with $k$ nonzero entries ($\|\bu\|_0$ denotes the number of nonzeros in $\bu$). We define the target distribution $\mu =\mu_{n,k}$ to be the distribution of 
$\bx = \bu\bu^\sT$ when $\bu\sim\Unif(B_{n,k})$.
Note that $\bx\in\R^{n\times n}$ is a matrix that we identify with a vector in $\R^d$ for $d=n^2$. 
Sampling from $\mu$ is trivial: just sample a vector
with entries in $\{0,1/\sqrt{k},-1/\sqrt{k}\}$ and exactly $k$ non-zero entries, and let $\bx=\bu\bu^\sT$. 
However, rigorous evidence supports the claim that ---for certain
scalings of $k$, $t$ with $n$--- polynomial-time algorithms cannot
approach the Bayes-optimal error  \citep{butucea2015sharp,ma2015computational,cai2017computational,brennan2018reducibility,schramm2022computational}.

We will prove two sets of main results that hold for distributions $\mu$ such that the denoising
problem presents an information-computation gap:

\noindent{\bf 1. (Theorem \ref{thm:meta_thm_1}, Corollaries \ref{thm:small_distance_bad_sampling}, 
\ref{thm:VerySparse_small_distance_bad_sampling})} Near optimizers of score-matching  can sample incorrectly. We prove that there 
exists $\hbm:\reals^{n\times n}\times\reals\to \reals^{n\times n}$
such that:
\begin{itemize}[leftmargin=0.95cm,itemsep=0pt,topsep=0pt]
\item[{\sf M1.}] $\hbm(\,\cdot\, )$ can be evaluated in polynomial time.

\item[{\sf M2.}] The estimation error achieved by $\hbm$
    (namely, $\E\{\|\hbm(\by_t,t)-\bx\|^2\}$) is close to the 
    optimal estimation error achieved by polynomial-time algorithms. Hence $\hbm(\,\cdot\, ,t)$ will be  
    a near minimizer of the score-matching objective \eqref{eq:DenoiserOptimality} (integrated over $t$). 

\item[{\sf M3.}] Samples  $\hbx_T$ generated by the discretized diffusions \eqref{eq:DiscretizedDiffusion} with 
    drift  $\hbm(\,\cdot\,,t)$
    at some large time $T$ have distribution that is very far from the target
    $\mu$ (`as far as it can be' in $W_1$ distance.)
\end{itemize}

\noindent{\bf 2. (Theorem \ref{thm:meta_thm_2}, Corollary \ref{thm:ModifDistr})} \emph{All} (sufficiently) Lipschitz  score-matching optimizers sample incorrectly. More precisely, we prove that any denoiser that near optimizes the score matching among polytime algorithms, acts optimally on pure noise data, and is $C/t$-Lipschitz for $t>t_1$ (for any constant $C$ a suitable $t_1$), samples incorrectly.

Additionally, (Theorems \ref{thm:Reduction2}, \ref{thm:Reduction1}), we prove a reduction from estimation to 
DS. Namely, if accurate, polytime DS is possible,
then near Bayes optimal estimation of $\bx$ from $\by_t =  t\bx+\sqrt{t}\bg$
must also be possible in polynomial time for all $t$. The contrapositive of this statement implies
that if an information-computation gap exists, then (near)-correct DS is impossible in polynomial time.

\noindent{\bf Roadmap.} The rest of the paper is as follows. 
In Section \ref{sec:Discussion} we motivate our setting and assumptions, and discuss some limitations of our results.
In Section \ref{sec:Main} we 
state formally our results
(for technical reasons we state two separate results depending
on the growth of $k$ with $n$.) 
Section \ref{sec:Reduction} presents the general reduction from estimation to 
diffusion sampling. Section \ref{sec:AllLip} proves that all Lipschitz score matching optimizers fail. Section \ref{sec:numerics} provides a numerical experiment of a neural network $\hbm$ that outperforms (conjectured) asymptotically optimal denoisers for finite $n$, yet still samples poorly.

 \noindent{\bf Notation.} Throughout, $a_n\ll b_n$ means $a_n/b_n\to 0$. We refer to Appendix \ref{app:notation} for notations.

%
%

\section{Discussion}
\label{sec:Discussion}

\noindent{\bf Setting.} Our results indicate that 
a standard application of denoising diffusions methodology
will fail  to sample from $\mu$ when the associated denoising problem presents
an information-computation gap. The example $\mu_{n,k}$ of sparse low-rank matrices
shows that DS can fail in cases in which sampling from $\mu$ is trivial.

Our example also shows that the latent structure of the distribution 
can be exploited to construct a better algorithm. 
Namely, one can use diffusions to sample $\bu\sim\Unif(B_{n,k})$ (the posterior expectation $\bm(\by, t)$ is polytime-computable)
and then generate $\bx  = \bu\bu^{\sT}$. On the other hand, identifying such latent structures from data
can be hard in general, both statistically and 
computationally.

\noindent{\bf Limitations.} We prove that there exists drifts $\hbm(\,\cdot\, ,t)$ that lead
to poor sampling, despite being nearly optimal (among poly-time algorithms)
in terms of the score matching objective \eqref{eq:DenoiserOptimality}.
In particular, these bad drifts will be near optimal solutions of the problem of \eqref{eq:ScoreMatching},  as long as $\cuN$ only contains polytime methods and
is rich enough to approximate
them. We further exclude the existence of Lipschitz drifts $\hbm(\,\cdot\, ,t)$
that also satisfy conditions {\sf M1} and {\sf M2} but yield good generative sampling.

In principle there could still be non-Lipschitz polytime drifts that are 
near score matching optimizers and sample well. 
However if such drifts exist,  our results suggest that minimizing the score matching objective is not the right approach to find them (since the difference in value with bad drifts will be 
superpolynomially small). 

\noindent{\bf Correct samplers violating {\sf M2}.}
If we drop condition {\sf M2}, i.e. we accept drifts that are bad for the score-matching objective, then it is possible to construct drifts that 
can be evaluated in polynomial time and yield good sampling. This is stated formally below and proven in Appendix  \ref{sec:bad_score_good_sampling}.
\begin{proposition}\label{propo:bad_score_good_sampling}
Suppose that a discretized SDE $(\hby_{\ell\Delta})_{\ell\geq 0}$ per \eqref{eq:DiscretizedDiffusion} is generated, with step size $\Delta>0$ and noise stream $\hbz_{t}\overset{i.i.d.}{\sim} \normal(0, \bI_{n\times n})$. Then for every $n, k$, there exists a function $\hbm(\by, t)=\hbm(\by, t; \hbz_{1})$ parametrized by $\hbz_{1}$ (with no additional randomness) such that:
$(i)$~$\E[\|\hbm(\by_{t}, t)-\bx\|^{2}]=2(1-o(1))$ uniformly for every $t\geq 0$ (sub-optimal score-matching);
$(ii)$~$W_{1}(\hbm(\hby_{\ell\Delta}, \ell\Delta), \bx)=0$ for all $\ell\geq 0$ ($\hbm(\hby_{t}, t)$ is an approximate sample of $\bx$);
$(iii)$~$\lim_{\ell\to\infty} W_{1}(\hby_{\ell\Delta}/(\ell\Delta), \bx)=0$ ($\hby_{t}/t$ is an approximate sample of $\bx$ at large time).
\end{proposition}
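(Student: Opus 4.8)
The plan is to build a drift that makes no attempt to denoise at all: it discards its inputs $\by$ and $t$ and simply hard-codes one fresh draw from $\mu$, read off deterministically from the first noise increment $\hbz_1$. Since the law of $\hbz_1$ is non-atomic on $\R^{n\times n}$ while $\Omega_{n,k}$ is finite, I would fix once and for all a measurable map $\Round:\R^{n\times n}\to\Omega_{n,k}$ pushing $\normal(\bzero,\bI_{n\times n})$ forward to $\Unif(\Omega_{n,k})$ (for instance, feed one coordinate of $\hbz_1$ through the standard Gaussian CDF to get a $\Unif[0,1]$ variable, then quantize $[0,1]$ into $|\Omega_{n,k}|$ equal cells), set $\bu^\star:=\Round(\hbz_1)$, $\bx^\star:=\bu^\star(\bu^\star)^{\sT}$, and define $\hbm(\by,t;\hbz_1):=\bx^\star$ for every $\by,t$. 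This $\hbm$ is evaluable in polynomial time (one call to $\Round$ plus an outer product) and uses only the noise stream, as required; the proposition's explicit allowance for $\hbm$ to depend on $(\hbz_t)$ is what legitimizes reading $\bx^\star$ off $\hbz_1$ already at the initial step.

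For $(i)$ I would use that the external noise $\hbz_1$ is independent of $(\bx,\bg)$, so $\bx^\star\sim\mu$ independently of $\bx\sim\mu$, and therefore $\E\|\hbm(\by_t,t)-\bx\|^2=\E\|\bx^\star-\bx\|^2=\E\|\bx^\star\|^2+\E\|\bx\|^2-2\langle\E\bx^\star,\E\bx\rangle$. Here $\|\bx\|_F=\|\bu\|_2^2=1$ deterministically, while sign- and permutation-symmetry of $\Unif(\Omega_{n,k})$ give $\E[\bu\bu^{\sT}]=\tfrac1n\bI_n$, hence $\|\E\bx\|_F^2=1/n$; this yields $\E\|\hbm(\by_t,t)-\bx\|^2=2-2/n=2(1-o(1))$, the bound holding for every $t$ since the drift sees neither $\by_t$ nor $t$. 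Sub-optimality is then clear: the zero drift already attains MSE $\E\|\bx\|^2=1$, so this $\hbm(\cdot,t)$ is bounded away from the minimum of \eqref{eq:DenoiserOptimality}.

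Claim $(ii)$ is immediate from the construction, since $\hbm(\hby_{\ell\Delta},\ell\Delta;\hbz_1)=\bx^\star$ has law exactly $\mu$ for every $\ell\ge0$, giving $W_1(\hbm(\hby_{\ell\Delta},\ell\Delta),\bx)=0$. For $(iii)$ I would unroll \eqref{eq:DiscretizedDiffusion} with the constant drift $\bx^\star$ and $\hby_0=\bzero$ to get $\hby_{\ell\Delta}=\ell\Delta\,\bx^\star+\sqrt{\Delta}\sum_{j=1}^{\ell}\hbz_j$, so that the $\bx^\star$ contribution cancels exactly in $\hby_{\ell\Delta}/(\ell\Delta)-\bx^\star=(\sqrt{\Delta}\,\ell)^{-1}\sum_{j=1}^{\ell}\hbz_j$, which is $\normal(\bzero,(\Delta\ell)^{-1}\bI_{n\times n})$. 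Thus $\E\|\hby_{\ell\Delta}/(\ell\Delta)-\bx^\star\|^2=n^2/(\Delta\ell)$, and since $\bx^\star\sim\mu$ and $W_1\le W_2$ we conclude $W_1(\hby_{\ell\Delta}/(\ell\Delta),\bx)\le\sqrt{n^2/(\Delta\ell)}\to0$ as $\ell\to\infty$.

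I do not expect a genuine obstacle here: the construction circumvents the hard denoising problem entirely, exploiting $\hbz_1$ as a free source of exact $\mu$-randomness. The only points needing a little care are the two elementary moment identities $\E\|\bx\|^2=1$ and $\|\E\bx\|^2=1/n$ that pin the constant $2$ in $(i)$ and its uniformity over $t$, and checking that the recursion telescopes cleanly even though $\hbz_1$ appears both inside $\bx^\star$ and inside the Gaussian partial sums (it does, because the $\bx^\star$ terms subtract off).
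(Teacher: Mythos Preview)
Your proposal is correct and follows essentially the same approach as the paper: construct $\bx^\star\sim\mu$ deterministically from $\hbz_1$ independently of $\bx$, set $\hbm\equiv\bx^\star$, and verify the three claims by direct computation. The paper's construction of the map $\hbz_1\mapsto\bu^\star$ is a concrete rank-based choice (use ranks of entries in the first row of $\hbz_1$ to select the support, then randomize signs), whereas you invoke an abstract measurable pushforward $\Round$; both are equally valid. Your treatment of $(i)$ is in fact more explicit than the paper's, which simply asserts the identity without carrying out the $2-2/n$ computation.
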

The drift constructed in this proposition 
 has very poor value of the score-matching objective. 


\noindent{\bf Further related work.} A number of groups proved positive results on diffusion sampling. 
\cite{el2022sampling,chen2023sampling,montanari2023posterior,lee2023convergence,benton2023linear} provide reductions 
from diffusion sampling to score estimation. 
\cite{chen2023improved,shah2023learning,mei2025deep,li2024towards}
give end-to-end guarantees for classes of distributions $\mu$.

The computational bottleneck that we study here has been 
observed before in the context of certain Gibbs measures and Bayes posterior distributions \cite{ghio2024sampling,alaoui2023sampling,huang2024sampling},
and random constraint satisfaction problems \cite{montanari2007solving,ricci2009cavity} (the later papers use 
sequential sampling rather than diffusion sampling).

Our work provides an approach to rigorize
the latter line of work. 

%
%

\section{Near-optimal polytime drifts with incorrect diffusion sampling}
\label{sec:Main}

Given an arbitrary polytime computable drift $\hbm_0$,
we will construct a different polytime drift $\hbm$, with nearly equal score matching objective and yet incorrect sampling.
In Subsection \ref{sec:GeneralNearOptimal}, we state our assumptions and general result.
In Subsection \ref{sec:LowRankNearOptimal}, we apply the general theorem to the example of sampling 
sparse low-rank matrices. We also indicate several other similar examples.

In what follows $(\bx,\by_t)$ will always be distributed according to the ideal 
diffusion process of \eqref{eq:YtDef}, which also satisfies \eqref{eq:Denoiser}.
In particular $\bx\sim\mu$, $\by_t=t\bx+\bW_t$, for $(\bW_t)_{t\ge 0}$ a BM. 
On the other hand, $(\hby_t)$ will denote the process generated with the implemented procedure 
\eqref{eq:DiscretizedDiffusion}.

\subsection{General result}
\label{sec:GeneralNearOptimal}

Throughout, we will consider distributions $\mu$ that are supported on
$\Ball^d(1):=\{\bx\in\R^d:\, \|\bx\|_2\le 1\}$. We will state our assumptions and results having in mind
the case of measures that are roughly centered: $\E_{\bx\sim \mu}[\bx]=\int \bx\, \mu(\de\bx)\approx\bzero$,
although this condition is not formally needed.  

Our first main assumption is that any polynomial-time algorithm to estimate $\bx$ from $\by_t\sim\normal(t\bx,t\id_d)$ fails when $t$ is below a certain threshold $t_{\salg}$. When $t/t_{\salg}<1$, we expect that polytime algorithms will not perform better (in score-matching, c.f. \eqref{eq:DenoiserOptimality}) than the best constant estimator of $\bx$, namely $\E_{\bx\sim \mu}[\bx]$. In the case $\|\E_{\bx\sim \mu}[\bx]\|\approx 0$, it follows that polytime algorithms $\hbm_{0}$ with good score-matching will have small norm $\|\hbm_{0}(\by_{t}, t)\|$. This small-norm property is captured by our assumption. More details are discussed at the beginning of Subsection \ref{sec:MainModSparse}, and Proposition \ref{ref:prop_conj_simple}.
\begin{assumption}[Small norm below threshold] \label{ass:SmallNorm}
Let  $\by_{t} = t\bx+\bW_t$, for $(\bx,(\bW_t)_{t\ge 0})\sim\mu\otimes\BM$.
Then, there exists a function $\eta_1:\naturals\to\R$ (which we refer to as `rate')
such that $\eta_{1}(d)=o_{d}(1)$ and, for any $\eps,\gamma>0$,
\begin{align*}
\int_{0}^{(1-\gamma)t_{\salg}}\P\big(\|\hbm_{0}(\by_{t}, t)\|\geq \eps\big)
\, \de t =O(\eta_{1}(d))\, .
\end{align*}
\end{assumption}
Our second assumption is that polytime \textit{detection} is reliable
for $t$ above $t_{\salg}$. By detection, we consider the following hypothesis testing problem.
Given $\by\in\R^d$, we test if $\by$ is distributed as $t\bx+\sqrt{t}\normal(\bzero,\id_d)$ or as $\normal(\ba,t\id_d)$ for $\|\ba\|$ small,
where $\ba$ might depend on the Gaussian noise.
\begin{assumption}[Hypothesis testing succeeds above threshold]\label{ass:HT}
For  $c\in (0, 1)$, define 
$\mathcal{A}_{d}(c)=\{\ba\in \R^{d}: \|\ba\|\leq c\, t_{\salg}\}$.
We assume there exists  $\delta,\eta_2:\naturals\to\R$ (which we refer to as rates), and a polytime binary test function $\phi:\R^d\times \R_{\ge 0}\to\{0,1\}$ such that:
\begin{enumerate}[leftmargin=0.95cm,itemsep=0pt,topsep=0pt]
\item (Sharp detection threshold) $\delta(d)=o_{d}(1)$. 
\item For the process $(\by_{t}=t\bx+\bW_t)$, $\phi$ rejects with high probability:
\begin{equation*}
\int_{t_{\salg}(1+\delta)}^{\infty}\P(\phi(\by_{t}, t)=0)\de t= O(\eta_2(d))\, .
\end{equation*}
\item  Uniformly over the set $\mathcal{A}_{d}(c)$, $\phi$ fails to reject with high probability.
Namely:
\begin{equation*}
\P\left(\exists t\ge t_{\salg}(1+\delta)\;  \mbox{\rm  such that }
\sup_{\ba\in \mathcal{A}_{d}(c)}\phi(\ba+\bW_{t})=1\right)= o(1)\, .
\end{equation*}
\end{enumerate}
\end{assumption}
\noindent {\bf Remark.} Since we try to state our theorem in the strongest form, Assumptions \ref{ass:SmallNorm} and
\ref{ass:HT} do not take the same form as the information-computation gap \eqref{eq:gap_def}. Nevertheless, it can be proven that (for a broad class of problems) these assumptions cannot hold unless an information-computation gap is present. We leave this point for future work.

\paragraph{Discussion of the assumptions.} Before stating our main results, we address the validity of the assumptions.
\begin{itemize}
\item Assumption 1 is expected to hold for `reasonable'
polytime computable drifts in distributions 
$\mu$ with an information-computation gap (c.f. Subsection 3.2.1 and Proposition B.1).
More precisely, in such problems no polytime computable estimator $\hbm_0(\by_t,t)$
achieves correlation with the target $\bx$  bounded away from zero, 
and therefore its loss is decreased by shrinking it to near zero.
\item Assumption 2 concerns the existence of a certain efficient hypothesis test $\phi$. 
 We can leverage the literature on information-computation gaps to determine the (conjectured) optimal efficient algorithm $\hbm_{\star}(\by, t)$, and construct $\phi$ to test for large values of $\langle \hbm_{\star}(\by, t), \by\rangle$. Specifically, $\phi(\by, t)=\bfone_{\langle \hbm_{\star}(\by, t), \by\rangle\geq c_{\star}t}$, for some $c_{\star}\in (0, 1)$. The rationale is as follows: the maximum likelihood problem for the model $\by_{t}=t\bx+\bB_{t}$ is to find $\hat{\bx}$ to maximize $\langle \hat{\bx}, \by\rangle$. For $t$ above the algorithmic threshold, efficient estimators $\hbm(\by_{t}, t)$ can approximate the (inefficient) MLE very well for the alternative model $\by_{t}$, leading to large values of $\langle \hbm_{\star}(\by_{t}, t), \by_{t}\rangle\approx \langle \bx, \by_{t}\rangle$ (in particular, it is $t-o(t)$). Note that the MLE is in turn very close to the true signal $\bx$ in this regime.

The worst-case Type I error guarantees can be checked directly with this $\phi$, and Condition 3 holds for the examples we listed in Subsection 3.2.2. Below, we employ a simple observation to show this: for the null model $\ba+\bB_{t}$, we have
$$\langle \hbm_{\star}(\ba+\bB_{t}, t), \ba +\bB_{t}\rangle\leq \sup_{\bx'\in \text{supp}(\mu)}\langle \bx', \ba+\bB_{t}\rangle\leq \|\ba\|_{\op}+\sup_{\bx'\in \text{supp}(\mu)}\langle \bx', \bB_{t}\rangle$$
where $\text{supp}(\mu)$ is the support of $\mu$. For distributions with an information-computation gap, $\bx'$ is often a "structured" vector (c.f. sparsity and examples in points (i) and (ii) of Subsection 3.2.2); as pure noise $\bB_{t}$ does not favor any structure, we have $$\sup_{\bx'\in \text{supp}(\mu)}\langle \bx', \bB_{t}\rangle\ll t\Rightarrow \langle \hbm_{\star}(\ba+\bB_{t}, t), \ba +\bB_{t}\rangle\leq ct_{\salg}+o(t)$$
for $c$ of Condition 3. Choosing $c_{\star}>c$, we have succeeded in constructing $\phi$.  
\end{itemize}

We state our first main result. It stipulates that we can construct a polytime algorithm 
which has `essentially' the same score-matching objective as $\hbm_0$ yet yields bad samples.
\begin{theorem}\label{thm:meta_thm_1}
Let $\mu$ be a probability measure supported on $\Ball^d(1)$ such that $\liminf_{d\to\infty}\int \|\bx\|\,\mu(\de\bx)=\alpha>0$.
Assume that there exist $t_{\salg}=t_{\salg}(d)>0$, a drift $\hbm_{0}:\R^d\times\R\to\R$, and  functions
$\eta_{1}(d), \delta(d), \eps_{2}(d)=o_d(1)$, such that following conditions hold:
$(i)$ $\sup_{\by, t}\|\hbm_{0}(\by, t)\|\leq 1$. 
$(ii)$ Assumption \ref{ass:SmallNorm} holds with rate $\eta_1(d)$.
$(iii)$ Assumption \ref{ass:HT} holds with rates $\delta(d),\eta_2(d)$.

Then there exists a modified drift $\hbm$ such that
\begin{enumerate}[leftmargin=0.95cm,itemsep=0pt,topsep=0pt]
\item[{\sf M1.}] $\hbm(\,\cdot\,)$ can be evaluated in polynomial time.
\item[{\sf M2.}] If  $\by_{t}=t\bx +\bB_t$ is the true diffusion (equivalently given by \eqref{eq:Diffusion}), then
\begin{align*}
\int_{0}^{\infty}\mathbb{E}[\|\hbm(\by_{t}, t)-\hbm_{0}(\by_{t}, t)\|^{2}] \, 
\de t=O(\eta_{1}(d)\vee \eta_{2}(d))\, .
\end{align*}
\item[{\sf M3.}] For
any step size $\Delta=\Delta_n>0$, we have incorrect sampling:
\begin{align}
\inf_{t\in \mathbb{N}\cdot \Delta, t\geq (1+\delta)t_{\salg}}W_{1}(\hbm(\hby_{t}, t), \bx)\geq \alpha-o_d(1)\, .
\end{align}
\end{enumerate}
\end{theorem}
The proof is presented in Appendix \ref{sec:ProofThm_NearOptimal}. The main idea is to let $\hbm(\by, t)$ be $\hbm_{0}(\by, t)\bfone_{\|\hbm_{0}(\by, t)\|\leq \eps}$ for $t\leq (1-\gamma)t_{\salg}$, and $\hbm_{0}(\by, t)\phi(\by, t)$ for $t\geq (1+\delta)t_{\salg}$, with small constants $(\gamma, \eps)$ and test $\phi$.

\noindent{\bf Remark.} It makes sense to assume that $\|\hbm_{0}(\cdot, \cdot)\|\leq 1$. Since 
$\text{supp}(\mu)\subseteq \Ball^d(1)$ and the latter is a convex set, projecting any
$\hbm_{0}$ onto this set yields a smaller MSE.

\subsection{Example: Sampling low-rank matrices}
\label{sec:LowRankNearOptimal}

We state two separate results for the probability distribution $\mu=\mu_{n,k}$ 
described in the introduction,
depending on the scaling of $k$ with $n$: 
in Section  \ref{sec:MainModSparse} we assume $\sqrt{n}\ll k\ll n$;
while in Appendix \ref{sec:CorollaryVerySparse} we assume $k\ll \sqrt{n}$.
Indeed, the nature of the problem
changes at the threshold $k\asymp \sqrt{n}$.

 A crucial role will be played by the
following  threshold
\begin{align}
 t_{\salg}(n,k):=\begin{cases}
 k^{2}\log\left(\dfrac{n}{k^{2}}\right)&\text{ if $k\ll \sqrt{n}$}\\
 \dfrac{n}{2} &\text{ if $\sqrt{n}\ll k\ll n$}
 \end{cases}\label{eq:TalgDef}
 \end{align}
It is expected that for $t\leq (1-\delta)t_{\salg}(n, k)$ and $\delta$ any fixed constant, no polytime algorithm can estimate $\bx$ significantly better than the estimator $\hbm_{\snull}=\E[\bx]\approx\bzero$ for $k\ll n$
(see Conjecture \ref{conj:PolyOpt}).

Since $\|\bx\|_F=1$  for  $\bx\sim \mu$, the Bayes denoiser 
$\bm(\by,t)=\bm(\by)$
does not depend on $t$ (this can be seen by Bayes rule). From now on, we refer to $\|\bx\| = \|\bx\|_F$ as the Frobenius norm.

\subsubsection{Moderately sparse regime: $\sqrt{n}\ll k\ll n$ }
\label{sec:MainModSparse}

Assumption \ref{ass:SmallNorm}  states that,  for $\by_{t}=t\bx+\bW_t$, the estimated drift $\hbm_{0}(\by_{t}, t)$ should 
have small norm with high probability. This condition holds
under the well-accepted  Conjecture \ref{conj:PolyOpt} below on information-computation gaps. 
In fact, a simple consequence of this conjecture is that
any polytime $\hbm$ \textit{matching} this error must satisfy
$\E\{\|\hbm(\by_{t}, t)\|^2\}=o_n(1)$ (see Proposition \ref{ref:prop_conj_simple}).
\begin{conjecture}\label{conj:PolyOpt}
For  $\sqrt{n}\ll k\ll n$, there exists $\underline{k}_n\ll n$ such that the following holds for any
$k=k_n$, with $\underline{k}_n\le k_n\ll n$. Let $\{\hbm_n\}_{n\ge 1}$, 
$\hbm_n: \reals^{n\times n}\times\reals\to  \reals^{n\times n}$ be any sequence of polytime algorithms (polynomial time in $n$). Then for any $\delta>0$, we have
\begin{align}
 \inf_{t\le (1-\delta)t_{\salg}}\E\big\{\| \hbm_n(\by_t,t)-\bx\|^2\} & \geq 1-o_n(1)\, .
\end{align}
\end{conjecture}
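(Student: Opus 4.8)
Conjecture~\ref{conj:PolyOpt} is a computational lower bound, so no unconditional proof is expected; the realistic goal, and what constitutes the anticipated evidence, is to establish it \emph{conditionally} on the planted-clique hypothesis and, separately, \emph{unconditionally against restricted classes} of algorithms. I would pursue both, since the paper only uses the conjecture as a hardness hypothesis and either form of evidence suffices.

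\textbf{Conditional route (reduction from planted clique).} I would start from the detection version of planted clique $\mathrm{PC}(N,\omega)$, $\omega=N^{1/2-\eps}$, and use the Gaussian rejection-kernel machinery of \cite{brennan2018reducibility} to map, entrywise and in total variation, a planted-clique adjacency matrix to $t\,\bu\bu^\sT+\sqrt t\,\bg$ with $\bu\sim\Unif(\Omega_{n,k})$ when the clique is present, and to $\sqrt t\,\bg$ when it is absent, choosing $(n,k,t)$ so the SNR bookkeeping is as tight as possible. Given any polynomial-time $\hbm$ with $\E\|\hbm(\by_t,t)-\bx\|^2\le 1-c$ for a fixed $c>0$, one has $\E\langle\hbm(\by_t,t),\bx\rangle\ge c/2$ under the planted law, hence (using $\|\hbm\|_F\le 1=\|\bx\|_F$, so $\langle\hbm,\bx\rangle\in[-1,1]$) $\langle\hbm(\by_t,t),\bx\rangle$ is bounded away from $0$ with constant probability; extracting the $O(c^{-2})$ top eigenvectors of the symmetrized estimate and running the standard planted-clique clean-up-and-verify step on each then yields a constant-advantage detector, contradicting the planted-clique conjecture. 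The cited work already gives hardness up to $t\lesssim n^{1-\eps}$; the extra effort is to push the reduction's SNR tracking toward $t_{\salg}$.

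\textbf{Restricted route (low-degree polynomials and sum-of-squares).} I would carry out the low-degree likelihood-ratio computation for the detection problem ``$\by_t$ planted vs.\ $\sqrt t\,\bg$'': expand $\|L^{\le D}\|_2^2$ as a sum over multisets $S$ of matrix entries with $|S|\le D$, reduce it to a combinatorial sum over subsets of $[n]$ weighted by overlap moments of $\bu\sim\Unif(\Omega_{n,k})$, and show it stays $O(1)$ for $t\le(1-\delta)t_{\salg}$ and $D$ up to $n^{\Omega(1)}$ (or $\mathrm{polylog}\,n$), the transition occurring precisely at $k^2\log(n/k^2)$ (resp.\ $n/2$). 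Transferring to estimation is then routine: for any degree-$D$ map $\hbm$ with $\sup_{\by,t}\|\hbm(\by,t)\|_F\le 1$, bound $\E\langle\hbm(\by_t,t),\bx\rangle$ by the same low-degree quantity, so $\E\|\hbm(\by_t,t)-\bx\|^2\ge\E\|\bx\|^2-2\E\langle\hbm(\by_t,t),\bx\rangle=1-o_n(1)$; the sharp constant $1$ (``all-or-nothing,'' cf.\ \cite{barbier2020allornothingstatisticalcomputationalphase}) is recovered because below threshold the correlation is not merely bounded but $o_n(1)$. A pseudo-calibration argument would upgrade this to sum-of-squares lower bounds in the spirit of \cite{hopkins2017power,schramm2022computational}.

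\textbf{Main obstacle.} The hard part is sharpness together with uniformity in $k$. Present rigorous bounds reach only $t\lesssim n^{1-\eps}$ (clique reductions, SoS) or $t\le cn$ for a small constant $c$ (low-degree for the submatrix problem, \cite{schramm2022computational}), and in the moderately sparse window $\sqrt n\ll k\ll n$, where $t_{\salg}=n/2$, a matching low-degree lower bound up to $(1-\eps)n/2$ is currently known only for $k=\Theta(n)$ (\cite{kunisky2019notes}). A proof of the full statement therefore needs (i) an argument valid across the entire sparsity range $\underline k_n\le k\ll n$ simultaneously, and (ii) a tightening of the SNR--degree (or SNR--reduction) tradeoff so the bound survives for every $t$ strictly below the algorithmic threshold with vanishing slack $\delta$. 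Reaching the threshold-sharp, full-range version against \emph{all} polynomial-time algorithms is exactly what current techniques do not deliver --- which is why the statement is posed as a conjecture, even though each regime in isolation is strongly supported.
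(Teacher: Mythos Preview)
The paper does not prove Conjecture~\ref{conj:PolyOpt}; it is stated as a conjecture and supported only by citations to existing hardness results (planted-clique reductions of \cite{brennan2018reducibility} for $t\le n^{1-\eps}$, sum-of-squares lower bounds of \cite{hopkins2017power}, low-degree bounds of \cite{schramm2022computational} for $t\le cn$, and \cite{kunisky2019notes} up to $(1-\eps)n/2$ but only for $k=\Theta(n)$). Your proposal correctly recognizes this and outlines precisely the two standard avenues of evidence the paper itself cites, together with an accurate diagnosis of why neither currently yields the full statement: the reductions and SoS bounds stop at $t\lesssim n^{1-\eps}$, the low-degree bounds either lose a constant factor or require $k\asymp n$, and no technique simultaneously covers the whole range $\underline{k}_n\le k\ll n$ up to the sharp threshold.

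In short, there is nothing to compare: the paper offers no proof, and your proposal is not a proof either but a survey of the supporting evidence and its gaps, which is the appropriate response and matches the paper's own discussion.
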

We refer to \cite{ma2015computational,cai2017computational,hopkins2017power,brennan2018reducibility,schramm2022computational,kunisky2019notes}
for evidence towards this conjecture. Next, we provide the following implication of Theorem \ref{thm:meta_thm_1}, whose proof is  in Appendix \ref{sec:proof_small_distance_bad_sampling}.
\begin{corollary}\label{thm:small_distance_bad_sampling}
Assume $\sqrt{n}\ll k \ll n$, so that $t_{\salg}(n,k):=n/2$ per \eqref{eq:TalgDef}.
Let $\hbm_{0}$ be an arbitrary poly-time algorithm such that $\sup_{\by,t}\|\hbm_{0}(\by, t)\|_{F}\leq 1$ and Assumption 
\ref{ass:SmallNorm} holds with rate $\eta_{1}$ such that $\eta_{1}\ll n^{-D}\forall D>0$.
Then there exists an estimator
$\hbm$ such that:
\begin{enumerate}[leftmargin=0.95cm,itemsep=0pt,topsep=0pt]
\item[{\sf M1.}] $\hbm(\,\cdot\,)$ can be evaluated in polynomial time.
\item[{\sf M2.}] If  $\by_{t}=t\bx +\bB_t$ is the true diffusion (equivalently given by \eqref{eq:Diffusion}), then, for every $D>0$,
\begin{align*}
\int_{0}^{\infty}\mathbb{E}[\|\hbm(\by_{t}, t)-\hbm_{0}(\by_{t}, t)\|^{2}] \, 
\de t=O(n^{-D})\, .
\end{align*}
\item[{\sf M3.}] There exists $\delta=o_{n}(1)$ such that, for
any step size $\Delta=\Delta_n>0$, we have incorrect sampling:
\begin{align}
\inf_{t\in \mathbb{N}\cdot \Delta, t\geq (1+\delta)t_{\salg}}W_{1}(\hbm(\hby_{t}, t), \bx)\geq 1-o_n(1)\, .
\end{align}
\end{enumerate}
\end{corollary}
%

To connect the last corollary with the introduction, we recall two facts from the literature on submatrix estimation: 
$(i)$~The Bayes estimator $\bm(\by_t)$ achieves small MSE
in a large interval above $t_{\salg}$ (Proposition \ref{propo:BayesOpt}); $(ii)$~No polytime estimator is expected to 
perform better than the null estimator below $t_{\salg}$ (Conjecture \ref{conj:PolyOpt}). Regarding $(i)$, we state  a characterization of the Bayes optimal error.
The proof is analogous to the main result in \cite{butucea2015sharp},
which considers the case of asymmetric matrices.
(For $k\le n^{a}$, $a<5/6$, see also \cite{barbier2020allornothingstatisticalcomputationalphase}.)
\begin{proposition}[Modification of \cite{butucea2015sharp}]\label{propo:BayesOpt}
Let $\bm(\by)$ be the posterior mean estimator 
in Eq.~\eqref{eq:Denoiser}. Assume $1\ll k\ll n$,
and define $t_{\sBayes}(n,k):=2k\log (n/k)$. 
Then, for any $\delta>0$, we have $\inf_{t\le (1-\delta)t_{\sBayes}}\E\big\{\| \bm(\by_t)-\bx\|^2\}  = 1-o_n(1)$,
$\sup_{t\ge (1+\delta)t_{\sBayes}}\E\big\{\| \bm(\by_t)-\bx\|^2\}   = o_n(1)$.
\end{proposition}
In other words, 
for $2 k\log (n/k) \ll t \ll n$, the optimal estimator can 
estimate the signal $\bx$ accurately, but we expect that no polytime algorithm can achieve the same.

\subsubsection{Very sparse regime: $k\ll\sqrt{n}$, and other examples}
\label{sec:MainVerySparse_main}

In the very sparse regime $k\ll \sqrt{n}$, we prove a result similar to Corollary \ref{thm:small_distance_bad_sampling} (Corollary \ref{thm:VerySparse_small_distance_bad_sampling}). 
\paragraph{Other examples.} We mention a few examples where it is relatively straightforward to apply Theorem \ref{thm:meta_thm_1}, following the blueprint in Corollary \ref{thm:small_distance_bad_sampling}.
$(i)$~Sampling low rank tensors, e.g. $\bx = \bu^{\otimes q}\in 
(\reals^{n})^{\otimes q}$, $q\ge 3$ when 
$\bu\sim\Unif(\{+1/\sqrt{n},-1/\sqrt{n}\}^n)$ or $\bu$ is uniform on the unit sphere; the corresponding denoising problem is known as tensor PCA
\citep{montanari2014statistical} (in this case $d=n^q$).
$(ii)$~Sampling elements of random linear subspaces of $\{0,1\}^d$:  $\bx = \bG\bu$ $\mod 2$, where $\bG\in \{0,1\}^{d\times \ell}$
is a fixed (known) uniformly random matrix and $\bu \sim\Unif(\{0,1\}^{\ell})$, $\ell=rn$ for $r\in (0,1)$ a constant;
the corresponding denoising problem amounts to decoding random linear codes \citep{richardson2008modern,ghazi2017lp} 
(this example fits our framework after centering).
We give two classes of examples for which applying Theorem \ref{thm:meta_thm_1} requires additional technical work (defer to future publications):
$(iii)$~Sampling from Bayesian posteriors, e.g. posterior of a low-rank plus noise estimation problem that presents an information-computation gap \citep{lelarge2017fundamentallimitssymmetriclowrank,montanari2023posterior,ghio2024sampling}; $(iv)$~Sampling solutions of random 
constraint satisfaction problems \citep{montanari2007solving,ghio2024sampling}.

\section{Reduction of estimation to diffusion-based sampling}
\label{sec:Reduction}

To complement previous results, we prove a general reduction: 
if diffusion sampling can be performed in polynomial time with sufficient accuracy, then 
we can perform also denoising. The contrapositive of this statement aligns with results in previous sections.

To avoid unessential complications, in this section we assume $\mu$ to be supported on the unit sphere $\S^{d-1}=\{\bx:\|\bx\|=1\}$.
We denote by $\rP_{\by}^{T}$ the law of $(\by_t)_{0\le t\le T}$  where $\by_t$ given by  Eq.~\eqref{eq:Diffusion} 
and by $\rP_{\hby}^{T,\Delta}$ the law of $(\hby_t)_{0\le t\le T}$, which is the discretized diffusion trajectory defined in \eqref{eq:DiscretizedDiffusion} (interpolated linearly outside $\naturals\cdot\Delta$).

It is further useful to define $\overline\rP_{\hby}^{T,\Delta}$ to be the law of the SDE interpolating that of \eqref{eq:DiscretizedDiffusion}:
\begin{align}
\de\hby_t = \hbm(\hby_{\lfloor t\rfloor_{\Delta}},\lfloor t\rfloor_{\Delta})\,\de t+\de \bB_t\, ,
\end{align}
where $\lfloor t\rfloor_{\Delta}:= \max\{s\in \naturals\cdot\Delta: s\le t\}$.
\begin{theorem}\label{thm:Reduction2}
    Assume that $\hbm(\,\cdot\,,\,\cdot\,)$ has complexity $\chi$ 
    and that for any $T\le \theta d$,
    $D_{\sKL}(\overline\rP_{\hby}^{T,\Delta}\|\rP_{\by}^{T})\le \eps$
    
    Then for any $\sigma>0$ 
    there exists an algorithm  a randomized algorithm $\hbm_+$ with complexity $(N\chi\cdot T/\Delta)$
    that approximates the posterior expectation:
    \begin{align}
    \E\big\{\|\hbm_{+}(\by)-\bm(\by)\|^2\big\}\le 2\overline{\eps}+2N^{-1}\, .\label{eq:PosteriorMean}
    \end{align}
    Here $\overline{\eps}:=\sqrt{2 \eps} +  \eps_0(\theta)$ and
    $\eps_0(\theta):=\E \|\rP_{\bx|\by}-\normal(\bzero,(\theta d)^{-1}\bI_d )*\rP_{\bx|\by}\|_{\sTV}$
    is the expected TV distance between $\rP_{\bx|\by}$ and the convolution of $\rP_{\bx|\by}$.
\end{theorem}
The proof of this result is presented in Appendix
\ref{sec:ProofRed}, along with a modification.

\section{All Lipschitz polytime algorithms fail}
\label{sec:AllLip}

In Section \ref{sec:Main} (Theorem \ref{thm:meta_thm_1} and Corollary \ref{thm:small_distance_bad_sampling}) 
we proved that there exist near-optimizers of the
score matching objective that perform poorly. 
However, we did not rule out the possibility that the optimal (in the sense of score-matching) polytime
drift $\hbm$ will perform well. We next show that this is not the case, under an additional assumption,
namely that the drift $\hbm(\,\cdot\,;t)$ is Lipschitz continuous for $t\ge (1+\delta)t_{\salg}$.
Proof is given in Appendix \ref{sec:meta_thm_2}.  (We assume the Lipschitz constant to be $C/t$, because the input of the denoiser is 
$\by_t = t\bx+\bW_t$, and hence the two $t$-dependent factors cancel.)
\begin{theorem}\label{thm:meta_thm_2}
Let $\mu$ be supported on $\Ball^d(1) =\{\bx: \|\bx\|\leq 1\}$, $\int \bx \, \mu(\de\bx)=\bzero$, and $\liminf_{d\to\infty}\int \|\bx\|\mu(\de\bx)= \alpha>0$. Let $\hbm:\reals^d\times \reals_{\ge 0}\to\reals^d$ be a polytime denoiser such that $\sup_{\by, t}\|\hbm(\by, t)\|\leq 1$
(below $\bW_t$ is a standard BM):
\begin{enumerate}[leftmargin=0.95cm,itemsep=0pt,topsep=0pt]
    \item $\hbm$ is nearly optimal, namely for $\by_t = t\bx+\bW_t$, and every $\gamma>0$ 
    \begin{align}
    &\sup_{t\le (1-\gamma)t_{\salg}}\Big|\E\big\{\|\hbm(\by_t,t)-\bx\|^2\big\}-\mathbb{E}[\|\bx\|^{2}]\Big|=o(t_{\salg}^{-1})\, ,\label{eq:general_ErrorModifPoly}\\
    & \sup_{t\ge (1+\gamma)t_{\salg}}\E\big\{\|\hbm(\by_t,t)-\bx\|^2\big\}=
    o(1)\, ,\label{eq:general_ErrorAboveT}
    \end{align}
    and that for every $t\geq 0, c\in [-1, 1]$, $\E[\|\hbm(\by_{t}, t)-\bx\|^{2}]\leq \E[\|c\hbm(\by_{t}, t)-\bx\|^{2}]+o(t_{\salg}^{-1})$.
    \item ($\hbm$ is small on pure noise.) For some $\delta=o(1)$, and every $\Delta=O(1)$, we have $$\Delta\cdot \sum_{t\in \mathbb{N}\cdot\Delta\cap[t_{\salg}(1+\delta), \infty]}\E[\|\hbm(\bW_{t}, t)\|^{2}]=o(1)$$
    \item $\hbm(\,\cdot\,,t)$ is $C/t$-Lipschitz for some constant $C$
    and all $t\ge (1+\delta)t_{\salg}$.
\end{enumerate}
Then, for every constant $C_{0}>0$ and step size $\Delta=O(1)$:
$$\inf_{t\in \naturals\Delta\cap [0, C_{0}t_{\salg}]}W_{1}(\hbm(\hby_{t}, t), \bx)\geq \alpha-o(1)$$
\end{theorem}
\paragraph{Remark.} The sum in Condition 2 of Theorem \ref{thm:meta_thm_2} is a discretized integral; when $\Delta$ is small enough, this is essentially equivalent to stating that
$$\int_{t_{\salg}(1+\delta)}^{\infty} \E[\|\hbm(\bW_{t}, t)\|^{2}]\de t=o(1)$$
For applications of this theorem (c.f. Corollary \ref{thm:ModifDistr}, we have an upper bound $\E[\|\hbm(\bW_{t}, t)\|^{2}]\leq c_{n}(t)$ with $c_{n}(t)$ decreasing (for $t\geq t_{\salg}(1+\delta)$), so that for every $\Delta$,
$$\Delta\cdot \sum_{t\in \mathbb{N}\cdot\Delta\cap[t_{\salg}(1+\delta), \infty]}\E[\|\hbm(\bB_{t}, t)\|^{2}]\leq \Delta\cdot c_{n}(t_{\salg}(1+\delta))+\int_{t_{\salg}(1+\delta)}^{\infty}c_{n}(t)\de t=o_{n}(1)$$
so that the specific value of $\Delta$ does not matter, as long as $\Delta=O(1)$.

We apply the above theorem to our running example of sampling sparse low-rank matrices.
In order to make sure that condition 2 in the theorem is verified,
we introduce a variant
$\omu_{n,k}$  of $\mu_{n,k}$
(all conclusions stated for $\mu$, e.g., Theorem \ref{thm:meta_thm_1}, Corollaries \ref{thm:small_distance_bad_sampling}, 
\ref{thm:VerySparse_small_distance_bad_sampling} hold for
$\omu_{n,k}$ as well.)
Letting $\mu^0_{n,k}$ be the centered version of $\mu_{n,k}$; we define
$\omu_{n,k} = \frac{1}{2}\, \delta_{\bzero} + \frac{1}{2}\,\mu^0_{n,k}$.
In words, with probability $1/2$ we let $\bx=\bzero$ and with probability $1/2$ we 
draw $\bx= \tilde\bx-\E[\tilde\bx]$, $\tilde\bx\sim \mu_{n,k}$, a sparse rank-one matrix, as in previous sections. As mentioned, this mixture distribution $\omu_{n, k}$ is mainly to satisfy  condition 2 of Theorem \ref{thm:meta_thm_2}. Indeed, we have the following decomposition
$$\E_{\bx\sim \omu_{n, k}}[\|\hbm(\by_{t}, t)-\bx\|^{2}]=\dfrac{1}{2}\E_{\bx\sim \mu_{n, k}^{0}}[\|\hbm(\by_{t}, t)-\bx\|^{2}]+\dfrac{1}{2}\E[\|\hbm(\bW_{t}, t)\|^{2}],$$
which shows that, to get $\hbm(\by_{t}, t)\approx \bx$ under the mixture distribution, we also need $\hbm(\bW_{t}, t)\approx \bzero$. More concretely, we can get explicit rates on $\E[\|\hbm(\bW_{t}, t)\|^{2}]$ for $t$ above $t_{\salg}$ by enforcing that $\hbm$ \textit{cannot} be improved by multiplying by certain hypothesis tests. The full result is as follows.
\begin{corollary}\label{thm:ModifDistr}
Assume $\uk_n$ exists as in Conjecture \ref{conj:PolyOpt}. Let $k= k_n$ 
be such that $\uk_n\vee \sqrt{n}\le k_n\ll n$ (moderately sparse regime). 
Let $\hbm_n$ be
a polytime denoiser such that for some $\delta=o_{n}(1)$, and every fixed constant $\gamma>0$:
\begin{enumerate}[leftmargin=0.95cm,itemsep=0pt,topsep=0pt]
    \item $\hbm_n$ is nearly optimal, namely (for $\by_t = t\bx+\bW_t$, $\bW_t$ standard BM)
    \begin{align}
    &\sup_{t\le (1-\gamma)t_{\salg}}\Big|\E\big\{\|\hbm(\by_t,t)-\bx\|^2\big\}-\mathbb{E}[\|\bx\|^{2}]\Big|=o_n(n^{-1})\, ,\label{eq:ErrorModifPoly}\\
    & \sup_{t\ge (1+\gamma)t_{\salg}}\E\big\{\|\hbm(\by_t,t)-\bx\|^2\big\}=
    o_n(1)\, ,
    \end{align}
    and further, for any $t\ge (1+\delta)t_{\salg}$, the MSE of $\hbm_n$ smaller or equal than the MSE
    of $c(\lambda_1(\by_t))\hbm_n(\by_t,t)$ for any polytime function  $c(\,\cdot\,)$
    of the maximum eigenvalue of $(\by_t+\by_t^{\sT})/\sqrt{2}$, and than the MSE of $\bP_{\Ball}\hbm_n$, for $\bP_{\Ball}$
    the projection onto the unit ball.
    \item $\hbm_n(\,\cdot\, ,t):\reals^{n\times n}\to \reals^{n\times n}$ is $C/t$-Lipschitz for some constant $C$
    and all $t\ge (1+\delta)t_{\salg}$. 
\end{enumerate}

Then, for every constant $C_{0}>0$, and step size $\Delta=O(1)$:
\begin{align}
    \inf_{t\in \naturals\cdot\Delta\cap [0, C_{0}n]} W_1(\hbm(\hby_t,t),\bx) \ge \frac{1}{2}-o_n(1)\,
 .\label{eq:SamplingErrorModifPoly}
\end{align}
\end{corollary}
The proof is given in Appendix \ref{sec:thm_ModifDistr}. 
We note that the error of polytime denoisers in \eqref{eq:ErrorModifPoly}
(and sampling error of Eq. \ref{eq:SamplingErrorModifPoly})
is $1/2$ instead of $1$ because the best constant denoiser
achieves error $1/2$.

Corollary \ref{thm:ModifDistr} does not rule out the possibility that 
there exists a near-optimizer of score matching that violates the Lipschitz condition 
and samples well. However, for $t\ge (1+\delta)t_{\salg}$ accurate estimation is possible with Lipschitz algorithms, and indeed many natural methods are in this class (e.g. neural nets with bounded number of layers and suitable operator norm bounds on the weights.)

\section{Numerical illustration}\label{sec:numerics}

\begin{figure}[t]
    \includegraphics[width=0.5\textwidth]{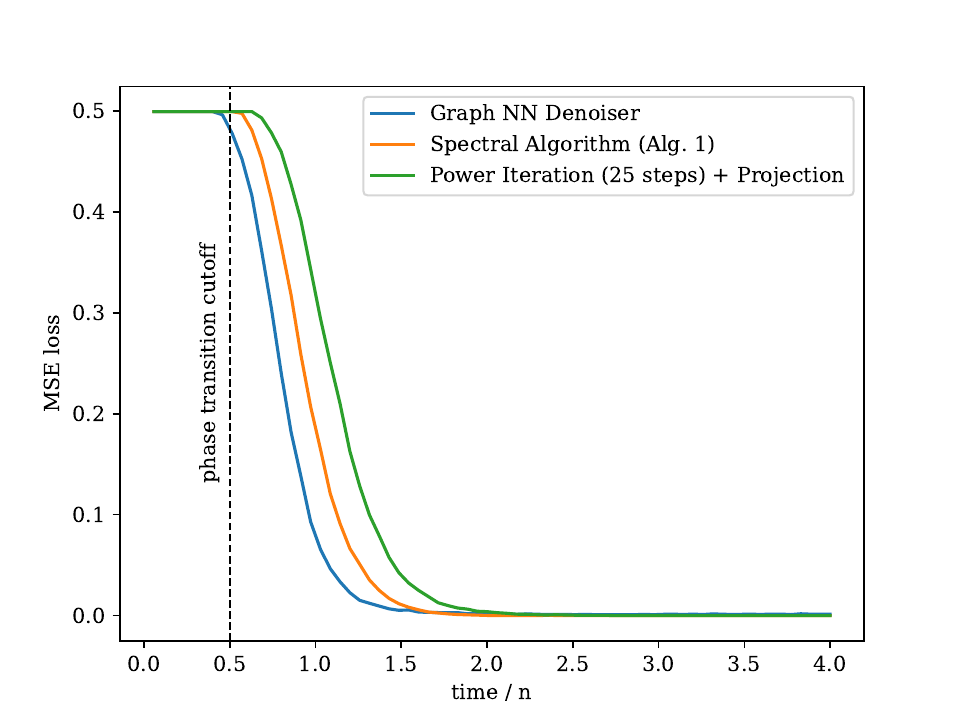}
    \includegraphics[width=0.5\textwidth]{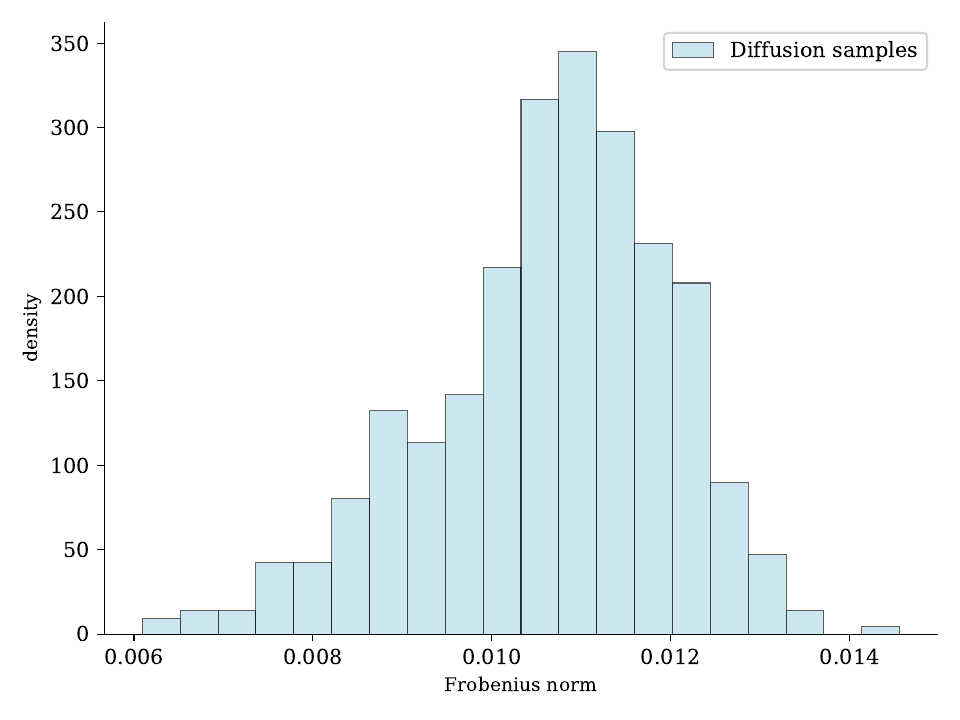}
    \caption{Generating sparse rank-one matrices $\bx\sim\tilde\mu_{n,k}$
    using denoising diffusions, for $n=350$, $k=20$. Left: MSE of various denoisers (vertical line corresponds to the algorithmic threshold $t_{\salg}$.) Right: Frobenius norms of generated samples.
    }\label{fig:Generating}
\end{figure}
The theory developed in the previous section yields a concrete prediction of the failure mode of DS when applied to the distribution $\tilde\mu_{n,k} = (1/2)\delta_{\bzero}+(1/2)\mu_{n,k}$ (with $\mu_{n,k}$ the law of $\bx=\bu\bu^{\sT}$, $\bu \sim\Unif(B_{n,k})$). Namely
(for large $n$, and $\sqrt{n}\ll k\ll n$):

\noindent{\bf 1.} Given sufficient model complexity and training samples, 
we expect the learnt denoiser $\hbm_n(t,\, \cdot\, )$ to achieve MSE
close to $1/2$ for $t<(1-\delta) t_{\salg}$, and close to $0$ for $t>(1+\delta)t_{\salg}$.

\noindent{\bf 2.} We expect DS based on such a denoiser to generate samples concentrated around $\bzero$.

We tested these predictions in a numerical experiment. We considered three polytime denoisers: 
\begin{enumerate}[leftmargin=0.95cm,itemsep=0pt,topsep=0pt]
\item[$(a)$]~The spectral-plus-projection denoiser of Algorithm \ref{alg:ModSparse}; 
\item[$(b)$]~A modification of the latter whereby the eigenvector calculation is replaced by $25$ iterations of power method; 
\item[$(c)$]~A learned graph neural network (GNN)  
\citep{scarselli2008graph,kipf2016semi}.
\end{enumerate}
We carry out experiments with denoiser $(b)$ because
$\ell$ iterations of power method can be approximated by an $\ell$-layers
GNN. Hence, method $(b)$ provides a baseline for GNN denoisers.

Figure \ref{fig:Generating}, left frame, reports the MSE achieved by the three denoisers 
$(a)$, $(b)$, $(c)$ as a function of $t/n$, for $n=350$, $k=20$. As GNNs are permutation-equivariant, we are training on $\approx 3\%$ of all possible outcomes, for $n=350$ and $k=20$.
We observe that the GNN denoiser outperforms both the spectral algorithm and its approximation via power iteration. However, none of the three approaches can overcome the barrier at $t_{\salg}=n/2$, while they perform reasonably well above that threshold. This confirms the prediction at point {\bf 1} above.

On the right, we plot the histogram of Frobenius norms of samples generated 
with the GNN denoiser. These values are close to $0$, which confirms 
the prediction at point {\bf 2} above. By using $\|\,\cdot\,\|_{F}$ as a $1$-Lipschitz test function, we obtain that the Wasserstein distance between diffusion samples and the target distribution is at least $0.48$ (the asymptotic prediction from theory is $0.50$).

\subsection*{Acknowledgements}
This work was supported by the NSF through award DMS-2031883, the Simons Foundation through
Award 814639 for the Collaboration on the Theoretical Foundations of Deep Learning, and the ONR
grant N00014-18-1-2729.

\newpage

\bibliographystyle{plainnat}  

%
%
\newpage
\appendix

\section{Notations}\label{app:notation}

Throughout the paper it will be understood that we are considering sequences of problems indexed by $n$, where $\bx\in \reals^{n\times n}$ and the sparsity index $k=k_n$ diverges as well. We write $f(n)\ll g(n)$ or $f(n)=o(g(n))$ if $f(n)/g(n)\to 0$ and $f(n) \lesssim g(n)$ or 
$f(n)=O(g(n))$ if $f(n)/g(n)\le C$ for a constant $C$. Finally $f(n) = \Theta(g(n))$ or $f(n)\asymp g(n)$
if $1/C\le f(n)/g(n)\le C$.

We write $\bW\sim\GOE(n)$ if $\bW=\bW^{\sT}$ is a random symmetric matrix with $(W_{ij})_{i\le j\le n}$
independent entries $W_{ii}\sim\normal(0,2)$, and $W_{ij}\sim\normal(0,1)$ for $i<j$.
We say that $(\bW_t:t\ge 0)$is a $\GOE(n)$ process if $\bW_t\in\reals^{n\times n}$ is a symmetric matrix 
with entries above and on the diagonal $(W_{t}(i,j): i<j\le n ; W_t(i,i)/\sqrt{2}: i\le n; t\ge 0)$ forming
a collection of $n(n+1)/2$ independent BMs. 

We use $C,C_i, c_i,\dots$  to denote absolute constants, whose value can change from line to line.

%
%

\section{Equivalence to the time-reversal formulation}\label{sec:equivalence}
In this section, we explain the relationship between the (time-forward) formulation of Eqs. \eqref{eq:Diffusion}, \eqref{eq:Denoiser} and the time-reversal setup of \cite{song2021score, song2019generative}. Regarding the latter, we recall the Ornstein-Uhlenbeck process:
\begin{align}\label{eq:OU_process}
\de\bZ_{s}=-\bZ_{s}\de s+\sqrt{2}\de\bB_{s}
\end{align}
with $\bZ_{0}=\bx\sim \mu$, the target distribution and $(\bB_{s})$ standard Brownian motion. Marginally, we have $\bZ_{s}\overset{d}{=}e^{-s}\bx+\sqrt{1-e^{-2s}}\bg$, with $\bg\sim\mathcal{N}(\bzero, \bI)$ independent of $\bx$. At large time-horizon $S\leq \infty$, $\bZ_{S}$ approximately follows $\mathcal{N}(\bzero, \bI)$, which is easy to sample from. 

We obtain a sampling process by time-reversing the SDE of Eq. \eqref{eq:OU_process}. Specifically, let $T\leq \infty$ be another large time-horizon, and consider a time-change function $\mathfrak{t}: [0, S]\to [0, T]$ strictly decreasing, continuous, such that $\mathfrak{t}(0)=T, \mathfrak{t}(S)=0$. Let $\mathfrak{s}$ be its inverse. Then, by time-reversing, we mean the process $(\bZ_{\mathfrak{s}(t)})_{t\in [0, T]}$, starting with the initial condition $\bZ_{S}$. 

We can write a time-forward process $\bbY_{t}=\bZ_{\mathfrak{s}(t)}$. It is known from \cite{haussmann1986time} and Tweedie's formula that $(\bbY_{t})$ follows the SDE:
\begin{align}\label{eq:time_forward_process}
\de \bbY_{t}=\bbF(t, \bbY_{t})\de t+\sqrt{2|\mathfrak{s}'(t)|}\de\bbB_{t}
\end{align}
where the drift is given by 
$$\bbF(t, \by)=\left(\by+\dfrac{2}{1-e^{-2\mathfrak{s}(t)}}\left\{\E[e^{-\mathfrak{s}(t)}\bx|\bZ_{\mathfrak{s}(t)}=\by]-\by\right\}\right)|\mathfrak{s}'(t)|$$
Note the resemblance between the conditional expectation $\E[e^{-\mathfrak{s}(t)}\bx|\bZ_{\mathfrak{s}(t)}=\by]$ of the previous display and the definition of $\bm$ in Eq. \eqref{eq:Denoiser}. Now, we take $S=T=\infty$, and a specific time-change $\mathfrak{t}(s)=1/(e^{2s}-1)$. The resulting process $\bbY_{t}$ has initial observation $\bbY_{0}=\bZ_{\infty}\sim \mathcal{N}(\bzero, \bI)$, and Eq. \eqref{eq:time_forward_process} becomes 
$$\de \bbY_{t}=-\dfrac{1}{t}\bbY_{t}+\dfrac{1}{\sqrt{t(1+t)}}\bm(\sqrt{t(1+t)}\bbY_{t}, t)\de t+\dfrac{1}{\sqrt{t(1+t)}}\de \bbB_{t}$$
Now, letting $\by_{t}=\sqrt{t(1+t)}\bbY_{t}$, employing Ito's lemma on $\bbY_{t}$ and the function $f(\by, t)=\sqrt{t(1+t)}\by$, we obtain that $\by_{t}$ follows the SDE of Eq. \eqref{eq:Diffusion}. The fact that we can write $\by_{t}=t\bx+\bB_{t}$ as a process comes from the fact that $\by_{t}=\sqrt{t(1+t)}\bZ_{\mathfrak{s}(t)}$. This connection between denoising diffusions and stochastic localization stems from the specific time-change formula of $\mathfrak{t}(\cdot)$.

The treatment in \cite{song2021score} uses a finite time horizon $S$ and consider the linear time-change formula $\mathfrak{t}(s) = S - s$. Then, $(\bbY_{s})$ follows the SDE of Eq. \eqref{eq:time_forward_process} with $|\mathfrak{s}'(\cdot)|=1$ and drift
$$\bbF(s, \by)=\by + 2\nabla_{x} p_{S-s}(\by)=\by+\dfrac{2}{1+e^{-2(S-s)}}\left\{\E[e^{-(S-s)}\bx|\bZ_{S-s} = \by]-\by\right\}$$
It is important to note that other time-change functions $\mathfrak{t}(\cdot)$ (and thus $\mathfrak{s}(\cdot)$) will not change our conclusions. The computational bottleneck of recovering $\bx$ from noisy observations $\alpha(t)\bx+\beta(t)\bg$ for some functions $\alpha(\cdot), \beta(\cdot)$ depends only on $\alpha(t)/\beta(t)$, which is continuous and increasing with $t$ from $0$ to $\infty$; moreover, our proof technique purely relies on controlling the discretized/generated SDE up to the computational threshold, which then should apply to other time-change functions. We chose the formula $\mathfrak{t}(s) = 1/(e^{2s}-1)$ for notational convenience.
\section{A simple consequence of Conjecture \ref{conj:PolyOpt}}
We state and prove the following proposition.
\begin{proposition}\label{ref:prop_conj_simple}
Suppose that Conjecture \ref{conj:PolyOpt} holds for a distribution $\mu$ with $\E_{\bx\sim \mu}[\|\bx\|^{2}]=1$. Then for any sequence of times $t=t_{n}\leq (1-\delta)t_{\salg}$,
$$\E[\|\hbm(\by_{t}, t)-\bx\|^{2}]=1-o(1)\Rightarrow \E[\|\hbm(\by_{t}, t)\|^{2}]=o(1)$$
In words, if $\hbm$ is (near)-optimal in score matching for $t\leq (1-\delta)t_{\salg}$, then $\|\hbm(\by_{t}, t)\|$ is small.
\end{proposition}
Before giving the proof, we remark that the full Conjecture \ref{conj:PolyOpt} is not needed. It suffices for $\hbm$ to have a weaker property; namely, that for any fixed constants $c\in [-1, 1]$ and $\delta\in (0, 1)$,
$$\inf_{t\leq (1-\delta)t_{\salg}}\E[\|c\hbm(\by_{t}, t)-\bx\|^{2}]\geq 1-o(1)$$
\begin{proof}
Fix $c\in [-1, 1]$ to be a constant chosen later. From the property of $\hbm$, we get from Cauchy-Schwarz that
$$\dfrac{1}{2}\E[\|\hbm(\by_{t}, t)\|^{2}]-\E[\|\bx\|^{2}]\leq 1-o(1)\Rightarrow \E[\|\hbm(\by_{t}, t)\|^{2}]\leq 4-o(1)$$
We use Conjecture \ref{conj:PolyOpt} for the sequence of estimators $c\hbm$, which states that uniformly over $t\leq (1-\delta)t_{\salg}$: 
$$\E[\|c\hbm(\by_{t}, t)-\bx\|^{2}]\geq 1-o(1)\Rightarrow c^{2}\E[\|\hbm(\by_{t}, t)\|^{2}]-2c\E[\langle \hbm(\by_{t}, t), \bx\rangle]\geq -o(1)$$
Suppose for sake of contradiction, that $\limsup_{n\to\infty}|\E[\langle \hbm(\by_{t}, t), \bx\rangle]|\geq \beta>0$. Without loss of generality, we consider the subsequence $(n_{k})$ such that $\E[\langle \hbm(\by_{t}, t),\bx\rangle]\geq \beta/2$. Along this subsequence, we have
$$4c^{2}-c\beta\geq -o(1)$$
for all $c\in [-1, 1]$. However, we know that this is not true for $c>0$ small enough; specifically, take $c<\beta/8$ so that we have $-c\beta/2\geq -o(1)$, contradiction. Hence $\mathbb{E}[\langle \hbm(\by_{t}, t),\bx\rangle]=o(1)$. From the property of $\hbm$, we obtain the conclusion.
\end{proof}

%
%
\section{Applying Theorem \ref{thm:meta_thm_1} to very sparse matrices}
\label{sec:CorollaryVerySparse}
As mentioned in Section \ref{sec:MainVerySparse}, we state and prove an analogous version of Corollary \ref{thm:small_distance_bad_sampling} in the very sparse case. One different aspect from the moderate case is that $k$ can be smaller asymptotically: in particular, $k$ can be sub-polynomial in $n$. Therefore, we first give a modification of Assumption \ref{ass:SmallNorm}.
\begin{assumption}\label{ass:SmallNormVerySparse}
Consider $\uk_{n}\ll k\ll n$ for $\uk_{n}$ in Conjecture \ref{conj:PolyOpt}. Let $\by_{t}=t\bx+\bW_{t}$ for $(\bW_{t})$ sBM independent of $\bx$. Then a near-optimal estimator $\hbm_{0}(\by, t)$ in score-matching satisfies: for every pair $(\gamma, \eps)\in (0, 1)$, 
$$\int_{0}^{(1-\gamma)t_{\salg}}\P\left(\|\hbm_{0}(\by_{t}, t)\|\geq \eps\right)\de t=O(k^{-D})$$
for every fixed $D>0$.
\end{assumption}
\begin{corollary}\label{thm:VerySparse_small_distance_bad_sampling}
Assume $(\log n)^{2}\ll k\ll n$, so that $t_{\salg}(n, k):=k^{2}\log(n/k^{2})$. Let $\hbm_{0}$ be an arbitrary poly-time algorithm such that $\sup_{\by, t}\|\hbm_{0}(\by, t)\|_{F}\leq 1$ and Assumption \ref{ass:SmallNormVerySparse} holds. Then there exists an estimator $\hbm$ such that
\begin{enumerate}[leftmargin=0.95cm,itemsep=0pt,topsep=0pt]
\item[{\sf M1.}] $\hbm(\,\cdot\,)$ can be evaluated in polynomial time.
\item[{\sf M2.}] If  $\by_{t}=t\bx +\bB_t$ is the true diffusion (equivalently given by \eqref{eq:Diffusion}), then, for every $D>0$,
\begin{align*}
\int_{0}^{\infty}\mathbb{E}[\|\hbm(\by_{t}, t)-\hbm_{0}(\by_{t}, t)\|^{2}] \, 
\de t=O(k^{-D})\, .
\end{align*}
\item[{\sf M3.}] There exists $\delta=o_{n}(1)$ such that, for
any step size $\Delta=\Delta_n>0$, we have incorrect sampling:
\begin{align}
\inf_{t\in \mathbb{N}\cdot \Delta, t\geq (1+\delta)t_{\salg}}W_{1}(\hbm(\hby_{t}, t), \bx)\geq 1-o_n(1)\, .
\end{align}
\end{enumerate}
\end{corollary}
\begin{proof}
By the blueprint Theorem \ref{thm:meta_thm_1}, we find (a sequence of) hypothesis tests $\phi(\by, t)$ indexed by $t$ such that Assumption \ref{ass:HT} holds. We choose a rate $\delta_{n}=o_{n}(1)$ slow enough, and $\eps_{n}$ be the resulting sequence, such that Proposition \ref{propo:VerySparse_1} holds. We now describe $\phi(\by, t)$, based on Algorithm \ref{alg:VerySparse}, from time $t=(1+\delta)t_{\salg}=(1+\delta)k^{2}\log(n/k^{2})$ upto $t=n$:
\begin{itemize}
\item Let $s=\sqrt{(1+\eps_{n})\log(n/k^{2})}$. Compute $\by_{+}=\by+\sqrt{\eps_{n} t}\bg$ and $\by_{-}=\by-\sqrt{t/\eps_{n}}\bg$, with $\bg\sim \normal(\bzero, \bI)$. Then, compute $\bA_{+}=(\by_{+}+\by_{+}^{\top})/(2\sqrt{t})$, and $\bA_{-}=(\by_{-}+\by_{-})/(2\sqrt{t})$.
\item Let $\bv$ be the leading eigenvector of $\eta_{s}(\bA_{+})$. Then, let $\hbv=\bA_{-}\bv$. Let $\hS$ be the set of $k$ indices of $\hbv$ with largest magnitude, and compute $\bw$ such that $w_{i}=(1/\sqrt{k})\sign(\hv_{i})\bfone_{i\in \hS}$. \item Finally, reject iff $\langle \bw, \by\bw\rangle\geq \beta t$, for some $1>\beta>c$. 
\end{itemize}
From Proposition \ref{propo:VerySparse_1}, we know that
$$\sup_{t\geq (1+\delta)t_{\salg}}\P(\bw(\by_{t}, t)\neq \bx)\ll n^{-D}$$
for every $D>0$. On the event that $\bw(\by_{t}, t)=\bx$, we get that
$$\langle \bw(\by_{t}, t), \by_{t}\bw(\by_{t}, t)\rangle=t+\langle \bw(\by_{t}, t), \bW_{t}\bw(\by_{t}, t)\rangle\geq t-\sup_{\bv: \|\bv\|_{0}=k, v_{i}\in \{0, \pm1/\sqrt{k}\}}\langle \bv, \bW_{t}\bv\rangle$$
for $(\bW_{t})$ standard Brownian motion. From Lemma \ref{lemma:sparse_vector_alignment}, we get that with error probability at most ${n\choose k}^{-D}$ for some $D$, we get that
$$\langle \bw(\by_{t}, t), \by_{t}\bw(\by_{t}, t)\rangle=t+\langle \bw(\by_{t}, t), \bW_{t}\bw(\by_{t}, t)\rangle\geq t-C\sqrt{t\log{n\choose k}}=t(1-o(1))$$
Therefore, we obtain that for $\beta<1$, 
$$\sup_{t\geq (1+\delta)t_{\salg}}\P(\phi(\by_{t}, t)=0)\ll n^{-D}$$
for any $D>0$. After time $t=n$, we use the same tests $\phi_{1}, \phi_{2}$ as documented in the proof of Corollary \ref{thm:small_distance_bad_sampling}, as $t=n>(1+\delta)(n/2)$, where $n/2$ is the algorithmic threshold of the moderately sparse case. The reason we can do this is that the spectral method, as in Algorithm \ref{alg:ModSparse}, works even when $k\ll \sqrt{n}$ (although the threshold for this algorithm is asymptotically worse than $k^{2}\log(n/k^{2})$). Furthermore, the size of the perturbation $\ba\in \mathcal{A}_{d}(c)$ is at most $\|\ba\|\leq c t_{\salg}=ck^{2}\log(n/k^{2})\ll c(n/2)$. 

Consequently, the first condition of Assumption \ref{ass:HT} holds with rate $n^{-D}$ for every $D>0$. To deal with the second condition, note simply that $\bw$ is a $k$-sparse vector. A close inspection of the proof of Corollary \ref{thm:small_distance_bad_sampling} shows that it does not really matter how $\bw$ is computed; the main idea is simply that for all $\ba\in \mathcal{A}_{d}(c)$,
$$\langle \bw, (\ba+\bB_{t})\bw\rangle\leq \|\ba\|+\langle \bw, \bB_{t}\bw\rangle\leq ct_{\salg}+\langle \bw, \bB_{t}\bw\rangle\leq ct_{\salg}+C\sqrt{t\log{n\choose k}}$$
for each $t$. Of course, we have to bound this simultaneously for all $t$, and this is done in the proof of Corollary \ref{thm:small_distance_bad_sampling}; c.f. Appendix \ref{sec:proof_small_distance_bad_sampling}. 
\end{proof}
%
%
\section{Concrete examples: Denoisers for sparse low-rank matrices}\label{sec:algs}

\subsection{Algorithms}
In this section, we provide the detailed pseudocode for Algorithms \ref{alg:VerySparse} and \ref{alg:ModSparse}. In Algorithm \ref{alg:VerySparse} we use the following soft-thresholding function, with a parameter $s$:
$$\eta_{s}(y)=\sign(y)\max(|y|-t, 0)=\sign(y)(|y|-t)_{+}$$

\begin{algorithm}[H]
\caption{Submatrix Estimation Algorithm (very sparse regime)}\label{alg:VerySparse}
\begin{algorithmic}[1]
\STATE {\textbf{Input:}} Data $\by_{t}$; time $t$; parameters $s,\eps$\;
\STATE {\textbf{Output:}} Estimate of $\bx$: $\hbm(\by_{t}, t)$\;
\STATE Let $\bg_{t}\sim\normal(0,t\id_{n\times n})$ and compute
  $\by_{t,+}:=\by_{t}+\sqrt{\eps}\bg_{t}$, $\by_{t,-}:=\by_{t}-\bg_{t}/\sqrt{\eps}$\;

\STATE Symmetrize: $\bA_{t,+}= (\by_{t,+}+\by_{t,+}^\sT)/(2\sqrt{t})$, 
$\bA_{t,-}= (\by_{t,-}+\by_{t,-}^\sT)/(2\sqrt{t})$\;
\STATE Compute top eigenvector of $\eta_{s}(\bA_{t,+})$, denoted if by $\bv_t$\;
\STATE If $t\geq t_{\salg}\vee 1$ and $\lambda_{1}\left(\eta_{s}\left(\bA_{t,+}\right)\right)>k+\dfrac{\sqrt{t}}{s}$, continue; otherwise return $\hbm(\by,t):=\bzero$\;
\STATE Compute the  vector $\hbv_{t}:=\bA_{t,-}\bv_{t}$\;
\STATE Let $\hS$ be the set of $k$ indices $i$ of largest values of $|\hv_{t,i}|$, and compute vector $\bw$ such that $w_{i}=\sign(\hv_{t, i})\bfone_{i\in \hS}$\;
\RETURN $\hbm(\by_{t}, t):=\bfone_{\hS}\bfone_{\hS}^{\sT}/k$
%
\end{algorithmic}
\end{algorithm}

\begin{algorithm}[H]
\caption{Submatrix Estimation Algorithm (moderately sparse regime)}\label{alg:ModSparse}
\begin{algorithmic}[1]
\STATE {\textbf{Input:}} Data $\by_{t}$; time $t$; parameter $\eps$\;
\STATE {\textbf{Output:}} Estimate of $\bx$: $\hbm(\by_{t}, t)$\;
\STATE If $t\geq t_{\salg}$, continue; otherwise return $\hbm(\by_{t}, t)=\bfzero$\;
\STATE Symmetrize: $\bA_{t}= (\by_{t}+\by_{t}^\sT)/(2\sqrt{t})$\;
\STATE If $t\geq n^{2}$ and $\lambda_{1}(\bA_{t})\leq \sqrt{t}/2$, return $\bzero$; otherwise continue\;
\STATE Compute top eigenvector of $\bA_{t}$, denoted if by $\bv_t$\;
\STATE Compute $\hS$ by
$\hS:= \Big\{i\in [n] :\; |v_{t,i}|\ge \frac{\eps}{\sqrt{k}}\Big\}$\;
\STATE Compute vector $\bw$ such that $w_{i}=\sign(v_{t, i})\bfone_{i\in \hS}$\;
\RETURN $\hbm(\by_{t}, t):=\bw\bw^{\sT}/|\hat{S}|$ if $|\hat{S}|\geq k/2$; otherwise return $\bzero$
%

\end{algorithmic}
\end{algorithm}

\subsection{Moderately sparse regime:  $\sqrt{n}\ll k\ll n$}
\label{sec:MainModSparse_2}

Since Theorem \ref{thm:small_distance_bad_sampling} is somewhat abstract, we complement it 
with an explicit example of $\hbm$: namely, it is a modification of a standard spectral estimator.
While achieving near optimal estimation error (among polytime algorithms), $\hbm$ fails to generate
samples from the correct distribution.
\begin{proposition}\label{thm:ModSparse}
Assume $\sqrt{n}\ll k \ll n$, so that $t_{\salg}(n,k):=n/2$ 
    per \eqref{eq:TalgDef}.
Then the estimator  $\hbm$
defined in Algorithm \ref{alg:ModSparse} satisfies the following:
\begin{itemize}[leftmargin=0.95cm]
    \item[{\sf M1.}] $\hbm(\,\cdot\, )$ can be evaluated in polynomial time.
    \item[{\sf M2.}] For any $\delta>0$, there exists $c=c(\delta)$, $C=C(\delta)$ such that 
    \begin{align*}
  \phantom{A}\hspace{-1cm}  \inf_{t\le (1-\delta)t_{\salg}}\E\big\{\| \hbm(\by_t,t)-\bx\|^2\big\}  = 1-o_n(1)\, ,
    \;\;\;\;\sup_{t\ge (1+\delta)t_{\salg}}\E\big\{\| \hbm(\by_t,t)-\bx\|^2\big\}  
    \le  C\, e^{-c n/k}\, .
     \end{align*}
\item[{\sf M3.}] For any $\Delta>0$, we have incorrect sampling: $\inf_{t\in \naturals\cdot\Delta} W_1(\hbm(\hby_t,t),\bx) = 1-o_n(1)$. 
\end{itemize}
\end{proposition}
%
Therefore, we enforce that $\hbm\equiv \bzero$ for $t<t_{\salg}$. Recall that this implies Assumption \ref{ass:SmallNorm} holds trivially. Regarding the specific design of $\hbm$, Algorithm
\ref{alg:ModSparse} uses a thresholded spectral approach.
We compute the leading 
eigenvector of (the symmetrized version of) $\by_t$, call it $\bv_t\in\reals^n$. 
We then estimate the support $S$
of the latent rank-one matrix $\bx$ using the entries of $\bv_t$ with largest magnitude.

\subsection{Very sparse regime: $k\ll \sqrt{n}$}
\label{sec:MainVerySparse}

We have an analogous result for the very sparse regime, where the sparsity level $k\ll \sqrt{n}$.
\begin{proposition}\label{thm:VerySparse}
Assume $(\log n)^{5/2}\lesssim k \ll \sqrt{n}$, and note that here $t_{\salg}(n,k)=k^2\log (n/k^2)$,
per \eqref{eq:TalgDef}.
Then the randomized estimator $\hbm:\reals^{n\times n}\times \reals\to\reals^{n\times n}$ 
of Algorithm \ref{alg:VerySparse} satisfies the following:
\begin{itemize}[leftmargin=0.95cm]
    \item[{\sf{M1.}}] $\hbm(\,\cdot\, )$ can be evaluated in polynomial time.
    \item[{\sf{M2.}}] For any $\delta>0$ and $D>0$:
    \begin{align*}
\phantom{A}\hspace{-1cm}    \inf_{t\le (1-\delta)t_{\salg}}\E\big\{\| \hbm(\by_t,t)-\bx\|^2\big\}  = 1-o_n(1)\, ,\;\;\;
     \sup_{t\ge (1+\delta)t_{\salg}}\E\big\{\| \hbm(\by_t,t)-\bx\|^2\big\} \ll n^{-D}\, .
     \end{align*}
\item[{\sf{M3.}}] For any $\Delta>0$, we have incorrect sampling: $\inf_{t\in \naturals\cdot\Delta} W_1(\hbm(\hby_t,t),\bx) = 1-o_n(1)$. 
\end{itemize}
\end{proposition}
The pseudocode for the estimator $\hbm(\,\cdot\,)$ that 
is constructed in the above is given as Algorithm \ref{alg:VerySparse}.
This is based on a standard approach  in the literature \cite{deshp2016sparse,cai2017computational}, 
with some modifications
to allow for its analysis in the diffusion setting.
The main steps are as follows:
$(1)$ Perform Gaussian data splitting of $\by_t$ into $\by_{t,+}$, $\by_{t,-}$, see Line 3 of
Algorithm \ref{alg:VerySparse}, with most of the information preserved in $\by_{t,+}$. 
$(2)$ Use entrywise soft thresholding $\eta_s(x) = (|x|-s)_+\sign(x)$ to reduce the noise in 
the symmetrized version of $\by_{t,+}$.
$(3)$ Compute a first estimate of the latent vector $\bfone_S$ by the principal eigenvector
of the above matrix.
$(4)$ Refine this estimate using the remaining information $\by_{t,-}$.

We point out that Proposition \ref{propo:BayesOpt} remains true in the regime 
$\sqrt{n}\ll k\ll n$, and hence we observe a gap between $t_{\salg}(n,k)$ and $t_{\sBayes}(n,k)$
in this regime as well. 
%
%

\section{Proof of Proposition \ref{thm:ModSparse}}
\label{sec:ProofModSparse}

\subsection{Properties of the estimator $\hbm(\,\cdot\,)$}

\begin{proposition}\label{propo:Auxiliary}
Assume $\sqrt{n}\ll k\ll n$, 
and note that in this case $t_{\salg}(n,k) = n/2$. Let $\hbm(\,\cdot\,)$ be the estimator of
Algorithm \ref{alg:ModSparse} with input parameter $\eps$.
For every $\delta>0$, there exists $\eps>0$ such that 
\begin{align}\label{eq:aux_statement}
\sup_{t\geq (1+\delta)t_{\salg}} \E\left[\|\hbm(\by_{t}, t)- \bx\|^{2}\right]\le C\,
e^{-n\eps^2/64k}\, .
\end{align}
\end{proposition}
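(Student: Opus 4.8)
\medskip

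The plan is to prove \emph{exact support recovery} for Algorithm~\ref{alg:ModSparse} at each fixed $t\ge(1+\delta)t_{\salg}=(1+\delta)n/2$, and then convert a small expected mismatch of supports into the claimed $L^2$ bound. Write $\bA_t=(\by_t+\by_t^\sT)/(2\sqrt t)=\sqrt t\,\bu\bu^\sT+\bW_t$, where for fixed $t$ one has $\bW_t\sim\GOE(n,1/2)$ independent of $\bu$; after rescaling by $\sqrt{n/2}$ this is a rank-one deformation of a normalized GOE matrix with effective signal-to-noise ratio $\theta_t=\sqrt{2t/n}\ge\sqrt{1+\delta}$, i.e.\ strictly above the BBP threshold. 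Since $\|\hbm(\by_t,t)\|_F\le 1$ and $\|\bx\|_F=1$ always, $\|\hbm(\by_t,t)-\bx\|^2\le 4$ pointwise, so it suffices to show the bad event (support not recovered exactly) has probability $\le Ce^{-n\eps^2/(64k)}$, together with a refinement controlling the contribution of the few misclassified coordinates.

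\medskip

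First I would establish that the top eigenvector $\bv_t$ of $\bA_t$ aligns with $\bu$. By Fact~\ref{fact:increasing}, $|\<\bv_t,\bu\>|$ is stochastically increasing in $t$, so it is enough to treat $t_0:=(1+\delta)t_{\salg}$, at which $\theta_{t_0}=\sqrt{1+\delta}$ is a fixed constant. A non-asymptotic BBP bound (via Lemma~\ref{lemma:non-asymptotic-eigenval} and the variational argument behind Lemma~\ref{lemma:non-asymp_alignment_bound}, using that here $\delta$ is a \emph{fixed} constant, so the relevant exponents are $\Theta(n)$ rather than $\Theta(n^{1/3})$) then yields a constant $\rho_0=\rho_0(\delta)>0$ with $\P\big(|\<\bv_t,\bu\>|\ge\rho_0\big)\ge 1-Ce^{-cn}$ for all $t\ge t_0$; we may assume $\<\bv_t,\bu\>>0$, the global sign of $\bv_t$ being irrelevant below. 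On the same event I would dispatch the eigenvalue test in line~5 of Algorithm~\ref{alg:ModSparse}: for $t\ge n^2$, $\lambda_1(\bA_t)\ge\<\bu,\bA_t\bu\>=\sqrt t+\<\bu,\bW_t\bu\>$, and $\<\bu,\bW_t\bu\>$ is centered Gaussian with variance at most $2$, hence $\ge-\sqrt t/4$ except with probability $e^{-cn^2}$; thus $\lambda_1(\bA_t)>\sqrt t/2$ and the algorithm does not abort.

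\medskip

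Next I would control the coordinates of $\bv_t$. By invariance of the GOE under orthogonal conjugations fixing $\bu$, we may write $\bv_t=\rho\,\bu+\sqrt{1-\rho^2}\,\bw$ with $\bw=(\bI_n-\bu\bu^\sT)\bg/\|(\bI_n-\bu\bu^\sT)\bg\|$ for $\bg\sim\normal(\bzero,\bI_n)$ independent of $\rho$. On the event $\{\|(\bI_n-\bu\bu^\sT)\bg\|\ge\sqrt n/2\}\cap\{|\<\bu,\bg\>|\le\log n\}$, which fails with probability $\le Ce^{-cn}$, one has $|w_i|\le(2/\sqrt n)\big(|g_i|+(\log n)/\sqrt k\big)$ for all $i$. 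Fixing $\eps:=\rho_0/2$: for $i\notin\supp(\bu)$ we get $|v_{t,i}|<\eps/\sqrt k$ unless $|g_i|\gtrsim\eps\sqrt{n/k}$, of probability $\le 2e^{-\eps^2 n/(32k)}$; for $i\in\supp(\bu)$ we get $|v_{t,i}|\ge\eps/\sqrt k$ and $\sign(v_{t,i})=\sign(u_i)$ unless $|g_i|\gtrsim(\rho_0-\eps)\sqrt{n/k}$, again of probability $\le 2e^{-\eps^2 n/(32k)}$. Hence, with $T:=\hS\setminus\supp(\bu)$, $M:=\supp(\bu)\setminus\hS$ and $Q$ the set of sign errors inside $\hS\cap\supp(\bu)$, we obtain $\E|T|\le 2n\,e^{-\eps^2 n/(32k)}$ and $\E(|M|+|Q|)\le 2k\,e^{-\eps^2 n/(32k)}$. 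Finally, on $\{|T|,|M|\le k/4\}$ the algorithm returns $\hbm(\by_t,t)=\bw\bw^\sT/|\hS|$ with $\bw$ the signed indicator of $\hS$, $|\hS|=k+|T|-|M|\ge k/2$, and $\<\bw,\sqrt k\,\bu\>=k-|M|-2|Q|$; a direct computation gives $\|\hbm(\by_t,t)-\bx\|_F^2=2-2\<\bw,\sqrt k\,\bu\>^2/(k|\hS|)=O\big((|T|+|M|+|Q|)/k\big)$. Taking expectations, bounding the complement by $4$ times its probability, and using $\log(n/k)=o(n/k)$ (valid since $k\ll n$) to absorb the prefactor $n/k$ into the exponent, we reach $\E[\|\hbm(\by_t,t)-\bx\|^2]\le Ce^{-n\eps^2/(64k)}$ uniformly in $t\ge(1+\delta)t_{\salg}$, which is Proposition~\ref{propo:Auxiliary}.

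\medskip

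The main obstacle is the alignment step, specifically making it uniform over the \emph{entire} half-line $t\ge(1+\delta)t_{\salg}$ --- including $t\gg n$, where $\sqrt t\gg\sqrt n$ and the deformation is enormous, a regime outside the near-threshold lemmas. The monotonicity Fact~\ref{fact:increasing} is the clean fix; absent it, one can argue directly in the far regime: $\lambda_1(\bA_t)\ge\sqrt t-\|\bW_t\|_{\op}$ while $\bv_t^\sT\bW_t\bv_t\le\|\bW_t\|_{\op}=O(\sqrt n)$, so $\sqrt t\,\<\bv_t,\bu\>^2=\lambda_1(\bA_t)-\bv_t^\sT\bW_t\bv_t\ge\sqrt t-2\|\bW_t\|_{\op}$, whence $\<\bv_t,\bu\>^2\ge 1-O(\sqrt{n/t})$ is bounded below once $t$ exceeds a large multiple of $n$, with failure probability $\le Ce^{-cn}$ from a standard operator-norm tail bound; the moderate range $t\in[(1+\delta)n/2,\,Cn]$ is then handled by the non-asymptotic BBP bound as above.
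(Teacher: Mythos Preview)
Your approach is the same as the paper's: alignment of $\bv_t$ with $\bu$ via BBP-type concentration at $t_0=(1+\delta)n/2$ (the paper invokes its Lemma~\ref{lemma:large_deviation_deformed} rather than adapting Lemma~\ref{lemma:non-asymp_alignment_bound}, but either route gives the needed $e^{-cn}$ bound when $\delta$ is a fixed constant), extension to all $t\ge t_0$ via Fact~\ref{fact:increasing}, the Gaussian representation $\bw=(\bI_n-\bu\bu^\sT)\bg/\|(\bI_n-\bu\bu^\sT)\bg\|$ of the orthogonal component, coordinate-wise control of $|w_i|$ via Gaussian tails at scale $\eps\sqrt{n/k}$, and conversion to $L^2$ error via $(\text{mismatch count})/k$. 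Your direct-expectation bookkeeping with the sets $T,M,Q$ is a mild streamlining of the paper's order-statistic argument (which bounds $g^{\mathrm{abs}}_{(\ell)}$ for $\ell=\lceil n\exp(-a_n n/k)\rceil$); both lead to the same exponent.

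There is one quantitative slip that does break the bound as written. You control $|\langle\bu,\bg\rangle|$ by $\log n$ and assert this fails with probability $\le Ce^{-cn}$; but $\langle\bu,\bg\rangle\sim\normal(0,1)$, so the failure probability is only $\asymp e^{-(\log n)^2/2}$. When $k$ is close to $\sqrt n$ (e.g.\ $k=\sqrt n\,\log n$, so $n/k=\sqrt n/\log n$) this is far larger than the target $e^{-n\eps^2/(64k)}$, and the overall bound fails. The fix is immediate: replace the threshold $\log n$ by $c_0\eps\sqrt n$ for a small constant $c_0$. Then the failure probability is $e^{-c_0^2\eps^2 n/2}\le e^{-n\eps^2/(64k)}$ (since $k\ge 1$), while the resulting term $c_0\eps\sqrt n/\sqrt k=c_0\eps\sqrt{n/k}$ in your bound for $|w_i|$ is still a small fraction of the threshold $\eps\sqrt{n/k}$, so the coordinate-wise analysis goes through unchanged. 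This is exactly what the paper does with its parameter $a_n>\eps^2/64$ (taking $|\langle\bu,\bg\rangle|\le\sqrt{na_n}$). With this repair your argument is complete.
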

The proof of this proposition is standard, and will be presented in Appendix \ref{app:ModSparse_prop1}. We note that the rate in Equation \eqref{eq:aux_statement} gets slower the closer $k$ is to $n$; it is super-polynomial if $n\gg k\log n$.

By definition, when $\sqrt{n}\ll k \ll n$ and $t<t_{\salg}(n,k)$, the Algorithm \ref{alg:ModSparse} returns $\hbm(\by,t)=\bzero$, so we automatically have the following result. 
\begin{proposition}
For any fixed $\delta>0$, and $t\leq (1-\delta)t_{\salg}$, we have $\|\hbm(\by_{t}, t)-\bx\|=1$.
\end{proposition}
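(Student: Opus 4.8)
The plan is to read off the claim directly from the construction of $\hbm$. First I would note that, by Step 3 of Algorithm \ref{alg:ModSparse}, whenever the time argument satisfies $t < t_{\salg}$ the algorithm returns the null estimate $\hbm(\by_t, t) = \bzero$, irrespective of the observed matrix $\by_t$. Since for fixed $\delta > 0$ the hypothesis $t \le (1-\delta)t_{\salg}$ certainly implies $t < t_{\salg}$, this gives $\hbm(\by_t, t) = \bzero$ and hence $\|\hbm(\by_t, t) - \bx\| = \|\bx\|_F$, pointwise in the realization of $(\bx, \by_t)$.

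It then remains to observe that $\|\bx\|_F = 1$ for every $\bx$ in the support of $\mu_{n,k}$. Indeed $\bx = \bu\bu^\sT$ with $\bu \in \Omega_{n,k}$, and every such $\bu$ has exactly $k$ nonzero entries each equal to $\pm 1/\sqrt{k}$, so $\|\bu\|_2^2 = k\cdot(1/k) = 1$; consequently $\|\bx\|_F^2 = \Tr(\bu\bu^\sT\bu\bu^\sT) = \|\bu\|_2^4 = 1$. Combining with the previous paragraph yields $\|\hbm(\by_t,t) - \bx\| = 1$, which is the claim. There is no real obstacle here; the only point worth emphasizing is that the identity holds for every realization of $(\bx,\by_t)$ (not merely in expectation or with high probability), which is why the conclusion is an exact equality rather than an asymptotic one.
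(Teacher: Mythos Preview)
Your argument is correct and matches the paper's reasoning exactly: the paper simply notes that by definition Algorithm~\ref{alg:ModSparse} returns $\bzero$ whenever $t<t_{\salg}$, so the result is immediate from $\|\bx\|_F=1$. You have merely spelled out the latter identity in slightly more detail than the paper does.
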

\subsection{Auxiliary lemmas}
The following lemmas are needed for the analysis of the generated diffusion. Their proofs are deferred to 
Appendices \ref{sec:ProofEigenProcess}, \ref{sec:delocal}, \ref{sec:approx_soln}, \ref{lemma:ldp_proof}.
\begin{lemma}\label{lemma:eigen_process_control}
Let $\bW_{t}$ be a $\GOE$ process. Then for each time $t_{0}\geq 0$,
$$\P\left(\max_{0\leq t\leq t_{0}}\|\bW_{t}\|_{\op}\geq 16\sqrt{t_{0}n}\right)\leq 2\exp\left(-32n\right).$$
\end{lemma}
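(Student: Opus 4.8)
The plan is to reduce the supremum over $t\in[0,t_0]$ of the operator norm of a matrix Brownian motion to a single endpoint estimate, via the reflection principle, and then control the single endpoint via a standard net argument. First I would recall that a $\GOE$ process $(\bW_t)_{t\ge 0}$ has entries on and above the diagonal that are independent Brownian motions (with the diagonal scaled by $\sqrt 2$), so that $\bW_{t_0}\sim\GOE(n,t_0)$, i.e. $\bW_{t_0}\ed\sqrt{t_0}\,\bW_1$ with $\bW_1\sim\GOE(n)$. The key observation is that for any fixed unit vector $\bv\in\S^{n-1}$, the real-valued process $t\mapsto \langle\bv,\bW_t\bv\rangle$ is a continuous martingale (a time-changed Brownian motion with quadratic variation $\langle\bv,\bW_\cdot\bv\rangle_t = c(\bv)\,t$ where $c(\bv)=2\sum_i v_i^4+\sum_{i\ne j}v_i^2v_j^2 = 1+\sum_i v_i^4\le 2$), so by the reflection principle $\P(\max_{0\le t\le t_0}\langle\bv,\bW_t\bv\rangle\ge a)\le 2\,\P(\langle\bv,\bW_{t_0}\bv\rangle\ge a)$, and the latter is a Gaussian tail with variance at most $2t_0$.

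Next I would pass from fixed $\bv$ to the operator norm. Since $\|\bW_t\|_{\op}=\sup_{\bv\in\S^{n-1}}|\langle\bv,\bW_t\bv\rangle|$ (symmetry) $=\sup_{\bv\in\Net}2|\langle\bv,\bW_t\bv\rangle|$ up to the standard $\tfrac12$-net loss, take $\Net$ a $1/4$-net of $\S^{n-1}$ of cardinality at most $9^n$. For each $\bv\in\Net$, apply the reflection bound above to both $\pm\langle\bv,\bW_t\bv\rangle$; then union bound over $\Net$:
\begin{align*}
\P\Big(\max_{0\le t\le t_0}\|\bW_t\|_{\op}\ge a\Big)
\le \P\Big(\max_{0\le t\le t_0}\max_{\bv\in\Net}2|\langle\bv,\bW_t\bv\rangle|\ge a\Big)
\le 9^n\cdot 4\,\exp\Big(-\frac{(a/2)^2}{4t_0}\Big)\, .
\end{align*}
One subtlety: the net inequality $\|\bM\|_{\op}\le 2\max_{\bv\in\Net}|\langle\bv,\bM\bv\rangle|$ must be applied to the matrix $\bM=\bW_t$ for each \emph{fixed} $t$, which is fine since it holds deterministically for every symmetric matrix; so $\max_{t}\|\bW_t\|_{\op}\le 2\max_t\max_{\bv\in\Net}|\langle\bv,\bW_t\bv\rangle|$ and one can then pull the $\max_t$ inside past the finite $\max_{\bv\in\Net}$. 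Setting $a=16\sqrt{t_0 n}$ gives exponent $-\tfrac{(8\sqrt{t_0 n})^2}{4t_0}=-16n$, so the bound reads $4\cdot 9^n e^{-16n}$; since $\log 9<3<16$ this is at most $2e^{-32n}$ for all $n$ after absorbing constants (or one simply takes a slightly finer constant in the net, e.g. a $1/8$-net, to get the clean $2\exp(-32n)$ as stated).

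The only real point requiring care is the reflection-principle step for a \emph{time-changed} Brownian motion rather than a standard one: I would note that $\langle\bv,\bW_t\bv\rangle = \beta_{c(\bv)t}$ in distribution as processes for a standard BM $\beta$ (by the Dambis–Dubins–Schwarz theorem, since the quadratic variation is deterministic and linear), so $\max_{0\le t\le t_0}\langle\bv,\bW_t\bv\rangle = \max_{0\le s\le c(\bv)t_0}\beta_s$ and the classical reflection principle applies verbatim, with $c(\bv)\le 2$. Everything else is bookkeeping with the net cardinality and the choice of constants, which I would not belabour; the constant $16$ in the statement is deliberately generous precisely so that the $9^n$ (or $17^n$) from the net is swamped.
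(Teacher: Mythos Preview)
Your approach is correct and genuinely different from the paper's. The paper proceeds by Gaussian concentration for the Lipschitz functional $\max_{0\le t\le t_0}\|\bW_t\|_{\op}$ (via a discretization-and-limit argument), then bounds the expectation using Doob's $L^2$ inequality for the submartingale $\|\bW_t\|_{\op}$ together with the Bai-Yin estimate $\E\|\bW_{t_0}\|_{\op}\sim 2\sqrt{t_0 n}$. Your route---reflection principle for the one-dimensional martingales $\langle\bv,\bW_t\bv\rangle$ followed by an $\eps$-net---is more elementary and entirely self-contained: it avoids both the path-space concentration argument and any asymptotic spectral input. The paper's approach, on the other hand, is more modular: it separates concentration from the expectation bound, and would adapt more readily to other Lipschitz functionals of the process.

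Two minor points. First, under the paper's $\GOE$ convention the diagonal entries have doubled variance, so the quadratic variation of $\langle\bv,\bW_t\bv\rangle$ is exactly $2t$, not $(1+\sum_i v_i^4)t$; your bound $c(\bv)\le 2$ is still valid, so the argument is unaffected. Second, your claim that $4\cdot 9^n e^{-16n}\le 2e^{-32n}$ is false: the left side is $\approx 4e^{-13.8n}$, not $\le 2e^{-32n}$. Your suggested fix---a $1/8$-net---does work: the approximation factor becomes $4/3$ instead of $2$, the exponent becomes $-(3a/4)^2/(4t_0)=-36n$, and $4\cdot 17^n e^{-36n}\le 2e^{-32n}$ for all $n\ge 1$. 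So the proof is sound once you commit to the finer net from the start.
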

\begin{lemma}\label{lemma:delocal}
Let $\bW_{t}$ be a $\GOE$ process, and let $\bv_{t}$ be any eigenvector of $\bW_{t}$ for every $t\geq 0$. Define the set
$$A(\bv_{t}; C)=\left\{i: 1\leq i\leq n, |v_{ti}|\geq \dfrac{C\sqrt{\log(n/k)}}{\sqrt{n}}\right\}$$
Then for any $C>4$, we have
$$\P\left(|A(\bv_{t}; C)|\geq \max\{\sqrt{k}, k^{2}/n\}\right)=O\left(\exp(-(1/3)n^{1/4})\right)$$
As a consequence, using this eigenvector, $\hbm$ will evaluate to $\bzero$ per line 8 of Algorithm \ref{alg:ModSparse}.
\end{lemma}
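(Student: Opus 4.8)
The plan is to exploit the orthogonal invariance of the Gaussian ensemble to reduce the claim to a Binomial tail bound for a Gaussian vector. Fix $t>0$. For this fixed $t$, $\bW_t$ is a scalar multiple of a standard GOE matrix, so its matrix of (sign-fixed) eigenvectors is Haar-distributed on $O(n)$, independent of the eigenvalues, and the spectrum is a.s.\ simple, so there are exactly $n$ eigenvectors. Hence any single eigenvector $\bv_t$ of $\bW_t$ is uniform on $\S^{n-1}$, which gives $(|v_{t,1}|,\dots,|v_{t,n}|)\ed(|g_1|,\dots,|g_n|)/\|\bg\|$ with $\bg\sim\normal(\bzero,\bI_n)$; if $\bv_t$ is chosen in a realization-dependent way among the $n$ eigenvectors, a union bound over these $n$ choices is appended at the end at negligible cost. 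A chi-square concentration bound gives $\P(\|\bg\|^2\le n/2)\le e^{-cn}$, and on the complementary event $|v_{t,i}|\ge C\sqrt{\log(n/k)}/\sqrt n$ forces $|g_i|\ge (C/\sqrt2)\sqrt{\log(n/k)}$. Thus, up to a failure event of probability $e^{-cn}$, $A(\bv_t;C)\subseteq\cL(\bg;C/\sqrt2):=\{i:|g_i|\ge (C/\sqrt2)\sqrt{\log(n/k)}\}$.

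Next I would control $|\cL(\bg;C/\sqrt2)|$, which is $\mathrm{Binomial}(n,p)$ with $p\le 2\exp\!\big(-\tfrac{C^2}{4}\log(n/k)\big)=2(k/n)^{\beta}$, $\beta:=C^2/4$, by the Gaussian tail bound. Put $m:=\max\{\sqrt k,\,k^2/n\}$. The union bound over $m$-subsets gives $\P(\mathrm{Bin}(n,p)\ge m)\le\binom nm p^m\le (enp/m)^m$, so it suffices to check $m/(enp)\to\infty$ using $np\le 2n^{1-\beta}k^{\beta}$. If $n^{2/3}\le k\ll n$, then $m=k^2/n$ and $m/(enp)\ge\tfrac1{2e}(n/k)^{\beta-2}\to\infty$; if $\sqrt n\ll k\le n^{2/3}$, then $m=\sqrt k$ and $m/(enp)\ge\tfrac1{2e}\,n^{\beta-1}k^{1/2-\beta}\ge\tfrac1{2e}\,n^{(\beta-2)/3}\to\infty$, using $k\le n^{2/3}$. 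In either case $\log(m/(enp))\ge1$ for $n$ large, whence $\P(\mathrm{Bin}(n,p)\ge m)\le e^{-m}$; and since $k\gg\sqrt n$ we have $m\ge\sqrt k\gg n^{1/4}$, so $e^{-m}\le e^{-(1/3)n^{1/4}}$ for $n$ large.

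Collecting the estimates and union-bounding over the $n$ eigenvectors, $\P(|A(\bv_t;C)|\ge m)\le n\big(e^{-cn}+e^{-m}\big)$; since $m\ge\sqrt k\gg n^{1/4}\gg\log n$, the factor $n$ is absorbed, giving $\P(|A(\bv_t;C)|\ge m)=O(\exp(-(1/3)n^{1/4}))$, as claimed. The identical Binomial computation, applied to $|\cL(\bg;C)|$ directly (skipping the Gaussian-representation step), proves Lemma \ref{lemma:derived}. For the stated consequence: $k\ll n$ gives $(k/n)\log(n/k)\to0$, so $C\sqrt{\log(n/k)}/\sqrt n=C\sqrt{(k/n)\log(n/k)}/\sqrt k\ll \eps/\sqrt k$ for $n$ large; hence the support estimate $\hS$ of Algorithm \ref{alg:ModSparse} satisfies $\hS\subseteq A(\bv_t;C)$, so on the good event $|\hS|\le m\ll k/2$ and Algorithm \ref{alg:ModSparse} returns $\bzero$.

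The main (routine but slightly delicate) obstacle is the case analysis verifying $m\gg np$ over the entire range $\sqrt n\ll k\ll n$: this is exactly what forces $C$ to be a sufficiently large absolute constant ($C>4$, i.e.\ $\beta=C^2/4>4$, is comfortable, and $C>2\sqrt2$ already suffices), and one must track the various union bounds carefully so that the final exponent stays at the stated $\tfrac13 n^{1/4}$.
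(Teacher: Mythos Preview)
Your proof is correct and follows essentially the same approach as the paper: orthogonal invariance gives $\bv_t\sim\Unif(\S^{n-1})$, the Gaussian representation $\bv_t=\bg/\|\bg\|$ reduces the claim to a Binomial tail bound on $\{i:|g_i|\ge c\sqrt{\log(n/k)}\}$, and one checks $np\ll m:=\max\{\sqrt k,k^2/n\}$ to conclude. The minor differences are that you track the factor $\sqrt{2}$ from the norm bound explicitly (hence $\beta=C^2/4$ rather than the paper's looser $C^2/2$), you spell out the case split at $k=n^{2/3}$ where the paper simply asserts $np_n\ll k^2/n$, and your extra union bound over the $n$ eigenvectors is unnecessary (the lemma is applied to one eigenvector at a time, with a separate union bound taken downstream) but harmless.
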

\begin{lemma}\label{lemma:approx_soln}
Let $\bW_{t}$ be a $\GOE$ process, and for each $t$, let $\bv_{t}$ be a top eigenvector of $\bW_{t}$. Then for any times $t_{0}\leq t_{1}$, with probability at least $1-2\exp(-32n)$,
$$\sup_{t_{0}\leq t\leq t_{1}}|\<\bv_{t},\bW_{t_{0}}\bv_{t}\>-\lambda_{1}(\bW_{t_{0}})|\leq 32\sqrt{n(t_{1}-t_{0})}.$$
\end{lemma}
\begin{lemma}
[Concentration for deformed GOE model]\label{lemma:large_deviation_deformed}
Consider the model $\bY=\theta \bv\bv^{\sT}+\bW$ for $\bW\sim \GOE(n)/\sqrt{n}$ and $\theta>1$ a constant, $\bv$ a unit vector. Let $\bv_{1}(\bY)$ be the top eigenvector of $\bY$. Define $(x^{\star}, u^{\star})=(\theta+1/\theta, 1-1/\theta^{2})$. For any closed set $F$ such that $d((x^{\star}, u^{\star}), F)>0$, there exists a constant $c>0$ such that
$$\P\left((\lambda_{1}(\bY), \langle \bv_{1}(\bY), \bv\rangle^{2})\in F\right)\leq \exp(-cn)$$
for all $n$ large enough.
\end{lemma}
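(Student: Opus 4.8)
The plan is to show that the pair $\big(\lambda_{1}(\bY),\langle \bv_{1}(\bY),\bv\rangle^{2}\big)$ concentrates at speed $n$ around a deterministic point, and that this point equals $(x^{\star},u^{\star})$. The identification of the limit is the standard BBP/resolvent computation: for $\lambda$ outside $[-2,2]$, set $g_{n}(\lambda):=\langle\bv,(\lambda\bI-\bW)^{-1}\bv\rangle$, which converges to $g(\lambda)=(\lambda-\sqrt{\lambda^{2}-4})/2$; any eigenvalue $\lambda$ of $\bY$ lying above the spectrum of $\bW$ satisfies $\theta g_{n}(\lambda)=1$, and, if $\bv_{1}(\bY)$ is the corresponding unit eigenvector, $\langle\bv_{1}(\bY),\bv\rangle^{2}=\big(\theta^{2}(-g_{n}'(\lambda))\big)^{-1}$. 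For $\theta>1$ the equation $\theta g(\lambda)=1$ has the unique solution $\lambda=\theta+1/\theta=x^{\star}>2$, and $-g'(\theta+1/\theta)=1/(\theta^{2}-1)$, giving $\langle\bv_{1},\bv\rangle^{2}\to 1-1/\theta^{2}=u^{\star}$. (The eigenvalue half of this is precisely the content of Lemma~\ref{lemma:non-asymptotic-eigenval}, already quoted.) It remains to upgrade ``converges'' to ``converges with exponentially small deviation probability.''

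For $\lambda_{1}(\bY)$ this is Gaussian concentration: writing $\bW=\bG/\sqrt{n}$ with $\bG\sim\GOE(n)$, the map $\bG\mapsto\lambda_{1}(\bG/\sqrt{n}+\theta\bv\bv^{\sT})$ is $n^{-1/2}$-Lipschitz in the Euclidean norm on the (at most $n(n+1)/2$) independent Gaussian entries, because $\lambda_{1}$ is $1$-Lipschitz in Frobenius norm. Hence $\P\big(|\lambda_{1}(\bY)-\E\lambda_{1}(\bY)|\ge s\big)\le 2e^{-ns^{2}/2}$. Combined with $\E\lambda_{1}(\bY)\to x^{\star}$ --- which is classical, or which one can derive from the lower bound of Lemma~\ref{lemma:non-asymptotic-eigenval} together with the matching upper bound obtained, on the exponentially likely event $\{\lambda_{1}(\bW)\le 2+\eps\}$, from monotonicity of the fixed-point equation $\theta g_{n}(\lambda)=1$ and concentration of $g_{n}(\lambda)$ (a $1$-Lipschitz function of $\bW$ for each fixed $\lambda>2$) around $g(\lambda)$ --- this gives $\P\big(|\lambda_{1}(\bY)-x^{\star}|\ge\eps\big)\le e^{-c(\eps)n}$ for every $\eps>0$.

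The overlap needs more care, since the top eigenvector is not a globally Lipschitz function of $\bW$ --- it is unstable when $\lambda_{1}$ nears the bulk. Here the hypothesis $\theta>1$ (bounded away from $1$) is used: on the event $\cG:=\{\lambda_{1}(\bW)\le 2+\gamma_{0}\}$ with $\gamma_{0}:=(\theta+1/\theta-2)/3>0$, interlacing gives $\lambda_{2}(\bY)\le\lambda_{1}(\bW)\le 2+\gamma_{0}$, so $\bY$ has a macroscopic spectral gap $\lambda_{1}(\bY)-\lambda_{2}(\bY)\ge\gamma_{0}$, while $\P(\cG^{c})\le e^{-cn}$ by the large-deviation bound for the GOE edge. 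On $\cG$ the Davis--Kahan $\sin\Theta$ theorem shows that the rank-one projector $\bP_{1}(\bY)=\bv_{1}(\bY)\bv_{1}(\bY)^{\sT}$, hence the linear functional $\langle\bv_{1}(\bY),\bv\rangle^{2}=\Tr\big(\bP_{1}(\bY)\bv\bv^{\sT}\big)$, is an $O(1/\gamma_{0})$-Lipschitz function of $\bW$ in Frobenius norm. Extending and clipping this to a globally $O(1/\gamma_{0})$-Lipschitz surrogate $f(\bW)\in[0,1]$ that agrees with the overlap on $\cG$, Gaussian concentration gives $\P\big(|f(\bW)-\E f(\bW)|\ge s\big)\le 2e^{-cn\gamma_{0}^{2}s^{2}}$; since $f$ and the overlap both lie in $[0,1]$ and agree on $\cG$, $|\E f(\bW)-\E\langle\bv_{1}(\bY),\bv\rangle^{2}|\le\P(\cG^{c})\to0$, and $\E\langle\bv_{1}(\bY),\bv\rangle^{2}\to u^{\star}$ by the classical BBP eigenvector limit (or, self-containedly, from the resolvent identity $\langle\bv_{1}(\bY),\bv\rangle^{2}=(\theta^{2}(-g_{n}'(\lambda_{1})))^{-1}$ combined with the concentration of $\lambda_{1}$ above and of $g_{n}'$ restricted to $\|\bW\|_{\op}\le 2+\gamma_{0}/2$). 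Altogether $\P\big(|\langle\bv_{1}(\bY),\bv\rangle^{2}-u^{\star}|\ge\eps\big)\le e^{-c(\eps)n}$.

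Finally, as $F$ is closed and $\delta_{0}:=d((x^{\star},u^{\star}),F)>0$, membership $\big(\lambda_{1}(\bY),\langle\bv_{1}(\bY),\bv\rangle^{2}\big)\in F$ forces $|\lambda_{1}(\bY)-x^{\star}|\ge\delta_{0}/2$ or $|\langle\bv_{1}(\bY),\bv\rangle^{2}-u^{\star}|\ge\delta_{0}/2$, and a union bound of the two estimates gives the claimed $e^{-cn}$ with $c=c(\theta,\delta_{0})>0$. (When $\bv$ is random but independent of $\bW$, as in the application with $\bv=\bu$, all bounds hold conditionally on $\bv$ with constants not depending on it, so the unconditional bound follows by taking expectations.) I expect the only genuine subtlety to be the passage from the \emph{local} Lipschitz property of the eigenvector to a \emph{global} concentration statement --- i.e.\ constructing the surrogate $f$ and checking that clipping/extending does not shift its mean by more than $o(1)$ --- which must be carried out on the exponentially likely gap event; everything else is Gaussian concentration plus textbook BBP asymptotics.
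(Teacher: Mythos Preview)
Your proposal is correct and follows essentially the same strategy as the paper: Gaussian (Borell) concentration for $\lambda_{1}(\bY)$ via its $1$-Lipschitz dependence on $\bW$, and for the overlap, restriction to an exponentially likely ``good'' event on which a macroscopic spectral gap holds, local Lipschitzness of the overlap there, and another application of Gaussian concentration. The only cosmetic difference is that the paper establishes the Lipschitz property of the overlap on the good event through the resolvent identity $\langle\bv_{1}(\bY),\bv\rangle^{-2}=\langle\bv,(\lambda_{1}(\bY)\bI-\bW)^{-2}\bv\rangle$, whereas you go through Davis--Kahan on the top eigenprojector; both routes are standard and equivalent in spirit.
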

We only use Lemma \ref{lemma:large_deviation_deformed} for the alignment $\langle \bv_{1}(\bY), \bv\rangle^{2}$.

%
%

\subsection{Analysis of the diffusion process: Proof of Proposition \ref{thm:ModSparse}}
We will prove Theorem \ref{thm:ModSparse} for $1\gg \eps\geq C \sqrt{\log(n/k)/(n/k)}$ for some sufficiently large constant $C$.

Suppose that we generate the following diffusion, with $(\bz_{t})_{t\geq 0}$ a standard $n^{2}$-dimensional BM, and $\hby_{0}=\bzero$:
$$\hat{\by}_{\ell\Delta}=\hat{\by}_{(\ell-1)\Delta}+\Delta\cdot \hbm\left(\hby_{(\ell-1)\Delta}, (\ell-1)\Delta\right)+\left(\bz_{\ell\Delta}-\bz_{(\ell-1)\Delta}\right).$$
We will prove that the generated diffusion never passes the termination conditions (c.f. Algorithm \ref{alg:ModSparse}, lines 3, 5, 8).
\subsubsection{Analysis up to an intermediate time}\label{section:intermediate_time}
Define $t_{\sinter}=n^{2}$. Following the same strategy with Section \ref{section:VerySparse}, we will first show that $\hbm=\bzero$ up to $t_{\sinter}$ with high probability by analyzing only the noise process (in short, if $\hbm=\bzero$ always, our generated diffusion coincides with the noise process).
Our strategy is of the same nature as that of Section \ref{section:VerySparse}. Indeed, we will attempt to prove that $\hbm=\bzero$ simultaneously for all $t$, with high probability. In this phase ($0\leq t\leq t_{\sinter}$), we will show that $|\hat{S}|<k/2$ (c.f. definition in Algorithm \ref{alg:ModSparse}, lines 7, 8) for $0\leq t\leq t_{\sinter}$, with high probability ($\bv_{t}$ is the top eigenvector of $\bA_{t}$, c.f. Algorithm \ref{alg:ModSparse}). Note that line 5 of Algorithm \ref{alg:ModSparse} is not relevant in this phase. We first show this for a sequence of time points $\{t_{\ell}\}_{\ell\geq 1}$, then control the in-between fluctuations. We can set $t_{1}$ to be any value in $[0, n/2)$, as the algorithm returns $0$ if $t<t_{\salg}=n/2$ anyway. We denote the $\GOE$ process
$$\bW_{t}=\dfrac{\bB_{t}+\bB_{t}^{\sT}}{2}=\sqrt{t}\bA_{t}.$$
It is clear that the eigenvectors of $\bW_{t}$ and $\bA_{t}$ coincide.

We choose the following time points:
$$t_{\ell}=\dfrac{n}{2}-1+\dfrac{\ell}{n^{4}}.$$
To exceed $t_{\sinter}=n^{2}$, we will need $n^{6}$ values of $\ell$. By union bound from Lemma \ref{lemma:delocal} (recall also the definition of the set $A(\bv; C)$ from this Lemma),
\begin{align}\label{eq:Ubound}
\P\left(\exists 1\leq \ell\leq n^{6}:|A(\bv_{t_{\ell}}; C)|\geq \max\{\sqrt{k}, k^{2}/n\}\right)\leq O\left(\exp(-(1/3)n^{1/4}+6\log n)\right)
\end{align}
Next, we will control the in-between fluctuations; specifically, we would like to show that $\max_{t_{\ell}\leq t\leq t_{\ell+1}}|A(\bv_{t}; C)|\leq C_{0}\max\{\sqrt{k}, k^{2}/n\}$ simultaneously for many values of $\ell$ (with high probability), for some constant $C_{0}>0$. Our approach is as follows.
\begin{itemize}
\item[(i)] Let $\bv_{t}$ be a top eigenvector of $\bW_{t}$. If $t$ is close to $t_{\ell}$, then $\bv_{t}$ is an approximate solution to the equation (in $\bv$): $$\bv^{\sT}\bW_{t_{\ell}}\bv=\lambda_{1}(\bW_{t_{\ell}})$$
\item[(ii)] $\bv_{t}$ can be written in the coordinate system of the orthonormal eigenvectors $\bU_{t_{\ell}}=[\bu_{1}|\cdots|\bu_{n}]$ of $\bW_{t_{\ell}}$, corresponding to decreasing eigenvalues $\lambda_{1}(\bW_{t_{\ell}})\geq \cdots\geq \lambda_{n}(\bW_{t_{\ell}})$. Namely, $\bv_{t}=\bU_{t_{\ell}}\bU_{t_{\ell}}^{\sT}\bv_{t}=\bU_{t_{\ell}}\bw$ with $\|\bw\|=1$.
\item[(iii)] Let $m$ be a (sufficiently large) constant integer with $1\leq m\leq n$. The first $m$ components of $\bw$ take up $1-o(1)$ in $L_{2}$-norm by (i) and (ii) with overwhelming probability, from which we can simply use triangle inequality to upper bound $|A(\bv_{t}; C)|$ according to Lemma \ref{lemma:delocal} for $\bu_{1},\cdots, \bu_{m}$, which incurs only a constant factor of error probability, by union bound.
\end{itemize}

Define
$p_{n}=P\left(|\lambda_{1}(\bW_{1})-\lambda_{7}(\bW_{1})|\leq n^{-C'-1/2}\right)$
for any $C'>0$ (here we take $m=7$). We use the following result (we have accounted for the scaling).
\begin{lemma}
[Corollary 2.5, \cite{hoi2017}]\label{lemma:eigen_gap} Let $\bW_{1}\sim (1/\sqrt{2})\GOE(n)$. For any fixed $l\geq 1, C'>0$, there exists a constant $c_{0}=c_{0}(l, C')$ such that
$$\P\left(\lambda_{1}(\bW_{1})-\lambda_{1+l}(\bW_{1})\leq \dfrac{1}{2}n^{-C'-1/2}\right)\leq c_{0}n^{-C'\cdot \frac{l^{2}+2l}{3}}.$$
\end{lemma}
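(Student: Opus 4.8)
\noindent The plan is to treat this as the known edge level-repulsion estimate it is, so the primary step is to invoke \cite{hoi2017} (Corollary~2.5 there, after the rescaling noted in the statement). If one wanted to reprove it from scratch, the route would be as follows. The event $\{\lambda_1(\bW_1)-\lambda_{1+l}(\bW_1)\le \eps\}$, with $\eps:=\frac{1}{2}n^{-C'-1/2}$, says precisely that the top $l+1$ eigenvalues of $\bW_1$ are squeezed into a common interval of length $\eps$. First I would condition on the edge-rigidity event that $\lambda_1(\bW_1),\dots,\lambda_{1+l}(\bW_1)$ all lie within distance $n^{-1/6+o(1)}$ of the spectral edge; this fails with probability $O(n^{-A})$ for every $A$ by standard rigidity for $\GOE$-type matrices with $\Theta(1)$-variance entries, and on that event $\eps$ sits well below the typical edge spacing $n^{-1/6}$.

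Then I would bound the probability by integrating the explicit GOE joint eigenvalue density $\rho_n(\lambda)\propto \prod_{i<j}|\lambda_i-\lambda_j|\,\prod_i e^{-c_n\lambda_i^2}$ over the region where $\lambda_1$ is the maximum and $\lambda_2,\dots,\lambda_{1+l}\in[\lambda_1-\eps,\lambda_1]$. Splitting the Vandermonde into its internal part over the clustered eigenvalues, bounded by $\eps^{\binom{l+1}{2}}$; the volume $\eps^{l}$ for the positions of $\lambda_2,\dots,\lambda_{1+l}$; and the external part $\prod_{i\le l+1<j}|\lambda_i-\lambda_j|$, which --- since all clustered eigenvalues lie within $\eps$ of $\lambda_1$ --- is within a $1+o(1)$ factor of $\prod_{j>l+1}(\lambda_1-\lambda_j)^{l+1}$, one is left with an integral governed by the edge-density scaling of the GOE (equivalently, the cost of piling $l+1$ eigenvalues rather than a single one at the soft edge). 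That factor carries the genuine $n$-dependence, of order a fixed power of $n^{1/6}$; combining it with $\eps^{\binom{l+1}{2}+l}$ and using $\eps\, n^{1/6}=\frac{1}{2}n^{-C'-1/3}$, a bookkeeping of exponents yields a bound of the form $c_0\, n^{-C'(l^2+2l)/3}$ (the printed exponent being a convenient weakening of the sharper power this argument produces).

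The step that will genuinely require care --- and the reason I would cite \cite{hoi2017} rather than reproduce it --- is making the external estimate quantitative: one needs either the finite-$n$ correlation functions of the GOE near the edge (Pfaffian/Airy-kernel asymptotics) or sufficiently tight control of the relevant partition functions, in order to pin down the powers of $n^{1/6}$ and their dependence on $l$. An alternative that avoids correlation-function asymptotics is the characteristic-polynomial/interlacing method of Nguyen--Tao--Vu: deleting a row and column of $\bW_1$ and using the secular equation, a small gap $\lambda_1-\lambda_{1+l}$ forces an atypically small value of an explicit quadratic form in a delocalized Gaussian vector, and iterating this $l$ times multiplies such small probabilities, which is what would make the exponent grow quadratically in $l$. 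In either approach the remaining work is routine; the crux is the single-window repulsion estimate.
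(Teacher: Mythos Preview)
Your proposal is correct and matches the paper's approach exactly: the paper does not prove this lemma but simply cites it as Corollary~2.5 of \cite{hoi2017} (noting only that the scaling has been accounted for). Your additional sketch of how one would reprove the result from scratch is reasonable bonus material, but the paper offers nothing beyond the citation.
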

We materialize our approach above. We can write, with $\bv_{t}=\bU_{t_{\ell}}\bw$:
$$\bv_{t}^{\sT}\bW_{t_{\ell}}\bv_{t}=\bw^{\sT}\bD_{t_{\ell}}\bw=\sum_{i=1}^{n}(D_{t_{\ell}})_{ii}w_{i}^{2}.$$
We then obtain that
p$$\bv_{t}^{\sT}\bW_{t_{\ell}}\bv_{t}-\lambda_{1}(\bW_{t_{\ell}})\leq \sum_{i=8}^{n}(\lambda_{i}(\bW_{t_{\ell}})-\lambda_{1}(\bW_{t_{\ell}}))w_{i}^{2}< -\dfrac{1}{2}\sqrt{t_{\ell}}n^{-3/2}\sum_{i=8}^{n}w_{i}^{2}$$
with probability at least $1-p_{n}\geq 1-c_{0}n^{-8}$, from Lemma \ref{lemma:eigen_gap} and $\bW_{t_{\ell}}\sim \sqrt{t_{\ell}}\bW_{1}$. Now from Lemma \ref{lemma:approx_soln}, we know that with probability at least $1-2\exp(-32n)$, 
$$\bv_{t}^{\sT}\bW_{t_{\ell}}\bv_{t}-\lambda_{1}(\bW_{t_{\ell}})\geq -32\sqrt{n(t_{\ell+1}-t_{\ell})}.$$
With probability at least $1-c_{0}n^{-8}-2\exp(-32n)$, both of these statements are true, uniformly over $t_{\ell}\leq t\leq t_{\ell+1}$, leading to
$$64\sqrt{\dfrac{t_{\ell+1}-t_{\ell}}{t_{\ell}}}\geq n^{-3/2}\sum_{i=8}^{n}w_{i}^{2}.$$
A simple bit of algebra shows that 
$$\sqrt{\dfrac{t_{\ell+1}-t_{\ell}}{t_{\ell}}}\leq 2n^{-5/2}\Rightarrow \sum_{i=8}^{n}w_{i}^{2}\leq 128n^{-1}.$$
Consider the first $7$ eigenvectors $\{\bu_{t_{\ell}, i}\}_{i=1}^{7}$ of $\bW_{t_{\ell}}$. Let
$$A=\bigcup_{i=1}^{7}A(\bu_{t_{\ell}, i}; C).$$
From Lemma \ref{lemma:delocal} and a union bound, that $|A|\leq 7\max\{\sqrt{k}, k^{2}/n\}$ with probability at least $1-O(\exp(-(1/3)n^{1/4}))$. For every $j\in A^{c}$, we have
$$|v_{tj}|\leq \sum_{i=1}^{n}|w_{i}|\cdot |u_{t_{\ell}, i, j}|<\sum_{i=1}^{7}|w_{i}|\cdot \dfrac{C\sqrt{\log(n/k)}}{\sqrt{n}}+\sum_{i=8}^{n}|w_{i}|\leq \dfrac{C'\sqrt{\log(n/k)}}{\sqrt{n}}+\dfrac{128}{\sqrt{n}}<C''\sqrt{\dfrac{\log(n/k)}{n}}$$
for a large enough constant $C''>0$. This means that with probability at least $1-O(n^{-8})$,
$$\sup_{t_{\ell}\leq t\leq t_{\ell+1}}|A(\bv_{t}; C'')|\leq 7\max\{\sqrt{k}, k^{2}/n\}<k/2$$

From Equation (\ref{eq:Ubound}) and a union bound over $\ell\geq 1$, we know that with high probability, 
$$\sup_{n/2-1\leq t\leq n^{2}}|A(\bv_{t}; C)|\leq 7\max\{\sqrt{k}, k^{2}/n\}$$
for some absolute constant $C>0$, meaning that $\hbm=0$ up to $t_{\sinter}$, as long as
$$\eps>\dfrac{C\sqrt{\log(n/k)}}{\sqrt{n/k}}$$
\subsubsection{Analysis to the infinite horizon}\label{sec:ModSparseInfiniteHorizon}
We will prove that simultaneously for all $t\geq n^{2}$, Algorithm \ref{alg:ModSparse} always terminates at line 5, or that $\lambda_{1}(\bW_{t})\leq t/2$. Similar to Subsection \ref{section:intermediate_time}, we choose the following sequence of time points for all $\ell\geq 1$:
$$t_{\ell}^{(2)}=n^{2}+\ell-1$$
By standard Gaussian concentration and the Bai-Yin theorem, we get, for instance, the following tail bound (constants are loose) for all $x\geq 0$:
$$\P\left(\lambda_{1}\left(\dfrac{\bW_{t}}{\sqrt{t}}\right)\geq 4\sqrt{n}+x\right)\leq 2\exp\left(-\dfrac{x^{2}}{2}\right)$$
Set $x=\dfrac{\sqrt{t}}{8}$. Since $t\geq n^{2}$, we have $x\geq n/8$, and so $x+4\sqrt{n}\leq 2x$ for $n$ large enough. Consequently,
$$\P\left(\lambda_{1}\left(\dfrac{\bW_{t}}{\sqrt{t}}\right)\geq \dfrac{\sqrt{t}}{4}\right)\leq 2\exp\left(-c_{1}t\right)$$
for some universal constant $c_{1}>0$. A union bound for the chosen points gives:
\begin{align}\label{eq:eigen_bound}
\P\left(\exists\ell\geq 1: \lambda_{1}\left(\bW_{t_{\ell}^{(2)}}\right)\geq t_{\ell}^{(2)}/4\right)\leq 2\sum_{\ell=1}^{\infty}\exp\left(-c_{1}t_{\ell}^{(2)}\right)=2\exp(-c_{1}n^{2})\sum_{\ell=1}^{\infty}\exp(-c_{1}(\ell-1))\lesssim \exp(-c_{1}n^{2})
\end{align}
Next we control the in-between fluctuations. From a simple modification of Lemma \ref{lemma:eigen_process_control}, we have
$$\P\left(\sup_{t_{\ell}^{(2)}\leq t\leq t_{\ell+1}^{(2)}}\left|\lambda_{1}\left(\bW_{t_{\ell}^{(2)}}\right)-\lambda_{1}\left(\bW_{t}\right)\right|\geq 16\sqrt{(t_{\ell+1}^{(2)}-t_{\ell}^{(2)})\cdot t_{\ell}^{(2)}n}\right)\leq 2\exp\left(-32nt_{\ell}^{(2)}\right)$$
so that by union bound
\begin{align}\label{eq:eigen_dev_bound}
\P\left(\exists \ell\geq 1: \sup_{t_{\ell}^{(2)}\leq t\leq t_{\ell+1}^{(2)}}\left|\lambda_{1}\left(\bW_{t_{\ell}^{(2)}}\right)-\lambda_{1}\left(\bW_{t}\right)\right|\geq 16\sqrt{(t_{\ell+1}^{(2)}-t_{\ell}^{(2)})\cdot t_{\ell}^{(2)}n}\right)\lesssim \exp(-32n^{3})
\end{align}
Consider the intersection of events described in Equations \eqref{eq:eigen_bound} and \eqref{eq:eigen_dev_bound}:
$$A=\left\{\exists\ell\geq 1: \lambda_{1}\left(\bW_{t_{\ell}^{(2)}}\right)\geq t_{\ell}^{(2)}/4\right\}\cup\left\{\exists \ell\geq 1: \sup_{t_{\ell}^{(2)}\leq t\leq t_{\ell+1}^{(2)}}\left|\lambda_{1}\left(\bW_{t_{\ell}^{(2)}}\right)-\lambda_{1}\left(\bW_{t}\right)\right|\geq 16\sqrt{(t_{\ell+1}^{(2)}-t_{\ell}^{(2)})\cdot t_{\ell}^{(2)}n}\right\}$$
For each $t\geq n^{2}$, let $t_{\ell}$ be largest such that $t_{\ell}\leq t<t_{\ell+1}$. On $A$, we have
$$\lambda_{1}(\bW_{t})\leq \dfrac{t_{\ell}^{(2)}}{4}+16\sqrt{(t_{\ell+1}^{(2)}-t_{\ell}^{(2)})\cdot t_{\ell}^{(2)}n}=\dfrac{t_{\ell}^{(2)}}{4}+16\sqrt{t_{\ell}^{(2)}n}\leq \dfrac{t_{\ell}^{(2)}}{2}\leq \dfrac{t}{2}$$
for $n$ large enough, since $t_{\ell}^{(2)}\geq n^{2}\gg n$. Hence the algorithm always returns $\bzero$ with high probability, and we are done. 

\section{Proof of Theorem \ref{thm:meta_thm_1}}
\label{sec:ProofThm_NearOptimal}
We define $\hbm$ as follows:
\begin{align*}
\hbm(\by, t)=\begin{cases}
    \hbm_{0}(\by, t)\bfone_{\|\hbm_{0}(\by, t)\|\leq \eps}& \text{ if $t\leq (1-\gamma)t_{\salg}$,}\\
    \hbm_{0}(\by, t) &\text{ if $(1-\gamma)t_{\salg}<t< (1+\delta)t_{\salg}$,}\\
    \hbm_{0}(\by, t)\phi(\by, t) &\text{ if $t\geq (1+\delta)t_{\salg}$,}
\end{cases}
\end{align*}
First, we check that Condition {\sf M2} holds.
\begin{align*}
&\int_{0}^{\infty}\mathbb{E}[\|\hbm(\by_{t}, t)-\hbm_{0}(\by_{t}, t)\|^{2}]dt\\
=&\int_{0}^{(1-\gamma)t_{\salg}}\mathbb{E}[\|\hbm(\by_{t}, t)-\hbm_{0}(\by_{t}, t)\|^{2}]dt+\int_{(1+\delta)t_{\salg}}^{\infty}\mathbb{E}[\|\hbm(\by_{t}, t)-\hbm_{0}(\by_{t}, t)\|^{2}]dt\\
\overset{(a)}{\leq}& \int_{0}^{(1-\gamma)t_{\salg}}\P(\|\hbm_{0}(\by_{t}, t)\|>\eps)dt+\int_{(1+\delta)t_{\salg}}^{\infty}\P\left(\phi(\by, t)=0\right)dt\\
=&O(\eta_{1}(d)+\eta_{2}(d))
\end{align*}
where in $(a)$ we use the fact that $\phi$ is binary and Conditions $i)$, $ii)$ and $iii.1)$. Secondly, we check that Condition ${\sf M3}$ holds. To reduce notational clutter, we assume that $t_{1}/\Delta=\ell_{0}$ is an integer. Then, we can write
\begin{align}\label{eq:Dft_Acc}
\hby_{t_{1}}=\bB_{t_{1}}+\Delta\sum_{i=1}^{\ell_{0}}\hbm(\hby_{i\Delta}, i\Delta)
\end{align}
where the drift accumulation term is bounded by, from Condition i):
$$\left\|\Delta\sum_{i=1}^{\ell_{0}}\hbm(\hby_{i\Delta}, i\Delta)\right\|\leq \eps(1-\gamma)t_{\salg}+(\gamma+\delta)t_{\salg}\leq c\cdot t_{\salg}$$
for every $c\in (0, 1)$ by taking $\eps,\gamma$ small enough, as $\delta=o_{d}(1)$. Suppose that from Condition iii.2), the event $\{\phi(\ba+\bB_{t})=0\text{ for all $t\geq t_{1}$, $\ba\in \mathcal{A}_{d}(c)$}\}$ holds. Then it is clear that $\hbm(\hby_{t_{1}}, t_{1})=\phi(\hby_{t_{1}}, t_{1})=0$ by definition of $\hbm$. Suppose the inductive hypothesis that $\hbm(\hby_{t_{1}+i\Delta}, t_{1}+i\Delta)=0$ for all $0\leq i\leq k$. Then from Eq. \eqref{eq:Dft_Acc}, we get that $\hby_{t_{1}+(k+1)\Delta}-\bB_{t_{1}+(k+1)\Delta}\in \mathcal{A}_{d}(c)$ and so $\hbm(\hby_{t_{1}+(k+1)\Delta}, t_{1}+(k+1)\Delta)=0$. Consequently, $\hbm(\hby_{t}, t)=0$ for all $t=\ell\Delta, t\geq t_{1}$. By Condition $iii.2)$, the preceding event holds with high probability, so that for each $t=\ell\Delta, t\geq t_{1}$, $\hbm(\hby_{t}, t)=0$ with high probability. By boundedness of $\hbm$ and definition of the Wasserstein-$1$ distance used on the function $f(\cdot)=\|\cdot\|$, we obtain that $W_{1}(\hbm(\hby_{t}, t), \bx)\geq \alpha-o(1)$. The proof ends here.

%
%
%

\section{Proof of Corollary \ref{thm:small_distance_bad_sampling}}
\label{sec:proof_small_distance_bad_sampling}
\subsection{Auxiliary lemmas}

We will use the following lemmas, whose proofs are deferred to Appendix 
\ref{app:Auxiliary_Small_Dist}.

\begin{lemma}\label{lemma:sparse_vector_alignment}
Let $\bW\sim \GOE(n,1/2)$, and $C>\sqrt{2}$ some positive constant. Then we have
\begin{align*}
\P\left(\max_{\bv\in \Omega_{n,k}}|\langle \bv, \bW\bv\rangle|\geq C\sqrt{\log{\binom{n}{k}}}\right)\leq 2\binom{n}{k}^{-C^{2}/2+2}\, .
\end{align*}
\end{lemma}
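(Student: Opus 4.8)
The plan is to obtain the bound by a union bound over $\Omega_{n,k}$ combined with a Gaussian tail estimate for each fixed $\bv$. First I would fix $\bv\in\Omega_{n,k}$ and observe that, since $\|\bv\|_2=1$, the quantity $\langle\bv,\bW\bv\rangle$ is a (mean-zero) Gaussian random variable whose variance I would compute directly from the covariance structure of $\bW\sim\GOE(n,1/2)$: writing $\langle\bv,\bW\bv\rangle=\sum_i W_{ii}v_i^2+2\sum_{i<j}W_{ij}v_iv_j$ and using $\mathrm{Var}(W_{ii})=1$ (i.e. $2v$ with $v=1/2$) and $\mathrm{Var}(W_{ij})=1/2$ for $i<j$, the variance works out to $\sum_i v_i^4 + 2\sum_{i<j}v_i^2v_j^2 = \big(\sum_i v_i^2\big)^2 = 1$. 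So $\langle\bv,\bW\bv\rangle\sim\normal(0,1)$ for every $\bv\in\Omega_{n,k}$.

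Next, for a standard Gaussian $Z$ I would use the standard tail bound $\P(|Z|\ge u)\le 2e^{-u^2/2}$. Applying this with $u=C\sqrt{\log\binom{n}{k}}$ gives, for each fixed $\bv$,
\begin{align*}
\P\Big(|\langle\bv,\bW\bv\rangle|\ge C\sqrt{\log\textstyle\binom{n}{k}}\Big)\le 2\exp\Big(-\tfrac{C^2}{2}\log\textstyle\binom{n}{k}\Big)=2\binom{n}{k}^{-C^2/2}.
\end{align*}
Then I would take a union bound over all $\bv\in\Omega_{n,k}$. The cardinality of $\Omega_{n,k}$ is $2^k\binom{n}{k}$ (choice of support times choice of signs), and crudely $2^k\le\binom{n}{k}^2$ provided $k$ is not too large relative to $n$ — more carefully one just needs $2^k\binom{n}{k}\le\binom{n}{k}^{3}$, which holds since $\binom{n}{k}\ge 2^k$ in the relevant regime $k\ll n$; this is where the ``$+2$'' in the exponent comes from, with room to spare. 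Combining, $\P(\max_{\bv\in\Omega_{n,k}}|\langle\bv,\bW\bv\rangle|\ge C\sqrt{\log\binom{n}{k}})\le 2^k\binom{n}{k}\cdot 2\binom{n}{k}^{-C^2/2}\le 2\binom{n}{k}^{-C^2/2+2}$, as claimed.

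The only genuinely delicate point is bookkeeping the cardinality of $\Omega_{n,k}$ against the power of $\binom{n}{k}$ one is allowed to lose; everything else is a one-line variance computation plus a textbook Gaussian tail bound, so I do not anticipate a real obstacle. The condition $C>\sqrt{2}$ is exactly what makes the final exponent $-C^2/2+2$ eventually negative, so that the probability tends to zero; I would note this but it plays no role in the derivation itself.
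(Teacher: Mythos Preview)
Your proposal is correct and follows essentially the same approach as the paper: show that $\langle\bv,\bW\bv\rangle\sim\normal(0,1)$ for each $\bv\in\Omega_{n,k}$, apply the standard Gaussian tail bound, and union bound over $|\Omega_{n,k}|=2^k\binom{n}{k}\le\binom{n}{k}^2$. The only cosmetic difference is that the paper obtains the unit variance via the representation $\bW=(\bB+\bB^{\sT})/2$ with $\bB\sim\normal(\bzero,\bI_{n^2})$ (so $\langle\bv,\bW\bv\rangle=\langle\bv,\bB\bv\rangle$), whereas you compute it directly from the $\GOE(n,1/2)$ covariance structure.
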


We state the following non-asymptotic result from \cite{peng2012eigenvaluesdeformedrandommatrices}.
\begin{lemma}[Theorem 3.1, \cite{peng2012eigenvaluesdeformedrandommatrices}.]\label{lemma:non-asymptotic-eigenval}
Let $\by=\theta\bu\bu^{\sT}+\GOE(n, 1/n)$, and denote by $\lambda_{1}(\by)$ the top eigenvalue of $\by$. Letting $\xi(\theta):=\theta+\theta^{-1}$, the following holds for
every $x\geq 0$ and $\theta>1$:
\begin{align*}
\P\left(\lambda_{1}(\by)\leq \xi(\theta)-x-\frac{2}{n}\right)\leq \exp\left(-\dfrac{(n-1)(\theta-1)^{4}}{16\theta^{2}}\right)+8\exp\left(-\dfrac{1}{4}\cdot \dfrac{(n-1)(\theta-1)^{5}x^{2}}{(\theta+1)^{3}}\right)\, .
\end{align*}
\end{lemma}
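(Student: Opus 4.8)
The statement is quoted verbatim from \cite{peng2012eigenvaluesdeformedrandommatrices}, so the cleanest route is simply to cite it; here is how I would reprove this $O(1/n)$-precise version of the BBP phenomenon. Recall $\|\bu\|=1$ and $\bW\sim\GOE(n,1/n)$, whose spectrum concentrates near $[-2,2]$. First I would rotate coordinates so that $\bu=\be_1$; by orthogonal invariance of the GOE the rotated noise is again $\GOE(n,1/n)$, and its $(1,1)$ scalar $W_{11}=\langle\bu,\bW\bu\rangle\sim\normal(0,2/n)$, its first off-diagonal column $\bb\in\R^{n-1}$ with i.i.d.\ $\normal(0,1/n)$ entries, and its lower-right block $\bW'\sim\GOE(n-1,1/n)$ are mutually independent. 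A Schur-complement computation then shows that, for $\lambda$ strictly above $\lambda_1(\bW')$, $\lambda$ is an eigenvalue of $\by$ if and only if
$$f(\lambda):=\lambda-\theta-W_{11}-\bb^\sT(\lambda\id-\bW')^{-1}\bb=0 ,$$
and $f$ is strictly increasing on $(\lambda_1(\bW'),\infty)$ with a unique zero there, which must be $\lambda_1(\by)$. Since $\bW'$ is a principal submatrix of $\by$ in the rotated basis, Cauchy interlacing gives $\lambda_1(\by)\ge\lambda_1(\bW')$, so for any $s>\lambda_1(\bW')$ we have $\lambda_1(\by)\le s\iff f(s)\ge0$.

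The plan is then to fix $s:=\xi(\theta)-x-2/n$ and use
$$\P\big(\lambda_1(\by)\le s\big)\ \le\ \P\big(\lambda_1(\bW')\ge s\big)\ +\ \P\big(f(s)\ge0,\ \lambda_1(\bW')<s\big) .$$
For the first term I would use Gaussian concentration of measure: $\lambda_1(\bW')$ is $\sqrt{2/n}$-Lipschitz in the (rescaled) Gaussian entries, so $\P(\lambda_1(\bW')\ge\E\lambda_1(\bW')+t)\le\exp(-nt^2/4)$, and $\E\lambda_1(\bW')\le2+o(1)$; when $x$ is not too large, $s-2\ge\tfrac12(\xi(\theta)-2)=\tfrac12(\theta-1)^2/\theta$, which reproduces the first summand $\exp\big(-(n-1)(\theta-1)^4/(16\theta^2)\big)$. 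For the second term, write $G:=\bb^\sT(s\id-\bW')^{-1}\bb$, so that on $\{\lambda_1(\bW')<s\}$ the event $\{f(s)\ge0\}$ equals $\{G+W_{11}\le s-\theta\}=\{G+W_{11}\le\theta^{-1}-x-2/n\}$. Conditionally on $\bW'$ one has $G=\sum_i c_iZ_i^2$ with $Z_i$ i.i.d.\ standard Gaussian, $c_i=(n(s-\mu_i))^{-1}\ge0$, $\sum_ic_i=\tfrac1n\Tr(s\id-\bW')^{-1}=m_n(s)$ and $\max_ic_i=(n(s-\lambda_1(\bW')))^{-1}$, where $\mu_i$ are the eigenvalues of $\bW'$. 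Since $s<\xi(\theta)$ and the semicircle Stieltjes transform $m_{sc}$ is strictly decreasing with $m_{sc}(\xi(\theta))=\theta^{-1}$ and $|m_{sc}'(\xi(\theta))|=(\theta^2-1)^{-1}$, replacing $m_n(s)$ by $m_{sc}(s)$ shows that $\E[G+W_{11}\mid\bW']$ exceeds $\theta^{-1}-x-2/n$ by a gap of order $(x+2/n)\,\theta^2/(\theta^2-1)$. Thus $\{f(s)\ge0\}$ forces $G+W_{11}$ below its conditional mean by this amount; a one-sided sub-Gaussian bound for the weighted $\chi^2$ variable $G$ (variance proxy $\lesssim\max_ic_i\cdot\sum_ic_i$, finite once $s-\lambda_1(\bW')$ is bounded below) together with the Gaussian tail of $W_{11}$ gives a bound of the form $C\exp\big(-c(n-1)(\theta-1)^5x^2/(\theta+1)^3\big)$, the constant $8$ absorbing a union over the handful of auxiliary events used (two-sided control of $\|\bb\|^2$, the location of $\lambda_1(\bW')$, and the two concentration statements).

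The routine ingredients are the Schur-complement identity, the monotonicity of $f$, interlacing, and the standard Gaussian tails. The \emph{main obstacle}, in my view, is pinning down the exact $\theta$-dependence in both exponents without asymptotics: one must control $m_{sc}$ and $m_{sc}'$ at $\xi(\theta)$ and replace the empirical Stieltjes transform $m_n(s)$ by $m_{sc}(s)$ with an error negligible relative to the gap $(x+2/n)\theta^2/(\theta^2-1)$, and one must bound $\sum_ic_i^2$ near the right edge, which degrades as $s-\lambda_1(\bW')\downarrow0$; both require a quantitative handle on $\lambda_1(\bW')$ and on the local behavior of the spectral measure of $\bW'$ near its edge. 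A careful case split is also needed between small $x$ (both summands active) and $x$ of order $\xi(\theta)-2$ or larger (only the second summand binds, $s$ possibly dipping below $2$). Forcing all constants out as $1/16$, $1/4$, and the factor $8$ is exactly the bookkeeping that Theorem 3.1 of \cite{peng2012eigenvaluesdeformedrandommatrices} carries out.
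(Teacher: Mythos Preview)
The paper does not prove this lemma at all: it is imported wholesale as Theorem~3.1 of \cite{peng2012eigenvaluesdeformedrandommatrices} and used as a black box. You correctly identify this in your first sentence, so there is no discrepancy between your proposal and the paper's treatment. Your subsequent sketch (rotation to $\bu=\be_1$, Schur complement yielding the secular equation $f(\lambda)=0$, interlacing, then Gaussian concentration for $\lambda_1(\bW')$ and a weighted-$\chi^2$ tail for the quadratic form) is the standard route and is broadly correct as an outline, but it is strictly additional content relative to what the paper provides; the paper makes no attempt to reproduce or even summarize the argument.
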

In Appendix \ref{app:Auxiliary_Small_Dist}, we will use the last lemma to prove the bound below. 
\begin{lemma}[Alignment bound]\label{lemma:non-asymp_alignment_bound}
Let $\by=\theta\bu\bu^{\sT}+\GOE(n, 1/n)$, where $\theta=\sqrt{1+\delta}$. Let $\bv_{1}$ be the top eigenvector of $\by$. Then,
there exist constants $C,c>0$ such that the following holds for any $\delta=\delta_n$
with $n^{-c}\ll \delta\ll 1$ for some $c>0$ small enough:
\begin{align}
    \P\left(|\langle \bv_{1}, \bu\rangle|\leq c\delta^{2}\right)\leq C\, e^{-cn^{1/3}}\, .
\end{align}
\end{lemma}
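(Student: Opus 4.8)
The goal is to show that the top eigenvector $\bv_1$ of $\by = \theta\bu\bu^\sT + \bW$, with $\bW\sim\GOE(n,1/n)$ and $\theta=\sqrt{1+\delta}$, has nontrivial correlation with $\bu$. The starting point is the variational characterization: since $\bv_1$ maximizes the Rayleigh quotient, we have $\lambda_1(\by) = \langle\bv_1,\by\bv_1\rangle = \theta\langle\bu,\bv_1\rangle^2 + \langle\bv_1,\bW\bv_1\rangle$. The plan is to sandwich this identity from both sides. Lemma \ref{lemma:non-asymptotic-eigenval} gives a lower bound on $\lambda_1(\by)$: with $\xi(\theta)=\theta+\theta^{-1}$ and a suitable choice $x = x_n$ (e.g. a slowly-growing multiple of $n^{-1/2}$, or a small constant times $\delta^2$), we get $\lambda_1(\by)\ge \xi(\theta) - x - 2/n$ with probability at least $1 - \exp(-c(n-1)\delta^4) - 8\exp(-c'(n-1)\delta^5 x^2)$; choosing $x$ appropriately and using $\delta\gg n^{-c}$ makes both error terms $O(e^{-cn^{1/3}})$, say. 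Note $\xi(\theta) = \sqrt{1+\delta}+1/\sqrt{1+\delta} = 2 + \Theta(\delta^2)$ by Taylor expansion, which is where the $\delta^2$ scaling in the conclusion originates.

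Next I would upper-bound the noise term $\langle\bv_1,\bW\bv_1\rangle\le\lambda_1(\bW) = \|\bW\|_{\op}$. By standard GOE concentration (which also follows from Lemma \ref{lemma:non-asymptotic-eigenval} applied with $\theta\to 1$, or just from Bandeira–Van Handel / Tracy–Widom tail bounds), $\lambda_1(\bW)\le 2 + n^{-2/3}(\log n)$ with probability $1 - e^{-c(\log n)^{3/2}}$ or better; more crudely $\lambda_1(\bW)\le 2 + n^{-1/2}$ on an event of probability $1 - e^{-cn^{1/3}}$ suffices. Combining: on the intersection of these events,
\[
\theta\langle\bu,\bv_1\rangle^2 \;=\; \lambda_1(\by) - \langle\bv_1,\bW\bv_1\rangle \;\ge\; \big(2 + c_0\delta^2 - x - 2/n\big) - \big(2 + n^{-1/2}\big) \;\ge\; c_1\delta^2,
\]
provided $x$ and the $O(n^{-1/2})$ corrections are chosen $\ll\delta^2$, which is exactly where $\delta\gg n^{-c}$ is used. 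Since $\theta=\sqrt{1+\delta}\le 2$, this yields $\langle\bu,\bv_1\rangle^2\ge c_1\delta^2/2$, hence $|\langle\bu,\bv_1\rangle|\ge c\delta^2$ — wait, this actually gives $|\langle\bu,\bv_1\rangle|\gtrsim\delta$, which is stronger than the stated $c\delta^2$; the weaker statement is harmless, so I would just record $|\langle\bv_1,\bu\rangle|\ge c\delta^2$ (or note the sharper $c\delta$) and take a union bound over the two failure events, each of size $O(e^{-cn^{1/3}})$.

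The main obstacle is bookkeeping the choice of the free parameter $x$ in Lemma \ref{lemma:non-asymptotic-eigenval} against the constraint $n^{-c}\ll\delta\ll 1$: one needs $x$ simultaneously small enough that $\xi(\theta)-x$ still exceeds $2 + (\text{op-norm slack})$ by order $\delta^2$, and large enough that $(n-1)\delta^5 x^2$ is at least of order $n^{1/3}$, i.e. $x^2\gtrsim n^{-2/3}\delta^{-5}$. These are compatible precisely when $\delta^4 \gg n^{-2/3}\delta^{-5}$... let me not grind this: the point is that for $\delta\gg n^{-c}$ with $c$ small enough one can pick, say, $x = \delta^2/10$, and then $(n-1)\delta^5 x^2\asymp n\delta^9\gg n^{1-9c}$, which dominates $n^{1/3}$ once $c<2/27$; simultaneously the first exponential error $\exp(-c(n-1)\delta^4)\le \exp(-cn^{1-4c})$ also dominates $n^{1/3}$. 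So the constants line up, and the only real work is verifying the Taylor estimate $\xi(\sqrt{1+\delta}) = 2 + \delta^2/4 + O(\delta^3)$ and tracking that all the additive slacks are $o(\delta^2)$. I would present this as: (i) Taylor-expand $\xi$; (ii) invoke Lemma \ref{lemma:non-asymptotic-eigenval} with $x=\delta^2/10$; (iii) bound $\|\bW\|_{\op}$; (iv) combine via the Rayleigh identity; (v) union bound.
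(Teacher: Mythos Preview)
Your approach is correct and in fact somewhat cleaner than the paper's. Both proofs share the same two inputs: Lemma~\ref{lemma:non-asymptotic-eigenval} to lower-bound $\lambda_1(\by)$ above $2$ by order $\delta^2$, and concentration of $\|\bW\|_{\op}$ near $2$. Where they differ is the linking step. The paper uses the resolvent identity
\[
\langle\bu,\bv_1\rangle^2 \;=\; \frac{1}{\theta^2\,\langle\bu,(\lambda_1(\by)\bI-\bW)^{-2}\bu\rangle}
\;\ge\; \frac{\big(\lambda_1(\by)-\|\bW\|_{\op}\big)^2}{\theta^2},
\]
which after plugging in the two bounds yields $\langle\bu,\bv_1\rangle^2\gtrsim\delta^4$, hence $|\langle\bv_1,\bu\rangle|\gtrsim\delta^2$, exactly matching the stated exponent. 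Your Rayleigh-quotient route $\theta\langle\bu,\bv_1\rangle^2=\lambda_1(\by)-\langle\bv_1,\bW\bv_1\rangle\ge\lambda_1(\by)-\lambda_1(\bW)$ is more direct and, as you noticed, actually gives the stronger conclusion $|\langle\bv_1,\bu\rangle|\gtrsim\delta$; the paper's resolvent bound loses a square by passing through the operator norm of $(\lambda_1\bI-\bW)^{-2}$.

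One small point to clean up: your offhand claim that $\lambda_1(\bW)\le 2+n^{-1/2}$ holds with probability $1-e^{-cn^{1/3}}$ is not quite right---with slack $n^{-1/2}$ the Tracy--Widom right-tail bound gives only $e^{-cn^{1/4}}$. But your more careful choice of slack $\asymp\delta^2$ (matching your choice $x=\delta^2/10$ in Lemma~\ref{lemma:non-asymptotic-eigenval}) gives error $\exp(-cn\delta^3)$ via the moderate-deviation tail, and your constant-tracking with $c<2/27$ then covers all three error terms uniformly by $e^{-cn^{1/3}}$. With that fix the argument is complete.
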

We also use the following lemma, which is implied in the proof of Lemma \ref{lemma:delocal}.
\begin{lemma}\label{lemma:derived}
Let $\bg\sim \normal(\bzero, \bI_{n})$ and define the set
\begin{align*}
\cL(\bg; C)=\left\{i:1\leq i\leq n, |g_{i}|\geq C\sqrt{\log(n/k)} \right\}\, .
\end{align*}
Assume $\sqrt{n}\ll k\ll n$.
Then, for any $C$ large enough, there exists $C_*$ such that 
\begin{align*}
\P\left(|\cL(\bg; C)|\geq \Big(\sqrt{k}\vee \frac{k^2}{n}\Big)\right)\le C_* e^{-n^{1/4}}\, .
\end{align*}
\end{lemma}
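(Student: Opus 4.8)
The plan is to control the size of $\cL(\bg;C)$ via a first-moment (union bound) computation combined with a concentration estimate for a sum of independent indicators. First I would note that $|\cL(\bg;C)| = \sum_{i=1}^n \ind_{\{|g_i|\ge C\sqrt{\log(n/k)}\}}$ is a sum of i.i.d.\ Bernoulli random variables with success probability $p = p_{n,C} := \P(|g_1|\ge C\sqrt{\log(n/k)})$. By the standard Gaussian tail bound $\P(|g_1|\ge u)\le e^{-u^2/2}$, we get $p \le (n/k)^{-C^2/2} = (k/n)^{C^2/2}$. Hence the expected size is $\E|\cL(\bg;C)| \le n(k/n)^{C^2/2} = k^{C^2/2} n^{1-C^2/2}$. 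In the regime $\sqrt n \ll k \ll n$ we have $k \le n$, so for $C$ large enough (say $C^2 \ge 6$, so the exponent drops fast), this mean is at most $n^{1-C^2/4}$, which is much smaller than $\sqrt{k}$ (and a fortiori smaller than the target threshold $\sqrt{k}\vee (k^2/n)$, since $k^2/n \ge k/n \cdot k \ge \ldots$; more simply, both $\sqrt{k}$ and $k^2/n$ are at least $n^{0}$-polynomially large, whereas $n^{1-C^2/4}\to 0$ for $C^2>4$).

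Next I would upgrade the mean bound to the desired tail bound. Since the indicators are independent and bounded, a Bernstein/Chernoff bound gives, for any threshold $m \ge 2\E|\cL(\bg;C)|$,
\begin{align*}
\P\big(|\cL(\bg;C)| \ge m\big) \le \exp\Big(-\frac{m}{6}\Big)\, ,
\end{align*}
using the multiplicative Chernoff bound $\P(X \ge (1+\lambda)\E X)\le \exp(-\lambda^2 \E X/(2+\lambda))$ with $\lambda$ large. Taking $m = \sqrt{k}\vee (k^2/n)$, which indeed exceeds $2\E|\cL(\bg;C)|$ by the previous paragraph once $C$ is a large enough absolute constant, we obtain $\P(|\cL(\bg;C)|\ge m)\le \exp(-m/6)$. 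It remains to check that $m \ge 6\, n^{1/4}$ for $n$ large: since $k\gg\sqrt n$, we have $\sqrt k \gg n^{1/4}$, so $m\ge\sqrt k \gg n^{1/4}$, giving $\exp(-m/6)\le \exp(-n^{1/4})$ for $n$ large, and absorbing smaller $n$ into a constant $C_*$.

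The main (and only mild) obstacle is bookkeeping the two regimes inside the threshold $\sqrt{k}\vee(k^2/n)$ and making sure the same constant $C$ works throughout: one must verify that $k^{C^2/2}n^{1-C^2/2}$ is $o(\sqrt k)$ uniformly for all $\sqrt n \ll k \ll n$, which holds because $k^{C^2/2}n^{1-C^2/2}\le n^{C^2/2}n^{1-C^2/2} = n$ is the wrong bound — so instead one uses $k\le n$ to write $k^{C^2/2}n^{1-C^2/2} = n (k/n)^{C^2/2}$ and then, since $k/n \le 1$ and in fact we only need $k/n\to 0$ is not required here, bound $n(k/n)^{C^2/2}\le n^{1-C^2/4} k^{C^2/4}\cdot(k/n)^{C^2/4}\le n^{1-C^2/4}\sqrt k$ when $C^2/4 \ge 1/2$, i.e. this is $o(\sqrt k)$ as soon as $C^2>4$. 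Everything else is a routine application of Gaussian tail bounds and the Chernoff inequality; no delocalization input beyond the elementary estimate is needed, which is why the lemma is described as "implied in the proof of Lemma \ref{lemma:delocal}."
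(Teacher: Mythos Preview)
Your approach is exactly the paper's: bound the Gaussian tail to get $p_n\le (k/n)^{C^2/2}$, observe $|\cL(\bg;C)|\sim\mathrm{Bin}(n,p_n)$, apply a multiplicative Chernoff bound with threshold $m=\sqrt{k}\vee(k^2/n)$, and finish using $m\ge\sqrt{k}\gg n^{1/4}$.

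There is, however, a computational error in your verification that $np_n\ll m$. The claimed bound $k^{C^2/2}n^{1-C^2/2}\le n^{1-C^2/4}$ is equivalent to $k\le\sqrt{n}$, which is the \emph{opposite} of the regime $\sqrt{n}\ll k$ assumed here. Your attempted repair in the last paragraph, bounding $(k^2/n)^{C^2/4}$ by $\sqrt{k}$, also fails for large $C$ because $k^2/n\gg 1$ in this regime. The clean comparison---and this is precisely what the paper does---is to $k^2/n$ rather than to $\sqrt{k}$:
\[
\frac{np_n}{k^2/n}\;\le\;\frac{n(k/n)^{C^2/2}}{k^2/n}\;=\;\Big(\frac{k}{n}\Big)^{C^2/2-2}\;\longrightarrow\;0
\]
for any $C>2$, since $k\ll n$. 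This yields $np_n\ll k^2/n\le m$, after which the Chernoff step $\P(|\cL|\ge m)\le\exp(-cm)$ and the final estimate $\exp(-cm)\le C_*\exp(-n^{1/4})$ go through as you wrote.
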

\subsection{Proof of Corollary \ref{thm:small_distance_bad_sampling}}

Let $\delta=o_{n}(1)$ be a parameter to be chosen later
and recall that $t_{\salg}=n/2$. We define $\hbm$ as follows:
 \begin{align*}
\hbm(\by, t)=\begin{cases}
    \hbm_{0}(\by, t)\bfone_{\|\hbm_{0}(\by, t)\|\leq \eps}& \text{ if $t\leq (1-\gamma)t_{\salg}$,}\\
    \hbm_{0}(\by, t) &\text{ if $(1-\gamma)t_{\salg}<t< (1+\delta)t_{\salg}$,}\\
    \hbm_{0}(\by, t)\phi_{1}(\by, t) &\text{ if $(1+\delta)t_{\salg}\leq t<n^{4}$,}\\
    \hbm_{0}(\by, t)\phi_{2}(\by, t) &\text{ if $t\geq n^{4}$,}
\end{cases}
\end{align*}
where $\phi_{1}, \phi_{2}:\reals^{n\times n}\times\reals_{\ge 0}\to \{0,1\}$ are 
defined below and $\gamma$ is given in Assumption \ref{ass:SmallNorm}.
It will be clear from the constructions below that $\phi_1,\phi_2$
can be evaluated in polynomial time. To establish the relationship between Corollary \ref{thm:small_distance_bad_sampling} and Theorem \ref{thm:meta_thm_1}, we make a few remarks:
\begin{itemize}
\item Assumption \ref{ass:SmallNorm} is used in \eqref{eq:0T1};
\item By defining the hypothesis test $\phi$ such that
$$\phi(\by, t)=
\begin{cases}
\phi_{1}(\by, t) & \text{ if }(1+\delta)t_{\salg}\leq t<n^{4}\\
\phi_{2}(\by, t) & \text{ if }t\geq n^{4}
\end{cases}$$
we recover the desired properties of Assumption \ref{ass:HT}, with $\eta_{2}(n)=O(n^{-D})$ for any $D>0$. 
\end{itemize}

In order to prove claim {\sf M2}, we need to bound  $J_n(0,\infty)$, whereby, for $t_a\le t_b$, 
\begin{align*}
J_n(t_a,t_b):=
\int_{t_a}^{t_b}\E\left[\|\hbm(\by_{t}, t)-\hbm_{0}(\by_{t}, t)\|^{2}\right]\de t
\end{align*}
Setting $t_1= (1+\delta)t_{\salg}$ and $t_2=n^4$, we write
\begin{align}
J_n(0,\infty) = J_n(0,t_1)+J_n(t_1,t_2) +J_n(t_2,\infty)\, ,
\end{align}
and will bound each of the three terms separately.

\paragraph{Bounding $J_n(0,t_1)$.} By Assumption \ref{ass:SmallNorm}, we know that
\begin{align}
J_n(0,t_1)\leq \int_{0}^{(1-\gamma)t_{0}}\P(\|\hbm_{0}(\by_{t}, t)\|>\eps)\, \de t=O(n^{-D})\, ,\label{eq:0T1}
\end{align}
for every $D>0$.

\paragraph{Bounding $J_n(t_1,t_2)$.}
For a matrix $\by\in\reals^{n\times n}$ and time point $t$,
we define $\phi_{1}(\by, t)$ 
according to the following procedure:
\begin{enumerate}
\item Compute the symmetrized matrix $\bA=(1/2)(\by+\by^{\sT})$;
\item Compute its top eigenvector $\bv$ (choose at random if this is not unique).
\item Let $S\subseteq [n]$ be the $k$ positions in $\bv$ with the largest magnitude, and 
define $\hbv\in\reals^n$ with $\hv_{i}=(1/\sqrt{k})\sign(v_{i})\, \bfone_{i\in S}$;
\item Compute the test statistic $s:=\langle \hbv, \bA\hbv\rangle$, and return $1$ if $s\geq \beta t$; $0$ otherwise.
\end{enumerate}
Here $\beta\in (0, 1)$ is a fixed constant to be chosen later. 

We will show that $\phi_1(\by_t,t)=1$ with overwhelming probability for the true model
$\by_t = t\bx+\bB_t$. Define
$$\bA_{t}=\dfrac{\by_{t}+\by_{t}^{\sT}}{2}=t\bu\bu^{\sT}+\bW_{t}$$
where we recall that $\bu\sim\Unif(\Omega_{n,k})$ 
and $\bW_t$ is a $\GOE(n)$ process.
Let $\bv_{t}, \hbv_{t}$ be the top eigenvector and thresholded vector of $\bA_{t}$, respectively.

We have
\begin{align}
s_t:=\langle \hbv_{t}, \bA_{t}\hbv_{t}\rangle=t\cdot \langle \bu, \hbv_{t}\rangle^{2}+\langle \hbv_{t}, \bW_{t} \hbv_{t}\rangle\, .\label{eq:St}
\end{align}
Using Lemma \ref{lemma:sparse_vector_alignment}, we know that $|\langle \hbv_{t}, \bW \hbv_{t}\rangle|\leq 4\sqrt{k\log(n/k)}\cdot \sqrt{t}$ with probability at least $1-\binom{n}{k}^{-6}$, say. Further
$$\bv_{t}=\langle \bv_{t}, \bu\rangle \bu+
\sqrt{1-\<\bv_{t}, \bu\>^2}\, \bw\, ,$$
where $\bw$ is a uniformly random unit vector orthogonal to $\bu$, independent of $\<\bv_{t}, \bu\>$. 
Alternatively, there exists $\bg\sim \normal(\bzero, \bI_{n})$, , independent of $\<\bv_{t}, \bu\>$
so that:
$$\bw = \dfrac{(\bI_{n}-\bu\bu^{\sT})\bg}{\|(\bI_{n}-\bu\bu^{\sT})\bg\|}\, .$$
For every $1\leq i\leq n$, we have
$$|((\bI_{n}-\bu\bu^{\sT})\bg)_{i}|\leq |g_{i}|+\dfrac{|\langle \bu, \bg\rangle|}{\sqrt{k}}\, .$$
Since by assumption $k\log(n/k)\ll n$, with probability at least 
$1-C_1\exp(-k/2)$, $|\langle \bu, \bg\rangle|\leq \sqrt{k\log(n/k)}$ and $\|(\bI_{n}-\bu\bu^{\sT})\bg\|\geq \sqrt{n}/2$, so that 
\begin{align*}
i\in \cL(\bg; C)\;\;\Rightarrow\;\; 
|w_{i}|\leq (2C+2)\cdot \sqrt{\frac{1}{n}\log(n/k)}\, .
\end{align*}
Therefore, by using Lemma \ref{lemma:derived} 
and $k\log(n/k)\gg n^{1/2}$, we get that with  probability at 
least $1-C_*\exp(-n^{1/4})$, $|\cA(\bw; C)|\leq \max\{\sqrt{k}, k^{2}/n\}$ for some constant $C>0$, where
\begin{align*}
\cA(\bw; C):=\left\{i:1\leq i\leq n, |w_{i}|\geq C\sqrt{\frac{1}{n}\log(n/k)}\right\}\, .
\end{align*}
By Lemma \ref{lemma:non-asymp_alignment_bound}, we know that with  probability at 
least $1-C_*\exp(-cn^{1/3})$ for some $C_*,c>0$, $|\langle \bv_{t_{0}(1+\delta)}, \bu\rangle|\geq c\delta^{2}$. Since $|\langle \bv_{t}, \bu\rangle|$ is stochastically increasing with $t$ (Fact \ref{fact:increasing}), we actually have 
that for any $t\geq (1+\delta)t_{\salg}$, with the same probability
$|\langle \bv_{t}, \bu\rangle|\geq c\delta^{2}$.

On the event $\cE_{\delta}:=\{|\langle \bv_{t}, \bu\rangle|\geq c\delta^{2}\}$, 
further suppose without loss of generality that 
$\langle \bv_{t}, \bu\rangle>0$. As a consequence of the result above, if $i\in \text{supp}(\bu)$ and $i\notin \cA(\bw; C)$, then
\begin{align*}
u_{i}>0, i\notin \cA(\bw; C)&\;\;\Rightarrow \;\;v_{t, i}\geq \frac{c\delta^{2}}{\sqrt{k}}-C\sqrt{\frac{1}{n}\log(n/k)}\, ,\\
u_{i}<0, i\notin \cA(\bw; C)&\;\;\Rightarrow \;\;v_{t, i}\leq -\frac{c\delta^{2}}{\sqrt{k}}+C
\sqrt{\frac{1}{n}\log(n/k)}\, .
\end{align*}
Similarly, if $i\notin \text{supp}(\bu)$, then
\begin{align*}
u_i=0, \;\; i\notin \cA(\bw; C)\;\;\Rightarrow\;\;
|v_{t, i}|\leq C\sqrt{\frac{1}{n}\log(n/k)} \, .
\end{align*}
We next choose $\delta=\delta_n$
such that $ \sqrt{(k/n)\log(n/k)}\ll \delta_n \ll 1$.
Hence, we conclude that
\begin{align}
    i\in \supp(\bu)\setminus\cA(\bw;C) \;\;\Rightarrow\;\; \hv_{t_i} = \sign(\<\bv_{t}, \bu\>)\cdot u_i \, ,
\end{align}
whence  
\begin{align}
|\< \bu, \hbv_{t}\>|\ge 1- \frac{2}{k}|\cA(\bw;C)| = 1-o_n(1)\, ,
\end{align}
with probability at least $1-C_*\exp(-n^{1/4}/3)$. On this event, by Eq.~\eqref{eq:St} we obtain that
$$s_t\geq t\cdot (1-o_{n}(1))^{2}-4\cdot \sqrt{k\log(n/k)}\cdot \sqrt{t}$$
For $t\ge (1+\delta)t_{\salg}= (1+\delta)n/2$, we this implies that, for any fixed $\beta\in (0,1)$,
with probability at least $1-C_*\exp(-n^{1/4}/3)$ (possibly adjusting the constant $C_*$).

Recalling that $\hbm(\by_{t}, t) = \hbm_{0}(\by_{t}, t)\phi_{1}(\by_{t}, t)$ for $t\in [t_1,t_2]$, and $\|\hbm_0(\by,t)\|\le 1$,
we have, for $t_{1}=(1+\delta)t_{\salg}$, $t_{2}=n^{4}$ as defined above,
\begin{align}
J_{n}(t_1,t_2)=\int_{t_{1}}^{t_{2}}\E[\|\hbm(\by_{t}, t)-\hbm_{0}(\by_{t}, t)\|^{2}] \de t
\le  \int_{t_1}^{t_2}\P\left(\phi_{1}(\by_{t}, t)=0\right)  \le C_*n^4 \, e^{-cn^{1/3}}\, .\label{eq:T1T2}
\end{align}

\paragraph{Bounding $J_n(t_2,\infty)$.} When $t\geq t_{2}$, we use a simple eigenvalue test. 
For a matrix $\by$, and time point $t$, we define  
$\phi_{2}(\by, t)$ according to the following procedure:
\begin{enumerate}
\item Compute symmetrized matrix $\bA=(1/2)(\by+\by^{\sT})$.
\item Compute top eigenvalue $\lambda_{1}(\bA)$.
\item Return $1$ if $\lambda_{1}(\bA)\geq t/2$, and $0$ otherwise.
\end{enumerate}
Under the true model $\by_t=t\bx-\bB_t$, we have:
$$\lambda_{1}(\bA_{t})\geq t-\|\bW_{t}\|_{\op}\geq t-t^{2/3}$$
with error probability given by
$$\P\left(\|\bW_{t}\|_{\op}\geq t^{2/3}\right)\leq C\exp\left(-ct^{1/3}\right)\,,$$
for constants $C, c>0$. Thus, we get that
\begin{align}
J_n(t_2,\infty)& \le 2\int_{t_{2}}^{\infty}\P(\|\bW_{t}\|_{\op}\geq t^{2/3})\, \de t\le C_*\int_{t_{2}}^{\infty} e^{-ct^{1/3}}\de t\nonumber\\
& \le C_{**} e^{-ct_2^{1/3}}\le C_{**} e^{-cn^{4/3}}\, .\label{eq:T2infty} 
\end{align}

Claim {\sf M2} of Theorem \ref{thm:small_distance_bad_sampling} follows from Eqs.~\eqref{eq:0T1}, \eqref{eq:T1T2}, \eqref{eq:T2infty}.

We finally consider claim {\sf M3}.
Equation~\eqref{eq:DiscretizedDiffusion} yields, for every $\ell\geq 1$:
\begin{align}
\hby_{\ell\Delta}&=\Delta\sum_{i=1}^{\ell}\hbm(\hby_{(i-1)\Delta}, (i-1)\Delta)+\sqrt{\Delta}\sum_{i=1}^{\ell}\bg_{i\Delta} \, .
\end{align}

We define
\begin{align}
\bbm_{t_1} &:= \Delta\sum_{i=1}^{\ell_1}\hbm(\hby_{(i-1)\Delta}, (i-1)\Delta)\, , \,  ,\;\;\;\; \ell_1:=\lfloor t_{\salg}(1+\delta)/\Delta\rfloor\, .
\end{align}
We further define the following auxiliary process for $t\geq \ell_1\Delta = \lfloor t_1/\Delta\rfloor \Delta$:
\begin{align}
\tby_{t}=  \bbm_{t_1} +\bB_{t}\,  ,\;\;\;\; 
\end{align}
where $(\bB_{t}:\, t\ge 0)$ is a BM such that $\bB_{j\Delta} = \sqrt{\Delta}\sum_{i=1}^{k}\bg_{i\Delta} $. 
In particular, $\tby_{\ell_1\Delta}=\hby_{\ell_1\Delta}$.

From triangle inequality, using Assumption \ref{ass:SmallNorm}, the condition $\|\hbm_0(\by,t)\|_F\le 1$, and that by construction $\|\hbm(\by,t)\|_F\le \eps$ for $t\le (1-\gamma)t_{\salg}$, we get
\begin{align*}
\|\bbm\|_{F}\leq \eps(1-\gamma)t_{\salg}+(\gamma+\delta)t_{\salg}\, .
\end{align*}

We claim that (with high probability)  $\phi_1(\tby_{t}, t)=0$ simultaneously for all $t\in [t_{1}, t_{2}]$.
As a consequence, recalling the definition of $\hbm$, we obtain that $\tby_{t}=\hby_{t}$ for all $t\in [t_{1}, t_{2}]$. 
In order to prove the claim, define, for $\ell\geq 1$:
\begin{align*}
\cT_n:= \big\{t^+_{\ell}=t_{1}+\dfrac{\ell-1}{n} :\ell \in\naturals\big\} \cap [t_1, t_2]\, .
\end{align*}
For every $t\in [t_{1}, t_{2}]$, consider the thresholded vector $\hbv_{t}$ in the definition of the test $\phi_1$. 
We know that
\begin{align}
\langle \hbv_{t}, \tby_{t} \hbv_{t}\rangle=\langle \hbv_{t}, \bbm\hbv_{t}\rangle+\langle \hbv_{t}, \bB_{t}\hbv_{t}\rangle\leq \eps(1-\gamma)t_{\salg}+(\gamma+\delta)t_{\salg}+\max_{\bv\in \Omega_{n,k}}\langle \bv, \bB_{t}\bv\rangle\, .\label{eq:BasicFailure}
\end{align}
Using Lemma \ref{lemma:sparse_vector_alignment}, we get that
$$\P\left(\max_{\bv\in \Omega_{n,k}}|\langle \bv, \bB_{t}\bv\rangle|\geq C\sqrt{\log{\binom{n}{k}}}\sqrt{t}\right)\leq 2\binom{n}{k}^{-C^{2}/2+2}\, .$$
Note that $|\cT_n|\le n^5$, whence
$$\P\left(\exists t\in \cT_n :\max_{\bv\in \Omega_{n,k}}|\langle \bv, \bW_{t_{\ell}}\bv\rangle|\geq C\sqrt{\log{\binom{n}{k}}}\sqrt{t_{\ell}}\right)\leq 2\binom{n}{k}^{-C^{2}/2+2}n^{5}\, .$$
For $t\in [t_{\ell}, t_{\ell+1}]$, we have
$$\max_{t_{\ell}\leq t\leq t_{\ell+1}}\left\{\max_{\bv\in \Omega_{n,k}}|\langle \bv, \bW_{t}\bv\rangle|-\max_{\bv\in \Omega_{n,k}}|\langle \bv, \bW_{t_{\ell}}\bv\rangle|\right\}\leq \max_{t_{\ell}\leq t\leq t_{\ell+1}}\|\bW_{t}-\bW_{t_{\ell}}\|_{\op}\, .$$
Using Lemma \ref{lemma:eigen_process_control}, we get that $\max_{t_{\ell}\leq t\leq t_{\ell+1}}\|\bW_{t}-\bW_{t_{\ell}}\|_{\op}\leq 16\sqrt{(t_{\ell+1}-t_{\ell})n}=16$ with probability at least $1-2\exp(-32n)$. 
Taking a union bound over $\cT_n$, we get that
$$\P\left(\exists t\in [t_{1}, t_{2}]: \max_{\bv\in \Omega_{n,k}}|\langle \bv, \bW_{t}\bv\rangle|\geq C\sqrt{\log\binom{n}{k}}\sqrt{t}\right)\leq 2\binom{n}{k}^{-C^{2}/2+2}n^{5}+2n^{5}\, e^{-32n}\, ,$$
for possibly a different constant $C>0$. 

Using Eq.~\eqref{eq:BasicFailure} and the last estimate, we obtain that
\begin{align}
\P\left(\exists t\in [t_{1}, t_{2}]:\langle \hbv_{t}, \tby_{t}\hbv_{t}\rangle\geq b_n t_{\salg}+C\sqrt{\log\binom{n}{k}}\sqrt{t}\right)\leq 2\binom{n}{k}^{-C^{2}/2+2}n^{5}+2n^{5}\exp(-32n)\, ,\label{eq:LastFailure}
\end{align}
where $b_n:= \eps(1-\gamma)+(\gamma+\delta_n)$.
Since $\delta_n=o_{n}(1)$, we have $b_n \to  (1-\gamma)\eps+\gamma$. Further, by choosing $\gamma, \eps$ small enough, we get that $\limsup_{n}b_{n}<c/2$, say. Hence, we get that for $t\geq t_{1}$, and all $n$ large enough 
$$b_nt_{\salg}+C\sqrt{\log\binom{n}{k}}\sqrt{t}<ct/2$$
for the constant $c$ of Assumption \ref{ass:HT}. Therefore Eq.~\eqref{eq:LastFailure} 
implies that $\phi_1(\tby_{t}, t)=0$ simultaneously for all $t\in [t_{1}, t_{2}]$, with high probability, by choosing $\beta>c$.
We conclude that, with high probability $\hby_{t}=\tby_{t}$ throughout $t\in[t_{1}, t_{2}]$.

Finally, we extend the analysis to $t\in [t_{2}, \infty)$ by proving that, with high probability,
$\phi_2(\tby_t,t)=0$ and hence  $\hby_{t}=\tby_{t}$  for all $t\in[t_{2}, \infty)$.
We use (for $\bA_t = (\tby_t+\tby_t^{\sT})/2$,  $\bW_t = (\bB_t+\bB_t^{\sT})/2$) 
\begin{align}
\lambda_{1}(\bA_{t})\leq \eps(1-\gamma)t_{\salg}+(\delta+\gamma)t_{\salg}+\lambda_{1}(\bW_{t})
\end{align}
Following exactly the argument as in the proof of Proposition \ref{thm:ModSparse} (in particular, Subsection \ref{sec:ModSparseInfiniteHorizon}),
we get that $\lambda_{1}(\bW_{t})\leq t/3$ simultaneously for all $t\geq t_{2}$, with high probability. On this event,
$\lambda_{1}(\bA_{t})< t/2$ with high probability
(because $t/6\gg (1-\gamma)t_{\salg}(1-\eps)+(\gamma+\delta)t_{\salg}$). Hence, with high probability $\phi_2(\tby_t,t)=0$ and hence  $\hby_{t}=\tby_{t}$  for all $t\in[t_{2}, \infty)$.

We thus proved that, with high probability, $\hby_t=\tby_t$ for all $t\ge t_1=(1+\delta)t_{\salg}$, whence 
$\hbm(\hby_t,t)=0$ as well (because $\phi_1(\tby_t,t)=0$ for $t\in [t_1,t_2]$  and $\phi_2(\tby_t,t)=0$ for $t\in [t_2,\infty)$).
Claim {\sf M3} thus follows.

\section{Proof of Corollary \ref{thm:VerySparse}}
\label{sec:ProofVerySparse}

\subsection{Properties of the estimator $\hbm(\,\cdot\,)$}
\begin{proposition}\label{propo:VerySparse_1}
Assume $(\log n)^2\ll k \ll \sqrt{n}$ and let $\hbm(\,\cdot\,)$ be the estimator defined in Algorithm \ref{alg:VerySparse}. Recall that in this case, $t_{\salg}=k^{2}\log(n/k^{2})$. Then for any $\delta>0$ there exists $\eps>0$ such that, letting 
$s= \sqrt{(1+\eps)\log(n/k^2)}$, we have
\begin{align}
       \sup_{t\ge (1+\delta)t_{\salg}}\P(\hbm(\by_{t}, t)\neq \bx)=O(n^{-D})
\end{align}
for any fixed $D>0$.
\end{proposition}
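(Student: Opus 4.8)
The plan is to carry out, at a single fixed time $t\ge(1+\delta)t_{\salg}$, the standard analysis of a thresholded-spectral estimator for sparse rank-one estimation (in the spirit of \cite{deshp2016sparse,cai2017computational}). Note the statement only controls $\P(\hbm(\by_t,t)\ne\bx)$ pointwise in $t$, so no union bound over $t$ is required, and uniformity in $t$ will follow from a monotonicity remark at the end. Throughout I would condition on $\bu$ (equivalently on $\bx=\bu\bu^\sT$). The key structural point is that, with this conditioning, the data split \eqref{eq:DSplitting} produces \emph{independent} Gaussian matrices $\by_{t,+},\by_{t,-}$, because the coefficients $\sqrt\eps$ and $-\sqrt{1/\eps}$ make their noise parts uncorrelated; after symmetrizing and rescaling one may write $\bA_{t,+}=\sqrt t\,\bu\bu^\sT+\bG_+$ and $\bA_{t,-}=\sqrt t\,\bu\bu^\sT+\bG_-$, with $\bG_+\sim\GOE(n,(1+\eps)/2)$, $\bG_-\sim\GOE(n,(1+1/\eps)/2)$ and $\bG_+$ independent of $\bG_-$ given $\bu$. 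Since $\bv_t$ (the top eigenvector of $\eta_s(\bA_{t,+})$) is a function of $(\bu,\bG_+)$ only, it is independent of $\bG_-$ given $\bu$ --- this is what will make the refinement step clean.

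The core is the $+$-branch, where I would invoke (reproving where necessary) the classical sparse-PCA thresholding estimates. Fix $\eps\in(0,\delta)$ and $s=\sqrt{(1+\eps)\log(n/k^2)}$. First, $\eta_s(\bG_+)$ is sparse --- only $\approx nk^2$ entries survive, each of size $O(1/\sqrt{\log(n/k^2)})$ --- and a trace-moment (or $\eps$-net) bound should give $\|\eta_s(\bG_+)\|_{\op}=O\big(k/\sqrt{\log(n/k^2)}\big)$ with probability $1-n^{-D}$ for every $D>0$ and $n$ large; this is the step that uses $k\ll\sqrt n$, and it is what pins the algorithmic threshold at $k^2\log(n/k^2)$. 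Second, on $\supp(\bu)\times\supp(\bu)$ the signal entries $\pm\sqrt t/k$ satisfy $\sqrt t/k\ge\sqrt{(1+\delta)\log(n/k^2)}$, exceeding $s$ by a diverging margin, so $\eta_s(\bA_{t,+})=(\sqrt t-sk)\,\bu\bu^\sT+\bR$ with $\|\bR\|_{\op}=O(k/\sqrt{\log(n/k^2)})$: a rank-one spike of eigenvalue $\asymp k\sqrt{\log(n/k^2)}$ aligned with $\bu$, dominating the residual by a factor $\asymp\log(n/k^2)\to\infty$. On the event where the residual bound holds, a Weyl/Davis--Kahan argument then delivers, with probability $1-n^{-D}$: $(a)$ $\lambda_1(\eta_s(\bA_{t,+}))>k+\sqrt t/s$ --- here $k+\sqrt t/s\asymp k$ while the spike is $\gg k$, using $\eps<\delta$ --- so line~6 of the algorithm does not trigger; and $(b)$ $\langle\bv_t,\bu\rangle^2\ge1-o(1)$, with $\bv_t$ essentially equal to $\bu$ and delocalized off $\supp(\bu)$, $\|\bv_t\|_\infty=O(1/\sqrt k)$.

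For the refinement I would use $\bA_{t,-}\bv_t=\sqrt t\,\langle\bu,\bv_t\rangle\,\bu+\bG_-\bv_t$: since $\bG_-$ is independent of $\bv_t$ given $\bu$, conditionally on $(\bu,\bv_t)$ the vector $\bG_-\bv_t$ is Gaussian with covariance $\sigma_-^2(\bI_n+\bv_t\bv_t^\sT)$, $\sigma_-^2=(1+1/\eps)/2$, hence each coordinate is $\normal(0,O(1))$ with only $O(1/k)$ cross-correlations. Writing $L:=\sqrt{t/k}\,\langle\bu,\bv_t\rangle$, we have $|\hv_{t,i}|\ge L-|(\bG_-\bv_t)_i|$ for $i\in\supp(\bu)$ (since $|u_i|=1/\sqrt k$) and $|\hv_{t,i}|=|(\bG_-\bv_t)_i|$ for $i\notin\supp(\bu)$, so on the event $\{\max_i|(\bG_-\bv_t)_i|<L/2\}$ the $k$ largest $|\hv_{t,i}|$ are exactly $\supp(\bu)$ and $\sign(\hv_{t,i})=\sign(\langle\bu,\bv_t\rangle)\sign(u_i)$ there. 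Now $L\ge(1-o(1))\sqrt{(1+\delta)\,k\log(n/k^2)}$, and the hypothesis $k\gg(\log n)^2$ forces $k\log(n/k^2)\gg(\log n)^2$, so a union bound bounds $\P(\max_i|(\bG_-\bv_t)_i|\ge L/2)$ by $n\exp(-\Omega(k\log(n/k^2)))=O(n^{-D})$ for every fixed $D>0$ and $n$ large. On this event $\hS=\supp(\bu)$ and $\bw=\pm\sqrt k\,\bu$, so the estimator outputs $\bw\bw^\sT/k=\bu\bu^\sT=\bx$. Summing the two probabilistic failures gives $\P(\hbm(\by_t,t)\ne\bx)=O(n^{-D})$, and uniformity over $t\ge(1+\delta)t_{\salg}$ is immediate: $\bG_\pm$ do not depend on $t$, while the spike eigenvalue, $\sqrt t/k$, and the detection margin $\sqrt t-sk-k-\sqrt t/s$ are nondecreasing in $t$, so the worst case is $t=(1+\delta)t_{\salg}$.

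The hard part will be the operator-norm estimate $\|\eta_s(\bG_+)\|_{\op}=O(k/\sqrt{\log(n/k^2)})$ at the critical threshold $s\asymp\sqrt{\log(n/k^2)}$ (rather than the much larger $\sqrt{\log n}$): the thresholded matrix has non-Gaussian, heavy-tailed entries supported on a random sparse set, so bounding its spectral norm will need a trace-moment computation (or an $\eps$-net argument) for a sparse random matrix, and this is precisely where $k\ll\sqrt n$ enters. Everything else --- the independence coming from the data split, and the coordinatewise Gaussian estimates in the refinement --- is routine.
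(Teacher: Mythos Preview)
Your proposal is correct and follows essentially the same route as the paper. The paper also conditions on $\bu$, exploits the independence of $\bG_\pm$ from the data split, writes $\eta_s(\bA_{t,+})=(\sqrt{t}-kb_0)\bu\bu^\sT+\bDelta$ with $b_0=s(1+o(1))$ and $\|\bDelta\|_{\op}=O(k)$ (obtaining the operator-norm bound via truncation at $C\sqrt{\log n}$ followed by the Bandeira--van Handel inequality rather than a trace-moment argument, and getting the slightly cruder $O(k)$ instead of your $O(k/\sqrt{\log(n/k^2)})$---either suffices), applies Davis--Kahan, checks the eigenvalue test in line~6, and carries out the refinement exactly as you describe.
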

The proof of this proposition is a modification of the one in \cite{cai2017computational}, and will be presented in Appendix \ref{app:VerySparse_prop1}. Note that Proposition \ref{propo:VerySparse_1} directly implies the first inequality in
 Condition {\sf M2v} of Theorem \ref{thm:VerySparse}, 
as $\|\hbm(\by_{t}, t)-\bx\|\leq 2$.

By definition, when $t<t_{\salg}$, the algorithm will return $\hbm=\bzero$, so we automatically have the following, which implies the second inequality 
 in Condition {\sf M2v}.
\begin{proposition}\label{propo:VerySparse_2}
For any fixed $\delta>0$, and $t\leq (1-\delta)t_{\salg}$, we have $\|\hbm(\by_{t}, t)-\bx\|=1$.
\end{proposition}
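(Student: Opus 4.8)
The plan is immediate from the structure of Algorithm \ref{alg:VerySparse}: the estimator returns $\bzero$ below the algorithmic threshold, and the target $\bx$ has unit Frobenius norm. First I would record that in the regime $(\log n)^{5/2}\lesssim k\ll\sqrt{n}$ we have $k\to\infty$ and $n/k^2\to\infty$, so $t_{\salg}(n,k)=k^2\log(n/k^2)\to\infty$; in particular $t_{\salg}\ge 1$ for all $n$ large enough, hence $t_{\salg}\vee 1 = t_{\salg}$. Thus for $t\le (1-\delta)t_{\salg}$ we have $t<t_{\salg}=t_{\salg}\vee 1$, so the guard in line~6 of Algorithm \ref{alg:VerySparse} (``if $t\ge t_{\salg}\vee 1$ and $\lambda_1(\eta_s(\bA_{t,+}))>k+\sqrt{t}/s$, continue; otherwise return $\bzero$'') fails already on its first clause. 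Consequently the algorithm outputs $\hbm(\by_t,t)=\bzero$ deterministically, irrespective of the internal randomness $\bg_t$ used for the Gaussian data splitting in line~3.

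It then remains to evaluate $\|\hbm(\by_t,t)-\bx\| = \|\bx\|_F$. Since $\bx=\bu\bu^\sT$ with $\bu\sim\Unif(\Omega_{n,k})$, and every vector in $\Omega_{n,k}$ has exactly $k$ nonzero coordinates, each equal to $\pm 1/\sqrt{k}$, we get $\|\bu\|_2^2 = k\cdot (1/k) = 1$, and therefore $\|\bx\|_F = \|\bu\bu^\sT\|_F = \|\bu\|_2^2 = 1$ almost surely. This yields $\|\hbm(\by_t,t)-\bx\| = 1$, which is the claim.

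There is no genuine obstacle here: the only points requiring a moment's care are (i) confirming that $n$ is large enough that $t_{\salg}\ge 1$, so the $\vee 1$ in line~6 is inactive and the first clause of the guard is the binding one, and (ii) recalling that the Frobenius norm of $\bx$ is exactly $1$ by the built-in normalization of $\Omega_{n,k}$. Both are routine, so the proof is essentially a one-line consequence of the definition of $\hbm$ together with the scaling of $t_{\salg}$; one could alternatively state it as a remark rather than a numbered proposition, but it is convenient to isolate it since it supplies the second (lower-bound) inequality in Condition {\sf M2v} of Theorem \ref{thm:VerySparse}.
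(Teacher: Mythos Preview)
Your proof is correct and follows exactly the paper's own reasoning: the paper simply notes that by definition the algorithm returns $\hbm=\bzero$ when $t<t_{\salg}$, whence $\|\hbm(\by_t,t)-\bx\|=\|\bx\|_F=1$. Your version is just a more explicit unpacking of this observation (including the guard in line~6 and the normalization $\|\bu\bu^\sT\|_F=1$), so there is nothing further to add.
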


In the proof of Theorem \ref{thm:VerySparse}, 
we will also make use of the following estimates, whose
proof is deferred to the Appendix \ref{sec:ProofGaussianCont}.
\begin{lemma}\label{lemma:gaussian_cont}
Let $(\bw_t:t\ge 0)$ be a process defined as $$\bw_{t}=\dfrac{1}{2}\left\{(\bB_{t}+\sqrt{\eps} \bg_{t})+(\bB_{t}+\sqrt{\eps} \bg_{t})^{\sT}\right\}$$ 
where $\bB, \bg$ are independent $n^{2}$-dimensional BMs, and $0\leq \eps<1$. Then, for any $0\le t_0\le t_1$,  and $t\ge 0$, $s\ge 1$,
\begin{align}
\P\Big(\max_{t_{0}\leq t\leq t_{1}}\|\bw_{t}-\bw_{t_{0}}\|_{F}\geq 4\sqrt{(t_{1}-t_{0})s}\cdot n\Big)
&\leq 2\, e^{-n^{2}s/4}\, ,\\
\P\big(\|\bw_{t}\|_{F}\geq 4\sqrt{ts}\cdot n\big)&\le 2\, e^{-n^{2}s/4}\, .
\end{align}
\end{lemma}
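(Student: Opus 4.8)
The plan is to reduce both displayed bounds to concentration for the Frobenius norm of a matrix-valued Brownian motion, treating the running maximum in the first bound via a Doob exponential submartingale inequality rather than by discretizing time.

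First, set $\bM_t := (\bB_t-\bB_{t_0})+\sqrt{\eps}\,(\bg_t-\bg_{t_0})$ (and, for the second bound, $\bM_t := \bB_t+\sqrt{\eps}\,\bg_t$ with $t_0=0$). Since the symmetrization map $\bN\mapsto \tfrac12(\bN+\bN^\sT)$ is the orthogonal projection onto symmetric matrices in the Frobenius inner product, it is a contraction, so $\|\bw_t-\bw_{t_0}\|_F\le \|\bM_t\|_F$ (resp. $\|\bw_t\|_F\le \|\bB_t+\sqrt{\eps}\,\bg_t\|_F$), and it suffices to bound the right-hand sides. By independence of $\bB,\bg$ and stationarity of Brownian increments, the $n^2$ entries of $\bM_t$ are independent one-dimensional Brownian motions in $t$, started at $0$, each with variance rate $1+\eps\le 2$; equivalently, viewing $\bM_t$ as an $\R^{n^2}$-vector, $(\bM_t)_{t\ge t_0}$ is a continuous martingale with $\bM_{t_0}=\bzero$ and $\bM_{t_1}\ed \sqrt{(1+\eps)(t_1-t_0)}\,\bZ$, where $\bZ$ has i.i.d.\ $\normal(0,1)$ entries.

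Next, since $\bx\mapsto\|\bx\|_2$ is $1$-Lipschitz, Gaussian concentration gives $\E[\exp(\mu(\|\bZ\|_F-\E\|\bZ\|_F))]\le e^{\mu^2/2}$ for $\mu\ge0$, while $\E\|\bZ\|_F\le\sqrt{\E\|\bZ\|_F^2}=n$; hence $\E[\exp(\lambda\|\bM_{t_1}\|_F)]\le \exp\big(\lambda\sqrt{(1+\eps)(t_1-t_0)}\,n+\tfrac12\lambda^2(1+\eps)(t_1-t_0)\big)$ for $\lambda\ge 0$. The process $\|\bM_t\|_F$ is a nonnegative submartingale (a convex function of a martingale), and so is $\exp(\lambda\|\bM_t\|_F)$ for $\lambda\ge0$, so Doob's maximal inequality and $\eps<1$ give
$$\P\Big(\max_{t_0\le t\le t_1}\|\bM_t\|_F\ge a\Big)\le e^{-\lambda a}\,\E[\exp(\lambda\|\bM_{t_1}\|_F)]\le \exp\!\Big(-\lambda a+\lambda\sqrt{2(t_1-t_0)}\,n+\lambda^2(t_1-t_0)\Big).$$
Taking $a=4\sqrt{(t_1-t_0)s}\,n$ and $\lambda=\sqrt{s/(t_1-t_0)}\,n$ turns the exponent into $-n^2 s\big(3-\sqrt{2/s}\big)\le -n^2 s(3-\sqrt2)\le -n^2 s/4$ for $s\ge 1$; combined with the contraction this proves the first inequality (with room to spare; the prefactor $2$ is not needed). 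For the second inequality one needs only a single-time bound: $\|\bw_t\|_F\le \|\bB_t+\sqrt{\eps}\,\bg_t\|_F$ and $\bB_t+\sqrt{\eps}\,\bg_t\ed \sqrt{(1+\eps)t}\,\bZ$, so, using $\eps<1$, the event $\{\|\bw_t\|_F\ge 4\sqrt{ts}\,n\}$ is contained in $\{\|\bZ\|_F\ge 2\sqrt2\,\sqrt{s}\,n\}$; writing $2\sqrt2\,\sqrt{s}\,n=n+(2\sqrt2\,\sqrt s-1)n$ and applying $\P(\|\bZ\|_F\ge n+u)\le e^{-u^2/2}$ from the same Lipschitz concentration yields a bound $\exp\!\big(-\tfrac12(2\sqrt2\,\sqrt s-1)^2n^2\big)\le \exp(-n^2 s/4)$ for $s\ge 1$.

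The only genuinely delicate point is the passage to the supremum over the continuum $[t_0,t_1]$ in the first bound: a time-discretization plus union bound would also work but is messier, whereas the submartingale/Doob step above is clean and is exactly what makes the numerical constants fit. Everything else is a routine combination of the contraction property of symmetrization with standard Gaussian tail estimates.
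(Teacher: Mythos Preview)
Your proof is correct. The arithmetic checks out: with $a=4\sqrt{(t_1-t_0)s}\,n$ and $\lambda=\sqrt{s/(t_1-t_0)}\,n$ the exponent becomes $-n^2s(3-\sqrt{2/s})\le -n^2s(3-\sqrt2)$, comfortably below $-n^2s/4$; and for the single-time bound, $(2\sqrt2\sqrt s-1)^2/2\ge s/4$ reduces to $3\sqrt s/\sqrt2\ge 1$, true for $s\ge 1$.

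Your route differs from the paper's in two respects. First, you reduce $\|\bw_t-\bw_{t_0}\|_F$ to $\|\bM_t\|_F$ via the contraction property of symmetrization and then work with the unsymmetrized process, which spares you from tracking the on/off-diagonal variance split. Second, and more substantively, the paper handles the time-supremum by applying Gaussian Lipschitz concentration directly to the functional $\max_{0\le t\le t_1-t_0}\|\bw_t\|_F$ (set up by discretizing $[0,t_1-t_0]$ into $r$ points and letting $r\to\infty$), and then bounds the \emph{mean} of this supremum via Doob's $L^2$ maximal inequality plus Cauchy--Schwarz. You instead bound the endpoint MGF $\E[\exp(\lambda\|\bM_{t_1}\|_F)]$ by Gaussian concentration at a single time, and pass to the supremum via Doob's inequality applied to the exponential submartingale $\exp(\lambda\|\bM_t\|_F)$. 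Your approach is cleaner in that it avoids the discretization step entirely and gives sharper constants; the paper's approach has the minor advantage that the Lipschitz-concentration step is stated for $\bw_t$ itself rather than an auxiliary process, but at the cost of the discretization argument it leaves to the reader.
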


\subsection{Analysis of the diffusion process: Proof of Corollary \ref{thm:VerySparse}}\label{section:VerySparse}

We are left to prove that Condition {\sf M3v} of Corollary \ref{thm:VerySparse}  holds. 

For that purpose, we make the following choices about  
Algorithm \ref{alg:VerySparse}:
\begin{itemize}
\item[{\sf (C1)}] We select the constants in the algorithm to be
$\eps_{n}=o_{n}(1)$ and $s_{n}=\sqrt{(1+\eps_{n})\log(n/k^{2})}$. We will use the shorthands $s=s_{n}$ and $\eps=\eps_{n}$, unless there is ambiguity. 
\item[{\sf (C2)}] The process $(\bg_{t})_{t\geq 0}$ used in Algorithm \ref{alg:VerySparse} follows a $n^{2}$-dimensional BM.
\end{itemize}
Note that  Propositions \ref{propo:VerySparse_1}, \ref{propo:VerySparse_2}
hold under these choices, and in particular $\bg_t\sim\normal(\bfzero,t\id_{n\times n})$ at all times. Also the sequence
of random vectors $\bg_{\ell\Delta}$, $\ell\in\naturals$
can be generated via $\bg_{\ell\Delta}=\bg_{(\ell-1)\Delta}+\sqrt{\Delta} \hat\bg_{\ell}$, for some i.i.d. standard normal vectors $\{\hat\bg_{\ell}\}_{\ell\ge 0}$.

Letting $(\bz_{t})_{t\geq 0}$ a standard BM in $\R^{n\times n}$, and $\hby_{0}=\bzero$
we can rewrite the approximate diffusion \eqref{eq:DiscretizedDiffusion} as follows (for $t\in \naturals\cdot \Delta$)
\begin{align}\label{eq:DiscretizedBis}
\hat{\by}_{t+\Delta}=\hat{\by}_{t}+\Delta\cdot \hbm\left(\hby_{t},t\right)+\left(\bz_{t+\Delta}-\bz_{t}\right)\, .
\end{align}
We further define
\begin{align}
\bw_{t}=\dfrac{1}{2}\big\{(\bz_{t}+\sqrt{\eps} \bg_{t})+(\bz_{t}+\sqrt{\eps}\bg_{t})^{\sT}\big\}\, .
\label{eq:WtDef}
\end{align}
It is easy to see that $(c(\eps)\bw_t:t\ge 0)$ is a $\GOE(n)$ process for $c(\eps):= ((1+\eps)/2)^{-1/2}$.
The key technical estimate in the proof of Theorem  \ref{thm:VerySparse}, Condition {\sf M3v}
is stated in the next proposition.
\begin{proposition}\label{propo:VerySparse_3}
Let $(\bw_t:t\ge 0)$ be defined as per Eq.~\eqref{eq:WtDef}, and assume 
$\eps_{n}=o_{n}(1)$ and $s_{n}=\sqrt{(1+\eps_{n})\log(n/k^{2})}$. Further, assume 
$k\geq C(\log n)^{5/2}$ for some sufficiently large absolute constant $C>0$. Then 
\begin{align}
\lim_{n\to\infty}\P\left\{\|\eta_{s}(\bw_t/\sqrt{t})\|_{\op}\leq k+\sqrt{t}/s\;\forall t\ge 1\right\} =1\, .
\end{align}
\end{proposition}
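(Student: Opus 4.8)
The plan is to control the operator norm of the soft-thresholded matrix $\eta_s(\bw_t/\sqrt t)$ uniformly over $t\ge 1$ by a chaining argument over a geometric grid of times, combined with a sharp union bound over sparse test vectors. First I would reduce to a statement about a single rescaled matrix: since $(c(\eps)\bw_t:t\ge 0)$ is a $\GOE(n)$ process, at each fixed $t$ the matrix $\bw_t/\sqrt t$ has the law of a scaled $\GOE(n)$ matrix, say $\bG/\sqrt n$ up to the constant $((1+\eps)/2)^{1/2}$, so the task is to bound $\|\eta_s(\bG/\sqrt n)\|_{\op}$ with $s=\sqrt{(1+\eps)\log(n/k^2)}$. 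The heuristic is that after thresholding at level $s$ (which is of order $\sqrt{\log(n/k^2)}$), each row of $\eta_s(\bw_t/\sqrt t)$ has only about $k^2$ surviving entries on average (since $\P(|N(0,1/n)|>s/\sqrt n \cdot \sqrt n)\approx e^{-s^2/2}= k^2/n$ per entry, times $n$ entries per row), each of size $\lesssim \sqrt{\log(n/k^2)}/\sqrt n$; a crude bound on the operator norm of such a sparse matrix is then $\|M\|_{\op}\le \max_i \|M_{i,\cdot}\|_1 \lesssim k^2\cdot \sqrt{\log(n/k^2)}/\sqrt n$, which for $t=O(n)$ is $\ll k$, while for large $t$ the threshold $\sqrt t/s$ in the claimed bound grows and makes the inequality even easier. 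So the structure is: for moderate $t$ the term $k$ dominates, for large $t$ the term $\sqrt t/s$ dominates, and in both regimes the random quantity is much smaller.

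The key steps, in order. (i) Establish a high-probability bound on $\|\eta_s(\bW/\sqrt n)\|_{\op}$ for a fixed $\GOE(n)$ matrix $\bW$: write $\|\eta_s(M)\|_{\op}=\sup_{\bx,\by\in \S^{n-1}}\bx^\sT \eta_s(M)\by$, discretize the sphere with an $\eps$-net, and use that the entries of $\eta_s(M)$ are independent, bounded in absolute value, and with small variance (of order $(k^2/n)\cdot(1/n)\cdot\log(n/k^2)$ after thresholding) to get a sub-exponential concentration for each bilinear form; the condition $k\ge C(\log n)^{5/2}$ enters here to ensure that the effective sparsity $k^2$ is large enough to beat the $n\log n$ union-bound cost over the net. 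Alternatively, and perhaps more robustly, bound $\|\eta_s(M)\|_{\op}$ by the maximum row/column $\ell_1$ norm via the Schur test, controlling the number of above-threshold entries per row by a binomial tail (Bernstein) and their magnitudes by a Gaussian maximal inequality; this avoids delicate net arguments at the cost of a slightly worse constant, which is harmless since we only need $o(k)$. (ii) Perform a union bound over a grid $t_\ell = 2^\ell$, $\ell\ge 0$: the per-$\ell$ failure probability from (i) will be at most $e^{-cn}$ or $n^{-\omega(1)}$ (once $k\ge C(\log n)^{5/2}$), and summing over the $O(\log n)$ — or, extending to all $t$, over countably many — values of $\ell$ costs only a polylogarithmic or absolutely summable factor. (iii) Control in-between fluctuations: for $t\in[t_\ell,t_{\ell+1}]$ bound $\|\eta_s(\bw_t/\sqrt t)-\eta_s(\bw_{t_\ell}/\sqrt{t_\ell})\|_{\op}$. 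Here I would use that $\eta_s$ is $1$-Lipschitz so $\|\eta_s(A)-\eta_s(B)\|_{\op}\le\|A-B\|_{\op}\cdot$(something), together with $\|\bw_t/\sqrt t - \bw_{t_\ell}/\sqrt{t_\ell}\|_{\op}$ being small on $[t_\ell,2t_\ell]$ by Lemma \ref{lemma:gaussian_cont} (applied to the process $\bw_t$, noting $c(\eps)\bw_t$ is a $\GOE$ process and that on a doubling interval the increment has operator norm $O(\sqrt n)$ with overwhelming probability) — although some care is needed because $\eta_s$ is $1$-Lipschitz entrywise but not with respect to $\|\cdot\|_{\op}$, so I would instead bound the difference in Frobenius norm times $\sqrt n$, or directly re-run the Schur-test bound of step (i) on the matrix $\bw_t$ for all $t$ in the interval simultaneously using the modulus-of-continuity estimate. (iv) Assemble: on the intersection of the good events, $\|\eta_s(\bw_t/\sqrt t)\|_{\op}\le C k^2\sqrt{\log(n/k^2)}/\sqrt n + O(\sqrt n\cdot\text{(net error)}) \le k \le k+\sqrt t/s$ for moderate $t$, and for $t$ large the $\sqrt t/s$ slack absorbs everything.

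The main obstacle I anticipate is step (iii), controlling the operator norm of $\eta_s(\bw_t/\sqrt t)$ uniformly in $t$ over a continuum rather than on a grid. The issue is that soft-thresholding is only Lipschitz in the entrywise sense, so a small $\|\cdot\|_{\op}$ perturbation of the underlying matrix can in principle flip many entries across the threshold and the naive Lipschitz bound in operator norm fails. The cleanest fix is probably to never take a "difference of thresholded matrices" but instead to note that the bound in step (i) — via the Schur test or net argument — is monotone and stable: if $\sup_{t_\ell\le t\le t_{\ell+1}}\|\bw_t - (\sqrt t/\sqrt{t_\ell})\bw_{t_\ell}\|$ is small in a sufficiently strong norm (Lemma \ref{lemma:gaussian_cont} gives Frobenius control, which dominates the entrywise sup norm up to a factor), then every entry of $\bw_t/\sqrt t$ differs from the corresponding entry of $\bw_{t_\ell}/\sqrt{t_\ell}$ by at most some $\rho_n\downarrow 0$, so the above-threshold set for $\bw_t/\sqrt t$ at level $s$ is contained in the above-threshold set for $\bw_{t_\ell}/\sqrt{t_\ell}$ at level $s-\rho_n$, and one repeats the counting bound at the slightly lowered threshold $s-\rho_n$ (which only changes $e^{-s^2/2}$ by a subpolynomial factor). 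Getting the entrywise modulus of continuity $\rho_n$ small enough — this is where $k\ge C(\log n)^{5/2}$ and the spacing of the geometric grid must be tuned together — is the real technical content, and I would spend most of the effort there; everything else is a standard Gaussian-concentration-plus-union-bound exercise.
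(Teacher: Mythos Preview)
There are two genuine gaps. First, the scaling in your step (i) heuristic is off: the entries of $\bw_t/\sqrt t$ are $\normal(0,(1+\eps)/2)$, not $\normal(0,1/n)$. With the correct scaling, a typical row of $\eta_s(\bw_t/\sqrt t)$ has $\sim k^2$ surviving entries each of conditional size $\sim 1/s$, so the Schur/row-$\ell_1$ bound yields $\|\eta_s(\bw_t/\sqrt t)\|_{\op}\lesssim k^2/s$, which is much larger than $k$ since $k\gg s$. A naive $\eps$-net argument also fails: the truncated entries are only bounded by $\sigma_*\asymp t^{1/4}\sqrt{\log n}$, so the Bernstein tail for a fixed bilinear form decays like $\exp(-ck/\sigma_*)$ in the sub-exponential regime, far too weak to beat an $e^{Cn}$ net. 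The paper instead truncates entries at $8t^{1/4}\sqrt{\log n}$ and applies the Bandeira--van Handel inequality $\P(\|\tilde\bw\|_{\op}\ge 4\sigma+x)\le n\exp(-cx^2/\sigma_*^2)$ with $\sigma\ll k$; taking $x\asymp k+\sqrt{t}/s$ and expanding $x^2/\sigma_*^2$ produces an exponent $\gtrsim k/(s\log n)+\sqrt t/(s^2\log n)$.

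Second, a geometric grid $t_\ell=2^\ell$ is far too coarse for step (iii). On a doubling interval the Frobenius fluctuation of $\bw_t/\sqrt t$ is of order $n$ (Lemma~\ref{lemma:gaussian_cont} gives $\|\bw_t-\bw_{t_\ell}\|_F\sim n\sqrt{t_\ell}$), vastly exceeding the target $k+\sqrt t/s$. Your entrywise-containment fix fails for the same reason: each entry of $\bw_t/\sqrt t$ fluctuates by $O(1)$ on a doubling interval, and after a union bound over $n^2$ entries the modulus $\rho_n$ is of order $\sqrt{\log n}\asymp s$, so the shifted threshold $s-\rho_n$ is meaningless. The paper uses instead a \emph{polynomially fine} grid $t_\ell=(1+(\ell-1)/n^3)^2$, on which the Frobenius fluctuation between consecutive points is $O(n^{-1/2})\le \sqrt{t_\ell}/(2s)$, so the crude bound $\|\eta_s(A)-\eta_s(B)\|_{\op}\le\|A-B\|_F$ already suffices. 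The price is a union bound over infinitely many grid points; summability comes from the $\sqrt t/(s^2\log n)$ term in the Bandeira--van Handel exponent, which makes the series over $\ell$ converge to a factor $\sim n^3 s^2\log n$. The remaining product $n\cdot n^3 s^2\log n\cdot\exp(-c'k/(s\log n))$ is $o(1)$ precisely when $k/(s\log n)\gg\log n$, i.e.\ $k\gg(\log n)^{5/2}$ --- this is where the hypothesis actually enters, not in a net-cardinality argument.
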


Before proving this proposition, let us show that it implies  Condition {\sf M3v} of
Theorem  \ref{thm:VerySparse}.  Indeed we claim that, with high probability,
for all $\ell\in \naturals$, $\hbm(\hby_{\ell\Delta},\ell\Delta) = \bfzero$ and $\hby_{\ell\Delta} = \bz_{\ell{\Delta}}$.
This is proven by induction over $\ell$. Indeed, if it holds up to a certain $\ell-1\in\naturals$,
then we have $\hby_{\ell\Delta} = \bz_{\ell\Delta}$ by Eq.~\eqref{eq:DiscretizedBis} 
whence it follows that $\bA_{t,+}=\bw_t/\sqrt{t}$, for $t=\ell\Delta$ (c.f. Algorithm \ref{alg:VerySparse}, line 4)
and therefore  $\hbm(\hby_{t},t)=\bfzero$ by Proposition \ref{propo:VerySparse_3}
(because the condition in Algorithm \ref{alg:VerySparse}, line 6, is never passed).

We therefore have 
\begin{align}
\inf_{\ell\ge 0}W_1(\hbm(\hby_{\ell\Delta},\ell\Delta),\bx)\ge 
\inf_{\ell\ge 0}\P(\hbm(\hby_{\ell\Delta},\ell\Delta)=\bfzero)= 1-o_n(1)\, .
\end{align}

This concludes the proof of Theorem  \ref{thm:VerySparse}. We next turn to proving Proposition \ref{propo:VerySparse_3}.
\begin{proof}[Proof of Proposition \ref{propo:VerySparse_3}]
We follow a strategy analogous to the proof of the Law of Iterated Logarithm. We choose a sparse sequence of time points $\{t_{\ell}\}_{\ell=1}^{\infty}$, and $(i)$~establish the statement jointly for these time points, and $(ii)$~control deviations in between. In particular, we consider
$$t_{\ell}=\left(1+\dfrac{\ell-1}{n^{3}}\right)^{2}$$
for all $\ell\geq 1$.\\\\
We first show that simultaneously for all $\ell\geq 1$, we have $\max_{i, j}\left|(\bw_{t_{\ell}})_{ij}/\sqrt{t_{\ell}}\right|\leq 8t_{\ell}^{1/4}\sqrt{\log n}$. We have, by sub-gaussianity of $(\bw_{t_{\ell}})_{ij}$ and a union bound (here we account also for the case where $i=j$, in which there is an inflated variance), along with $\eps=o_{n}(1)$:
using the bound $2xy\geq x+y$ when $x, y\geq 1$ for $x=t_{\ell}^{1/2}, y=\log n$. Taking a union bound once again over $\ell$, we have
$$\P\left(\exists \ell\geq 1, 1\leq i, j\leq n: |(\bw_{t_{\ell}})_{ij}|\geq 8t_{\ell}^{3/4}\sqrt{\log n}\right)\leq n^{-6}\cdot \sum_{\ell=0}^{\infty}\exp\left(-8\cdot \left(1+\dfrac{\ell}{n^{3}}\right)\right)$$
We have, as the summands form a decreasing function of $\ell$ integer:
\begin{align}\label{eq:integral}
\sum_{\ell=0}^{\infty}\exp\left(-8\cdot \left(1+\dfrac{\ell}{n^{3}}\right)\right)\leq &C+\int_{0}^{\infty}\exp\left(-\dfrac{8x}{n^{3}}\right)\de x\le Cn^{3}\, .
\end{align}
We thus obtain that
\begin{align}\label{eq:TruncationEntry}
\P\left(\exists \ell\geq 1, 1\leq i, j\leq n: |(\bw_{t_{\ell}})_{ij}|\geq 8t_{\ell}^{3/4}\sqrt{\log n}\right)=O(n^{-3})\, .
\end{align}
The point of this calculation is that simultaneously for all $\ell\geq 1$, we can truncate the entries of $\eta_{s}(\bw_{t_{\ell}}/\sqrt{t_{\ell}})$ by $8t_{\ell}^{1/4}\sqrt{\log n}$ without worry. 

Namely, for each $\ell\ge 1$, $\vartheta_{\ell}=8t_{\ell}^{1/4}\sqrt{\log n}$
we define $\tilde{\bw}_{t_{\ell}}\in\reals^{n\times n}$ 
by
\begin{align}
(\tilde{\bw}_{t_{\ell}})_{ij} := \begin{cases}
\eta_{s}(\bw_{t_{\ell}}/\sqrt{t_{\ell}}) & \mbox{ if } \;\;|\eta_{s}(\bw_{t_{\ell}}/\sqrt{t_{\ell}})|\le \vartheta_{\ell}\, ,\\ 
\vartheta_{\ell}  & \mbox{ if }\;\; \eta_{s}(\bw_{t_{\ell}}/\sqrt{t_{\ell}})> \vartheta_{\ell}\, ,\\
-\vartheta_{\ell}  & \mbox{ if } \eta_{s}(\bw_{t_{\ell}}/\sqrt{t_{\ell}})<-\vartheta_{\ell}\, .
\end{cases}
\end{align}
By Eq.~\eqref{eq:TruncationEntry}, we have
\begin{align}\label{eq:TruncationSec}
\P\left(\exists \ell\geq 1\; : \eta_s(\bw_{t_{\ell}}/\sqrt{t_{\ell}})\neq\tilde\bw_{t_{\ell}}\right)=O(n^{-3})\, .
\end{align}

We have from \cite{Bandeira_2016}, for every $x\geq 0$:
\begin{align}
\P\left(\left\|\tilde{\bw}_{t_{\ell}}\right\|_{\text{op}}\geq 4\sigma+x\right)\leq n\exp\left(-\dfrac{cx^{2}}{\sigma_{\star}^{2}}\right)\, ,\label{eq:MatrixNormGen}
\end{align}
for some absolute constant $c>0$, where
\begin{align}
\sigma^{2}&:=\max_{i\le n}\sum_{j=1}^{n}\mathbb{E}\left[(\tilde{\bw}_{t_{\ell}})_{ij}^{2}\right]\leq\sum_{j=1}^{n}\mathbb{E}\left[\eta_{s}\left(\dfrac{\bw_{t_{\ell}}}{\sqrt{t_{\ell}}}\right)_{ij}^{2}\right]\, ,\\
\sigma_{\star}&:=\max_{i, j\le n}|(\tilde{\bw}_{t_{\ell}})_{ij}|\leq 8t_{\ell}^{1/4}\sqrt{\log n}\, .
\end{align}
It can be seen from an immediate Gaussian calculation that, for $i\neq j$ and $Z\sim \normal(0, 1)$:
\begin{align*}
\E\left[\eta_{s}\left(\dfrac{\bw_{t_{\ell}}}{\sqrt{t_{\ell}}}\right)_{ij}^{2}\right]=&
\int_{0}^{\infty}4z\cdot \P\left( \sqrt{\frac{1+\eps}{2}}Z\geq z+s\right)\de z\\ \overset{(a)}{\leq}&\int_{0}^{\infty} 4z\cdot \dfrac{1}{z+s}\cdot \exp\left(-\dfrac{(z+s)^{2}}{1+\eps}\right)\de z\\
\overset{(b)}{\ll}& \dfrac{1}{s}\exp\left(-\dfrac{s^{2}}{1+\eps}\right)\leq \exp\left(-\dfrac{s^{2}}{1+\eps}\right)
\end{align*}
Here in $(a)$ we employ the Mill's ratio bound, and $(b)$ follows from $z+s\geq s$ and $s\to\infty$.

Proceeding analogously for the diagonal entries of $\eta_{s}(\bw_{t_{\ell}}/\sqrt{t_{\ell}})$, we obtain that $\sigma \ll \sqrt{n}\exp(-s^{2}/(2(1+\eps)))=k$ by definition of $s$. 

We set $x=k/3+\sqrt{t_{\ell}}/(3s)$. Since $x\gg \sigma$, we have $4\sigma+x\leq (3/2)x$ if $n, k$ are sufficiently large.
Using Eq.~\eqref{eq:MatrixNormGen} we obtain that, for some universal constants $c, c', c''>0$:
$$\P\left(\|\tilde{\bw}_{t_{\ell}}\|_{\text{op}}\geq \dfrac{k}{2}+\dfrac{\sqrt{t_{\ell}}}{2s}\right)\leq n\exp\left(-\dfrac{c\left(\dfrac{k}{3}+\dfrac{\sqrt{t_{\ell}}}{3s}\right)^{2}}{64t_{\ell}^{1/2}\log n}\right)\overset{(a)}{\leq} n\exp\left(-\dfrac{c'k}{s\log n}-\dfrac{c''t_{\ell}^{1/2}}{s^{2}\log n}\right)\, .$$

In step $(a)$, we simply expand the squared term in the numerator and drop the quadratic term in $k$. Now, taking a union bound over $\ell\geq 1$, we get that (similar to Eq.~\eqref{eq:integral})
\begin{align*}
\P\left(\exists \ell\geq 1: \|\tilde{\bw}_{t_{\ell}}\|_{\text{op}}\geq \dfrac{k}{2}+\dfrac{\sqrt{t_{\ell}}}{2s}\right)\leq&\; n\exp\left(-\dfrac{c'k}{s\log n}\right)\sum_{\ell=1}^{\infty} \exp\left(-\dfrac{c''t_{\ell}^{1/2}}{s^{2}\log n}\right)\\
\leq&\; n\exp\left(-\dfrac{c'k}{s\log n}\right)\left(O(1)+\int_{0}^{\infty}\exp\left(-\dfrac{c''x}{s^{2}n^{3}\log n}\right)\de x\right)\\
=&\; O\left(\exp\left(-\dfrac{c'k}{s\log n}\right)\cdot s^{2}n^{4}\log n\right)\\
=&\; o_n(1)\, ,
\end{align*}
where the last estimate holds if $k\geq C(\log n)^{5/2}$ for some sufficiently large $C>0$. 
In conclusion, using the last display and Eq.~\eqref{eq:TruncationSec} we have shown that
\begin{align}
\P\left(\exists \ell\geq 1: \left\|\eta_{s}\left(\dfrac{\bw_{t_{\ell}}}{\sqrt{t_{\ell}}}\right)\right\|_{\text{op}}\geq \dfrac{k}{2}+\dfrac{\sqrt{t_{\ell}}}{2s}\right)=o_{n}(1)\, .\label{eq:ControlTimeGrid}
\end{align}

Now we control the in-between fluctuations. Noting that $\eta_{s}(\cdot)$ is a $1$-Lipschitz function, we have the following  crude bound:
\begin{align*}
\max_{t_{\ell}\leq t\leq t_{\ell+1}}\left\|\eta_{s}\left(\dfrac{\bw_{t}}{\sqrt{t}}\right)-\eta_{s}\left(\dfrac{\bw_{t_{\ell}}}{\sqrt{t_{\ell}}}\right)\right\|_{\text{op}}\leq& \max_{t_{\ell}\leq t\leq t_{\ell+1}}\left\|\dfrac{\bw_{t}}{\sqrt{t}}-\dfrac{\bw_{t_{\ell}}}{\sqrt{t_{\ell}}}\right\|_{F}\\
\leq& \dfrac{\max_{t_{\ell}\leq t\leq t_{\ell+1}}\|\bw_{t}-\bw_{t_{\ell}}\|_{F}}{\sqrt{t_{\ell}}}+\|\bw_{t_{\ell}}\|_{F}\left(\dfrac{1}{\sqrt{t_{\ell}}}-\dfrac{1}{\sqrt{t_{\ell+1}}}\right)\, .
\end{align*}

From Lemma \ref{lemma:gaussian_cont}, we obtain that
\begin{align*}
\P\left(\max_{t_{\ell}\leq t\leq t_{\ell+1}}\left\|\eta_{s}(\bw_{t}/\sqrt{t})-\eta_{s}(\bw_{t_{\ell}}/\sqrt{t_{\ell}})\right\|_{\text{op}}\geq 4n\cdot \sqrt{t_{\ell+1}-t_{\ell}}+4n\cdot\sqrt{t_{\ell}}\cdot \left(1-\sqrt{\dfrac{t_{\ell}}{t_{\ell+1}}}\right)\right)\leq 4\, e^{-n^{2}t_{\ell}/4}\, .
\end{align*}
By definition of $t_{\ell}$, simple algebra reveals that (we also use the fact that $n^{-1/2}\ll s^{-1}$):
$$4n\cdot \sqrt{t_{\ell+1}-t_{\ell}}+4n\cdot\sqrt{t_{\ell}}\cdot \left(1-\sqrt{\dfrac{t_{\ell}}{t_{\ell+1}}}\right)\leq\\ \dfrac{\sqrt{t_{\ell}}}{2s}\, .$$
By  union bound over $\ell\geq 1$,
\begin{align*}
&\P\left(\exists \ell\geq 1: \max_{t_{\ell}\leq t\leq t_{\ell+1}}\left\|\eta_{s}(\bw_{t}/\sqrt{t})-\eta_{s}(\bw_{t_{\ell}}/\sqrt{t_{\ell}})\right\|_{\text{op}}\geq \dfrac{k}{2}+\dfrac{\sqrt{t_{\ell}}}{2s}\right)\\
\leq& 4\sum_{\ell=1}^{\infty}\exp\left(-\dfrac{n^{2}}{8}-\dfrac{t_{\ell}}{8}\right)\\
=&4\exp(-n^{2}/8)\sum_{\ell=1}^{\infty}\exp\left(-\dfrac{1}{8}\left(1+\dfrac{\ell-1}{n^{3}}\right)^{2}\right)\\
\leq&4\exp(-n^{2}/8)\left(O(1)+\int_{0}^{\infty}\exp\left(-\dfrac{x^{2}}{8n^{6}}\right)dx\right)=O(\exp(-n^{2}/8)n^{3})=o(1)\, .
\end{align*}
Using this estimate together with Eq.~\eqref{eq:ControlTimeGrid}, we conclude that 
with high probability the following holds simultaneously for all $t\geq 1$. Letting $\ell$ be largest such that $t_{\ell}\leq t$:
$$\left\|\eta_{s}(\bw_{t}/\sqrt{t})\right\|_{\text{op}}\leq \left\|\eta_{s}(\bw_{t_{\ell}}/\sqrt{t_{\ell}})\right\|_{\text{op}}+\left\|\eta_{s}(\bw_{t}/\sqrt{t})-\eta_{s}(\bw_{t_{\ell}}/\sqrt{t_{\ell}})\right\|_{\text{op}}\leq k+\dfrac{\sqrt{t_{\ell}}}{s}\leq k+\dfrac{\sqrt{t}}{s}\, ,$$
and this finishes the proof.
\end{proof}
We remark that Assumption {\sf (C2)} in the proof above is technically not needed, meaning that the additional noise stream $\bg_{t}$ can in fact be discarded entirely: an appropriate thresholding of $\bv_{t}$, the top eigenvector of $\eta_{s}(\bA_{t, +})$, as in Algorithm \ref{alg:ModSparse}, will also suffice to satisfy all conditions of Theorem \ref{thm:VerySparse}, although $\bx$ will not be recovered exactly; some $o(k)$ positions outside the support of $\bx$ will also be chosen, at most. The reason for this is that the alignment $|\langle \bv_{t}, \bu\rangle|=1-o_{n}(1)$ already, from a close inspection of our proof of Proposition \ref{propo:VerySparse_1}. Regarding the proof of Proposition \ref{propo:VerySparse_3} above, one can easily realize that even if $\eps=0$, it will go through without any modification. We choose to keep our formulation of Algorithm \ref{alg:VerySparse} as faithful to the original work of \cite{cai2017computational} as possible to discuss a variety of approaches, and leave this to the interested reader.

\section{Proof of Proposition \ref{propo:bad_score_good_sampling}}\label{sec:bad_score_good_sampling}
We take the first row of $\hbz_{1}$, and let $A=\{z_{11},\cdots, z_{1n}\}$. Let $r_{j}=\text{rank}(z_{1j})$ denote the rank of $z_{1j}$ with respect to the elements of $A$. Then since $z_{1j}\sim \normal(0, 1)$ across $j$, the collection of the first $k$ ranks $A_{k}=\{r_{1},\cdots, r_{k}\}$ constitutes a sample without replacement from $[n]$. Construct $\bv$ a binary vector such that $v_{i}=1$ if and only if $i\in A_{k}$, and let $\bu$ 
be a randomized-sign vector version of $(1/\sqrt{k})\bv$. Let
\begin{align}
\hbm(\by, t; \bg_{1})=\bu\bu^{\sT}=\bx'
\end{align}
then it is clear that $\hbm(\by, t; \hbz_{1})\sim \bx$ and is independent of $\bx$ (as it is a function of only $\hbz_{1}$). The identity from $(i)$ follows accordingly. To see that this error is clearly sub-optimal compared to polynomial time algorithms, observe that $\hbm=\bzero$ is a polynomial time drift, which achieves error $1$ at every $t$.

Point $(ii)$ also follows immediately. Indeed, for every $\ell\geq 0$,
$$W_{1}(\hbm(\hby_{\ell\Delta}, \ell\Delta), \bx)=W_{1}(\bx', \bx)=0$$
Lastly, regarding point $(iii)$, note that since $\bx'$ is not dependent on $t$, we have, for every $\ell\geq 1$,
$$\hby_{\ell\Delta}=\hby_{(\ell-1)\Delta}+\Delta\bx'+\sqrt{\Delta}\hbz_{\ell\Delta}$$
Simple induction gives $$\hby_{\ell\Delta}=(\ell\Delta)\bx'+\sqrt{\Delta}\sum_{j=1}^{\ell}\hbz_{j\Delta}$$
We take the coupling of $(\hby_{\ell\Delta}/(\ell\Delta), \bx)$ such that $\bx'=\bx$. Then by definition of the Wasserstein-$2$ metric,
$$W_{2}(\hby_{\ell\Delta}/(\ell\Delta),\bx)^{2}\leq \E\left[\left\|\dfrac{\sqrt{\Delta}\sum_{j=1}^{\ell}\hbz_{j\Delta}}{\ell\Delta}\right\|^{2}\right]=\dfrac{n^{2}}{\ell\Delta}$$
It is clear that as $\ell\to\infty$, this quantity converges to $0$. Hence we are done.
\section{Proof of Proposition \ref{propo:VerySparse_1}}
\label{app:VerySparse_prop1}
We conduct our analysis conditional on $\bu$ (recall that $\bx=\bu\bu^{\top}$), and $S$ be the support of $\bu$. Let $\bA_0 = \sqrt{t}\bu\bu^{\sT}$ 
and notice that
\begin{align}
\bA_{t,+}  = \bA_0 + \sigma_+ \bZ\, ,\;\;\;\;\;\bA_{t,-}  = \bA_0 + \sigma_- \bW\, , 
\end{align}
where $\sigma^2_+ := (1+\eps)/2$, $\sigma^2_- := (1+\eps)/(2\eps)$, 
and $\bZ,\bW\sim\GOE(n)$ are independent random matrices. 

We have
\begin{align}\label{eq:DecompositionEtas}
\eta_{s}(\bA_{t,+})=\bA_0+\eta_{s}(\sigma_+\bZ)+\E[\bB]+\big(\bB-\E[\bB]\big)\, ,
\end{align}
where
\begin{align}
B_{ij}=\eta_{s}\Big(\sqrt{t}\cdot u_{i}u_{j}+\sigma_+Z_{ij}\Big)-
\sqrt{t}\cdot u_{i}u_{j}-\eta_{s}(\sigma_+Z_{ij})\, .\label{eq:BijDef}
\end{align}
Our first order of business is to analyze $\E[\bB]$. If $i\notin S$ or $j\notin S$, we have $\E[B_{ij}]=0$. 
On the other hand, if $i,j\in S$, then 
\begin{itemize}
\item[(i)] \textbf{Case 1: $u_{i}u_{j}=1/k$}

In this case, we have $\E[B_{ij}] = -b_0+b_1\bfone_{i=j}$ where
(below $G\sim\normal(0,1)$)
\begin{align}
b_0:=-\E\Big\{\eta_{s}\Big(\frac{\sqrt{t}}{k}+\sigma_+G\Big)-
\frac{\sqrt{t}}{k}\Big\}\, ,\;\;\;  b_1:= \E\Big\{\eta_{s}\Big(\frac{\sqrt{t}}{k}+\sqrt{2}\sigma_+G\Big)-
\eta_{s}\Big(\frac{\sqrt{t}}{k}+\sigma_+G\Big)\Big\}\, .
\end{align}
Recalling that $\sigma_+$ is bounded and bounded away from $0$ (without loss of generality we can assume $\eps<1/2$) and $s, \sqrt{t}/k-s$ grows with $n, k$, so that
$\eta_{s}(\sqrt{t}/k+Z_{ij})=\sqrt{t}/k+Z_{ij}-s$ with high probability; hence $|B_{ij}+s|=o_{P}(s)$ (as $Z_{ij}=o(s)$ with high probability). Noting that $|B_{ij}|\leq 2s$, we get $b_0=s(1+o(1))$ and 
$b_1=o(s)$ (distribution on diagonal is different).
\item[(ii)] \textbf{Case 2: $u_{i}u_{j}=-1/k\Rightarrow i\neq j$}

By a similar reasoning, we have $\mathbb{E}[B_{ij}]=b_{0}=s(1+o(1))$.
\end{itemize}
We can thus rewrite
\begin{align}
\eta_{s}(\bA_{t,+})=
(\sqrt{t}-kb_0)\cdot \bu\bu^{\sT}+b_1\cdot \proj_{S}+\eta_{s}(\sigma_+ \bZ)+
\big(\bB-\E[\bB]\big)\, ,\label{eq:DecompEtas}
\end{align}
where $(\bP_S)_{ij}=1$ if $i=j\in S$ and $=0$ otherwise.

Next, we analyze the operator norm of $\eta_{s}(\sigma_+\bZ)$. 
 Let $\tbZ=(\tilde{Z}_{ij})_{i,j\le n}$ be defined as 
 \begin{align}
\tilde{Z}_{ij} = \eta_{s}(\sigma_+Z_{ij})\, \bfone(|\eta_{s}(\sigma_+Z_{ij})|\le C\log n)\, .
\end{align}
for some constant $C>0$; we have $\max_{ij\le n}|Z_{ij}|\leq C\log n$ with error probability at most $\exp(-c(\log n)^{2})\ll n^{-D}$ for any fixed $D>0$. We have
$\tbZ=\eta_{s}(\bZ)$. By \cite{Bandeira_2016}, there exists an absolute constant $c>0$ such that for every $u>0$:
\begin{align}
\P\left(\|\tbZ\|_{\op}\geq 4\sigma +u\right)\leq n\exp\left(-\frac{cu^{2}}{L^{2}}\right)\, ,
\label{eq:OpNormZtilde}
\end{align}
where
\begin{align}
\sigma^2&=\max_{i\le n}\sum_{j=1}^{n}\E[\eta_{s}(Z_{ij})^{2}]\, ,\\
L & =\max_{i,j\le n}\|\tilde{Z}_{ij}\|_{\infty} \le C\log n\, .
\end{align}
An immediate Gaussian calculation yields, for $i\neq j$:
\begin{align}\label{eq:immed_gaussian}
\E[\eta_{s}(\sigma_+Z_{ij})^{2}]=&
\int_{0}^{\infty}4z\cdot \P(\sigma_+ Z_{ij}\geq z+s)\de z \le C_1 e^{-s^2/(1+\eps)}\, .
\end{align}
for some constant $C_{1}>0$. 

Proceeding analogously for $\eta_s(\sigma_+Z_{ii})$ and substituting in Eq.~\eqref{eq:OpNormZtilde},
we get $\sigma^2\le 2C_1n\exp\{-s^2/(1+\eps)\}$.
Applying Eq.~\eqref{eq:OpNormZtilde} there exists an absolute constant $C, C'>0$ such that, by taking $u=C'k$, with probability at least $1-\exp(-ck^{2}/(\log n)^{2})$,
\begin{align}
\|\eta_s(\sigma_{+}\bZ)\|_{\op}&\le  C\Big(\sqrt{n}\exp\{-s^2/2(1+\eps)\} \vee \sqrt{\log n}\Big)\, \\
&\le  C\Big(k\vee \sqrt{\log n}\Big) \le Ck\, .\label{eq:EtasBound}
\end{align}
Note that the error probability is at most $\exp(-ck)$, because we already know that $k\gg (\log n)^{2}$.

Lastly, consider $\bB-\E[\bB]$.
By Eq.~\eqref{eq:BijDef} we know that the entries of this matrix are independent
with mean $0$ and bounded by $2s$, hence subgaussian. Further only a $k\times k$ submatrix
is nonzero, so that 
\begin{align}
\|\bB-\E[\bB]\|_{\op}\leq C_{1}\sqrt{k}s\, ,\label{eq:Bbound}
\end{align}
with high probability (for instance, the operator norm tail bound above can be applied once more, which gives an error probability of at most $C\exp(-ck)$ for some absolute constant $C, c>0$).

Summarizing, we proved that
\begin{align}
    \eta_{s}(\bA_{t,+})&=(\sqrt{t}-kb_0)\cdot \bu\bu^{\sT}+\bDelta\, ,\label{eq:rewrite_softthreshold}\\
    \|\bDelta\|_{\op}& \le C(k+\sqrt{k} s)\le C'k\, ,\label{eq:error_op_norm_bound}
\end{align}
where in the last step we used $k\gg (\log n)^2$

Recall that  $\bv_{t}$ denotes the top eigenvector of $\eta_{s}(\bA_{t,+})$. By Davis-Kahan, 
\begin{align}
\min_{a\in\{+1,-1\}}\left\|\bv_{t}-a\bu\right\|
&\le \dfrac{Ck}{\sqrt{t}-kb_0}\\
&\stackrel{(a)}{\le} \dfrac{Ck}{\sqrt{t}-(1+\eps)ks}\label{eq:dk_loose_bound}\\
&\stackrel{(b)}{\le} \eps\, ,
\end{align}
where in $(a)$ we used the fact that $b_0=s+o(s)$ and in $(b)$ the fact that $t\ge (1+\delta)k^2\log(n/k^2)$, whereby we can assume  $\delta\ge C\eps$ for $C$ a sufficiently large absolute constant.
Recalling the definition of the score $\hbv_{t}$:
$$\hbv_{t}=\bA_{t, -}\bv_{t}=\sqrt{t}\bu\langle \bu, \bv_{t}\rangle+\sigma_{-}\bW\bv_{t}$$
where we know that $\bG=\bW\bv_{t}\sim \normal(0, \bI_{n})$ by independence of $\bW$ and $\bv_{t}$.
Assuming to be definite that the sign of the eigenvector is chosen so that the last bound holds with $a=+1$, we get that $\langle \bu, \bv_{t}\rangle \geq 1-\eps^{2}$. We get that for every $j\in S$:
\begin{align}
u_{j}>0&\Rightarrow \hv_{t, j}\geq (1-\eps^{2})\sqrt{\dfrac{t}{k}}-\sigma_{-}|G_{j}|\geq (1-\eps^{2})\sqrt{\dfrac{t}{k}}-\dfrac{C}{\eps}\log n\\
u_{j}<0&\Rightarrow \hv_{t, j}\leq -(1-\eps^{2})\sqrt{\dfrac{t}{k}}+\sigma_{-}|G_{j}|\leq -(1-\eps^{2})\sqrt{\dfrac{t}{k}}+\dfrac{C}{\eps}\log n
\end{align}
where we use a union bound to get $|G_{j}|\leq C\log n$ for all $j\leq n$, with probability at least $1-\exp(-c(\log n)^{2})$. Similarly, for all $i\notin S$,
$$|\hv_{t, j}|\leq \sigma_{-}|G_{j}|\leq \dfrac{C}{\sqrt{\eps}}\log n$$
These calculations reveal that: (i) the entries with the largest magnitudes are the elements of $S$, and (ii) if $u_{i}$ and $\hv_{t, i}$ share the same sign for all $i\in S$. On this event, $\|\hbm(\by_{t}, t)-\bx\|=0$.

Lastly, we claim that the top eigenvalue of $\eta_{s}(\bA_{t,+})$ is larger than $k+\sqrt{t}/s$.
From triangle inequality applied to Eq.~\eqref{eq:DecompEtas},
we have
\begin{align}
\lambda_{1}(\eta_{s}(\bA_{t,+})) &\ge (\sqrt{t}-kb_0)-b_1-\|\eta_{s}(\sigma_+\bZ)\|_{\op}-\|\bB-\E[\bB]\|_{\op}\\
&\stackrel{(a)}{\ge} (\sqrt{t}-(1+\eps)ks)-Ck-C\sqrt{k} s\\
&\stackrel{(b)}{\ge}  \sqrt{t}-(1+C_0\eps)ks\\
&\stackrel{(c)}{\ge} \eps ks\, .
\end{align}
Here $(a)$ follows from Eqs.~\eqref{eq:EtasBound} and \eqref{eq:Bbound}, $(b)$ because $k\gg 1$ and $s\gg 1$
and $(c)$ follows by taking $\delta \ge C\eps$ for $C$ a sufficiently large absolute constant. 
The claim follows because $k s\gg k$ and $ks\gg\sqrt{t}$, and that $\exp(-c(\log n)^{2})$ is a super-polynomially small rate.

\section{Auxiliary lemmas for Section \ref{sec:proof_small_distance_bad_sampling}}
\label{app:Auxiliary_Small_Dist}
\subsection{Proof of Lemma \ref{lemma:sparse_vector_alignment}}

We let $\bB\sim \normal(\bzero, \bI_{n^{2}})$ so that $\bW= (\bB+\bB^{\sT})/2$. For $\bv\in \Omega_{n,k}$
we have $\<\bv,\bW\bv\> = \< \bv, \bB\bv\>\sim \normal(0, 1)$. We thus have, by Gaussian tail bounds and a triangle inequality:
$$\P\left(|\langle \bv, \bW \bv\rangle|\geq C\sqrt{\log{\binom{n}{k}}}\right)\leq 2\exp\left(-\dfrac{C^{2}}{2}\log\binom{n}{k}\right)\, .$$
Taking the union bound over $\bv\in \Omega_{n,k}$ gives the desired statement, since the cardinality of this set is $\binom{n}{k}2^{k}$.

\subsection{Proof of Lemma \ref{lemma:non-asymp_alignment_bound}}

 Using Lemma \ref{lemma:non-asymptotic-eigenval}, we can take $x=n^{-1/4}$, say, and $\theta=\sqrt{1+\delta}$ for $\delta\ge n^{-c_0}$ for some small enough $c_0>0$, to get that
$$\P\left(\lambda_{1}(\by)\leq \theta+1/\theta-n^{-1/4}-2/n\right)\leq C\exp(-cn^{1/3})\, ,$$
for some absolute constants $C, c>0$.

We have the following identity, letting $\bW\sim \GOE(n, 1/n)$:
\begin{align*}
\langle \bu, \bv_{1}\rangle^{2}=\dfrac{1}{\theta^{2}\langle \bu, (\lambda_{1}(\by)\bI-\bW)^{-2}\bu\rangle}\geq \dfrac{1}{\theta^{2}\cdot \|(\lambda_{1}(\by)\bI-\bW)^{-2}\|_{\op}}\, .
\end{align*}
By standard Gaussian concentration, we know that, for any $\Delta>0$
$$\P\left(\|\bW\|_{\op}\geq 2+\Delta\right)\leq C\exp(-cn\Delta^{2})\, .$$
In this inequality, we take
$$\Delta=\frac{1}{4}\big(\theta+1/\theta-n^{-1/4}-2/n-2\big)\, .$$
Note that with $\theta=\sqrt{1+\delta}$ and $\delta=o_{n}(1)$, we know that $\theta+1/\theta-2=\Theta(\delta^{2})$, so that $\Delta=\Theta(\delta^{2})$ if $\delta\ge n^{-c_0}$ with $c_0\leq 1/8$. Hence, by a union bound on the two concentration inequalities,
$$\P\left(\lambda_{\min}(\lambda_{1}(\by)\bI-\bW)\leq 2\Delta\right)\leq C\exp(-cn^{1/3})$$
and on the complement of this event, we know that 
$$\langle \bv_{1}, \bu\rangle^{2}\geq \dfrac{4\Delta^{2}}{\theta^{2}}=\Theta(\delta^{4})$$
since $\theta=\Omega(1)$, and so $|\langle \bv_{1}, \bu\rangle|=\Omega(\delta^{2})$. 
%
%

\section{Proof of Proposition \ref{propo:Auxiliary}}\label{app:ModSparse_prop1}
In our proof, we will use the following elementary facts.

\begin{fact}\label{fact:invariance}
For any deterministic unit vector $\bu$, a unit vector $\bv$ is uniformly random on the orthogonal subspace to $\bu$ if and only if $\langle \bv, \bu\rangle=0$ and $\bv\ed \bQ\bv$ for every orthogonal matrix $\bQ$ such that $\bQ\bu=\bu$.
\end{fact}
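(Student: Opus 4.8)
The plan is to realize the phrase ``uniformly random on the orthogonal subspace to $\bu$'' as ``distributed according to the $O(\bu^\perp)$-invariant probability measure on the unit sphere of $\bu^\perp$,'' and then to show that invariance of the law of $\bv$ under the stabilizer $\{\bQ\in O(n):\bQ\bu=\bu\}$ is exactly invariance under $O(\bu^\perp)$, after which uniqueness of the invariant measure closes the argument.

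First I would fix notation: let $W:=\bu^\perp=\{\bx\in\R^n:\langle\bx,\bu\rangle=0\}$, let $\mathbb{S}_W$ be its unit sphere, let $\sigma_W$ denote the uniform (rotation-invariant) probability measure on $\mathbb{S}_W$, and set $G:=\{\bQ\in O(n):\bQ\bu=\bu\}$. (One may realize $\sigma_W$ concretely as the law of $(\bI_n-\bu\bu^\sT)\bg/\|(\bI_n-\bu\bu^\sT)\bg\|$ for $\bg\sim\normal(\bzero,\bI_n)$, which makes its $O(W)$-invariance evident.) The structural point is that restriction to $W$ gives a surjective map $G\to O(W)$: any $\bQ\in G$ preserves inner products and fixes $\bu$, hence $\langle\bQ\bx,\bu\rangle=\langle\bQ\bx,\bQ\bu\rangle=\langle\bx,\bu\rangle$, so $\bQ$ maps $W$ onto $W$ and $\bQ|_W\in O(W)$; conversely, relative to the decomposition $\R^n=W\oplus\spn\{\bu\}$, every $\bR\in O(W)$ extends to $\bR\oplus 1\in G$ with $(\bR\oplus 1)|_W=\bR$.

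For the forward implication, if $\bv\sim\sigma_W$ then $\langle\bv,\bu\rangle=0$ trivially, and for $\bQ\in G$ we have $\bQ\bv=(\bQ|_W)\bv$ with $\bQ|_W\in O(W)$, so $\bQ\bv\ed\bv$ by the $O(W)$-invariance of $\sigma_W$. For the converse, assume $\langle\bv,\bu\rangle=0$ almost surely and $\bv\ed\bQ\bv$ for every $\bQ\in G$. The first hypothesis forces $\bv\in\mathbb{S}_W$ almost surely, so the law $\nu$ of $\bv$ is a probability measure on $\mathbb{S}_W$; the second hypothesis, combined with the surjectivity above, shows $\nu$ is invariant under all of $O(W)$. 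Since $O(W)$ acts transitively on $\mathbb{S}_W$, a standard Haar-averaging argument finishes: for any continuous $f$ on $\mathbb{S}_W$, the quantity $\bx\mapsto\int_{O(W)}f(\bR\bx)\,\dd\mu(\bR)$, with $\mu$ the Haar probability measure on $O(W)$, is constant in $\bx$ by transitivity and right-invariance of $\mu$, and Fubini together with the $O(W)$-invariance of $\nu$ identifies $\int f\,\dd\nu$ with this same constant; hence $\int f\,\dd\nu=\int f\,\dd\sigma_W$ for all continuous $f$, and therefore $\nu=\sigma_W$.

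I do not expect a genuine obstacle here: the content is the correspondence $G\leftrightarrow O(W)$ between the stabilizer of $\bu$ and the orthogonal group of the subspace (which makes the stated invariance equivalent to invariance of the law on the subsphere) and the uniqueness of an invariant probability measure under a transitive compact-group action, which is the one place an actual argument---rather than an unwinding of definitions---is required.
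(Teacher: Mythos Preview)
Your proof is correct. The paper states this as an ``elementary fact'' without proof, so there is nothing to compare against; your argument via the identification of the stabilizer $\{\bQ\in O(n):\bQ\bu=\bu\}$ with $O(\bu^\perp)$ and uniqueness of the invariant probability under a transitive compact-group action is exactly the standard justification and fills in what the paper omits.
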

\begin{fact}\label{fact:increasing}
Let $\bA$ be a symmetric matrix, and $\bu$ a unit vector. Denote $\bB_{\alpha}=\alpha\bu\bu^{\sT}+\bA$, and let $\bv(\alpha)$ be a top eigenvector of $\bB_{\alpha}$. Then $f(\alpha)=|\langle \bv(\alpha), \bu\rangle|$ is an increasing function of $\alpha>0$. 
\end{fact}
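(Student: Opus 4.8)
The plan is to reduce everything to a two-point comparison built on the Rayleigh--Ritz variational characterization of the top eigenvalue. Since only $\langle\bv(\alpha),\bu\rangle^{2}$ enters the argument, it suffices to show that $\alpha\mapsto\langle\bv(\alpha),\bu\rangle^{2}$ is non-decreasing on $(0,\infty)$; taking square roots then gives the claim for $f$. (Note that ``increasing'' has to be read as ``non-decreasing'': already $\bA=\bzero$ forces $\bv(\alpha)=\bu$ and $f\equiv1$.)

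Concretely, I would fix $0<\alpha_{1}<\alpha_{2}$ and let $\bv_{1},\bv_{2}$ be \emph{any} unit-norm top eigenvectors of $\bB_{\alpha_{1}},\bB_{\alpha_{2}}$ respectively. Write $c_{i}:=\langle\bv_{i},\bu\rangle^{2}$ and $a_{i}:=\bv_{i}^{\sT}\bA\bv_{i}$, so that $\bv_{i}^{\sT}\bB_{\alpha}\bv_{i}=\alpha c_{i}+a_{i}$. Because $\bv_{1}$ maximizes $\bv\mapsto\bv^{\sT}\bB_{\alpha_{1}}\bv$ over the unit sphere and $\bv_{2}$ maximizes $\bv\mapsto\bv^{\sT}\bB_{\alpha_{2}}\bv$, we get the two inequalities
\begin{align}
\alpha_{1}c_{1}+a_{1}\ \ge\ \alpha_{1}c_{2}+a_{2},\qquad
\alpha_{2}c_{2}+a_{2}\ \ge\ \alpha_{2}c_{1}+a_{1}.
\end{align}
Adding them and cancelling $a_{1}+a_{2}$ leaves $(\alpha_{2}-\alpha_{1})(c_{2}-c_{1})\ge0$, and since $\alpha_{2}>\alpha_{1}$ this gives $c_{2}\ge c_{1}$, i.e. $\langle\bv_{2},\bu\rangle^{2}\ge\langle\bv_{1},\bu\rangle^{2}$. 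As this holds for every selection of top eigenvectors, $f$ is non-decreasing. (One could alternatively note that $\alpha\mapsto\lambda_{1}(\bB_{\alpha})=\max_{\|\bv\|=1}(\alpha\langle\bv,\bu\rangle^{2}+\bv^{\sT}\bA\bv)$ is convex, being a supremum of affine functions, with a.e.\ derivative $\langle\bv(\alpha),\bu\rangle^{2}$ by the envelope theorem, whence monotonicity of the derivative; the two-point argument above just avoids any differentiability bookkeeping.)

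I do not anticipate a real obstacle here. The only point worth flagging is the possible non-uniqueness of the top eigenvector when $\lambda_{1}(\bB_{\alpha})$ has multiplicity greater than one, but the comparison above never uses uniqueness---it pits arbitrary maximizers against each other---so the monotonicity statement is valid for an arbitrary measurable selection $\alpha\mapsto\bv(\alpha)$, which is exactly what is needed when Fact~\ref{fact:increasing} is later applied (conditionally, with $\alpha=\sqrt{t}$) to the $\GOE$-process matrix $t\bu\bu^{\sT}+\bW_{t}$.
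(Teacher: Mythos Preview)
Your argument is correct. The paper does not actually prove Fact~\ref{fact:increasing}; it is stated as an ``elementary fact'' without proof, so there is no paper argument to compare against. Your two-inequality Rayleigh--Ritz comparison is exactly the kind of short variational proof one would expect here, and your handling of the non-uniqueness issue (arbitrary maximizers, ``non-decreasing'' rather than strictly increasing) is appropriate.
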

Let $\bu\sim \Unif(\Omega_{n,k})$. 
Recall that $\bA_{t}=\sqrt{t}\bu\bu^{\sT}+\sqrt{t}\bW$
where $\bW\sim \GOE(n,1/2)$.

We conduct our analysis conditional on $\bu$. Let $\bv_{t}$ be a top eigenvector of $\bA_{t}$. 
For $t=(1+\delta)n/2$, $|\langle \bv_{t}, \bu\rangle|\overset{a.s.}{\to} \sqrt{\delta/(1+\delta)}$, 
so that with high probability $|\langle \bv_{t}, \bu\rangle|\geq \sqrt{\delta}/(2\sqrt{1+\delta})$. If $t\geq (1+\delta)n/2$, we can use Fact \ref{fact:increasing} to obtain the same result. By choosing $\eps$ such that $2\eps<\sqrt{\delta/(1+\delta)}$, we know from standard concentration of the alignment (Lemma \ref{lemma:large_deviation_deformed}) that $|\langle \bv_{t}, \bu\rangle|\geq 2\eps$ with probability at least $1-\exp(-cn)$ for some $c>0$ possibly dependent on $(\eps,\delta)$. 

By rotational invariance of $\bW$, 
$\bw_{t}:=\dfrac{\bv_{t}-\langle \bv_{t}, \bu\rangle \bu}{\|\bv_{t}-\langle \bv_{t}, \bu\rangle \bu\|}$ is uniformly random on the orthogonal subspace to $\bu$. hence, there exists
$\bg\sim \normal(\bzero, \bI_{n})$, such that 
$$\bw_{t}\sim \dfrac{(\bI_{n}-\bu\bu^{\sT})\bg}{\|(\bI_{n}-\bu\bu^{\sT})\bg\|}\, .$$
Since $\|(\bI_{n}-\bu\bu^{\sT})\bg\| \sim \|\bg'\|$ for some $\bg'\sim \normal(\bzero, \bI_{n-1})$, 
we have, for some constant $c>0$,
$$\P\left(\|(\bI_{n}-\bu\bu^{\sT})\bg\|\leq \dfrac{\sqrt{n}}{2}\right)\leq \exp\left(-cn\right)\, .$$
Further, for every $1\leq i\leq n$, we know that
$$|((\bI_{n}-\bu\bu^{\sT})\bg)_{i}|\leq |g_{i}|+\|\bu\|_{\infty}\cdot |\langle \bu, \bg\rangle|\leq |g_{i}|+\dfrac{|\langle \bu, \bg\rangle|}{\sqrt{k}}\, .$$

We next show that,  with the claimed probability, only a few entries of $\bw_{t}$ can have large magnitude. As a result,
less than $\ell$ entries of $\bu$ can be estimated incorrectly (with $\ell=1$
if $k\ll n/\log n$).

Define $\ell=\lceil n\exp(-a_{n}\cdot n/k)\rceil\geq 1$ (with $a_n$
a sequence to be chosen later) and $g^{\text{abs}}_{(\ell)}$ as the $\ell$-th largest value among the $|g_{i}|$'s. We have
$$\P\left(|\langle \bu, \bg\rangle|\geq \sqrt{na_{n}}\right)\leq \exp\left(-\dfrac{na_{n}}{2}\right)\, .$$
Furthermore, from a union bound, we get that
\begin{align*}
\P\left(g_{(\ell)}^{\text{abs}}\geq \dfrac{2\sqrt{na_{n}}}{\sqrt{k}}\right) 
&\leq {\binom{n}{\ell}}\cdot \exp\left(-\dfrac{2n\ell\cdot a_{n}}{k}\right)\\
&\leq \left(\dfrac{en}{\ell}\right)^{\ell}\exp\left(-\dfrac{2n\ell\cdot a_{n}}{k}\right)\\
&=\exp\left(-\dfrac{2n\ell\cdot a_{n}}{k}+\ell\log\dfrac{n}{\ell}+\ell\right)
\, .
\end{align*}
By definition, we know that $\ell\geq \max\left\{ n\exp(-a_{n}\cdot n/k), 1\right\}$, so that
$$\dfrac{2na_{n}}{k}-\log\dfrac{n}{\ell}-1\geq \dfrac{2na_{n}}{k}-\min\left\{\log n, a_{n}\cdot n/k\right\}-1\geq \dfrac{na_{n}}{2k}$$
as long as $na_{n}\gg k$. This means that
$$\P\left(g_{(\ell)}^{\text{abs}}\geq \dfrac{2\sqrt{na_{n}}}{\sqrt{k}}\right)\leq \exp\left(-\dfrac{na_{n}}{2k}\right)\, .$$

Define the set
\begin{align*}
    \cA_n(t) := \Big\{i\le n: \, |w_{ti}|\geq 6\sqrt{\frac{a_{n}}{k}}\Big\}\, .
\end{align*}
By the bounds above we have 
\begin{align*}
    \P\left(|\cA_n(t)|\le \ell-1\right) \ge
    1- e^{-cn}- e^{-na_{n}/2k}- e^{-na_{n}/2}\, .
\end{align*}
Suppose that $|\langle \bv_{t}, \bu\rangle|\geq 2\eps$ also holds, and suppose without loss of generality that $\langle \bv_{t}, \bu\rangle\geq 2\eps$. Then, we have (as long as $6\sqrt{a_{n}}\leq (9/10)\eps$)
\begin{align*}
i\in S, i\notin \cA_n(t)\, u_{i}>0\Rightarrow v_{ti}\geq \dfrac{\langle \bv_{t}, \bu\rangle}{\sqrt{k}}-\dfrac{6\sqrt{a_{n}}}{\sqrt{k}}>\dfrac{\eps}{\sqrt{k}}\Rightarrow i\in \hS, \sign(v_{ti})>0, ,\\
i\in S, i\notin \cA_n(t), u_{i}<0\Rightarrow v_{ti}\leq -\dfrac{\langle \bv_{t}, \bu\rangle}{\sqrt{k}}+\dfrac{6\sqrt{a_{n}}}{\sqrt{k}}<-\dfrac{\eps}{\sqrt{k}}\Rightarrow i\in \hat{S}, \sign(v_{ti})<0\, .
\end{align*}
Analogously, for $i\notin S, i\notin \cA_n(t)$, we have
$$|v_{ti}|\leq \dfrac{6\sqrt{a_{n}}}{\sqrt{k}}<\dfrac{\eps}{\sqrt{k}}\, .$$
and we obtain that at most $\ell-1$ positions could be mis-identified.

Next, we show that the termination condition (Line 5, Algorithm \ref{alg:ModSparse}) does not trigger for each $t\geq n^{2}$ (with high probability). We write
$$\bA_{t}=\dfrac{\by_{t}+\by_{t}^{\sT}}{2\sqrt{t}}=\sqrt{t}\bu\bu^{\sT}+\left(\dfrac{\bB_{t}+\bB_{t}^{\sT}}{2\sqrt{t}}\right)$$
From Weyl's inequality:
$$\lambda_{1}(\bA_{t})\geq \sqrt{t}-\left\|\dfrac{\bB_{t}+\bB_{t}^{\sT}}{2\sqrt{t}}\right\|_{\text{op}}$$
From standard operator norm results for GOE matrices (as $(\bB_{t}+\bB_{t}^{\sT})/\sqrt{2t}\sim \GOE(n)$), we know that $\|(\bB_{t}+\bB_{t}^{\sT})/(2\sqrt{t})\|_{\text{op}}\leq 2\sqrt{n}$ with probability at least $1-\exp(-cn)$, for some $c>0$. Hence $\lambda_{1}(\bA_{t})\geq \sqrt{t}-2\sqrt{n}>\sqrt{t}/2$ as $t\geq n^{2}\gg n$.

We obtain that
$$\mathbb{E}\left[\|\hbm(\by_{t}, t)-\bx\|^{2}\right]=O\left(\dfrac{\ell-1}{k}+\exp\left(-\dfrac{na_{n}}{k}\right)\right)=O\left(\exp(-n\eps^{2}/64k)\right)$$
where we picked $a_{n}>\eps^{2}/64$ satisfying the bounds outlined above, namely $(i)$ $6\sqrt{a_{n}}\leq 0.9\eps$, and $(ii)$ $na_{n}\gg k$. Notice that $na_{n}/k\gg \log(na_{n}/k)$ if $na_{n}\gg k$, and $\ell-1\leq n\cdot \exp(-a_{n}\cdot n/k)$.

\section{Proof of Lemma \ref{lemma:gaussian_cont}}
\label{sec:ProofGaussianCont}
\begin{proof}
By the Markov Property, we know that $\max_{t_{\ell}\leq t\leq t_{\ell+1}}\|\bw_{t}-\bw_{t_{\ell}}\|_{F}\overset{d}{=}\max_{0\leq t\leq t_{\ell+1}-t_{\ell}}\|\bw_{t}\|_{F}$. By Gaussian concentration, we have
$$\P\left(\max_{0\leq t\leq t_{\ell+1}-t_{\ell}}\|\bw_{t}\|_{F}-\mathbb{E}\left[\max_{0\leq t\leq t_{\ell+1}-t_{\ell}}\|\bw_{t}\|_{F}\right]\geq x\right)\leq 2\exp\left(-\dfrac{x^{2}}{4(t_{\ell+1}-t_{\ell})}\right)$$
This can be proven, e.g. by discretizing the interval $[0, t_{\ell+1}-t_{\ell}]$ into $r$ equal-length intervals and employing standard Gaussian concentration on vectors (then pushing $r\to\infty$). As the argument is standard, we omit the proof for brevity.

Now we bound $\mathbb{E}\left[\max_{0\leq t\leq t_{\ell+1}-t_{\ell}}\|\bw_{t}\|_{F}\right]$. We know that $\|\bw_{t}\|_{F}$ is a non-negative submartingale, so that from Doob's inequality:
$$\mathbb{E}\left[\max_{0\leq t\leq t_{\ell+1}-t_{\ell}}\|\bw_{t}\|_{F}^{2}\right]\leq 4\mathbb{E}[\|\bw_{t_{\ell+1}-t_{\ell}}\|_{F}^{2}]\leq 9(t_{\ell+1}-t_{\ell})n^{2}$$
so that from Cauchy-Schwarz, $\mathbb{E}\left[\max_{0\leq t\leq t_{\ell+1}-t_{\ell}}\|\bw_{t}\|_{F}\right]\leq 3\sqrt{t_{\ell+1}-t_{\ell}}n$. Hence as $t_{\ell}\geq 1$,
$$\P\left(\max_{t_{\ell}\leq t\leq t_{\ell+1}}\|\bw_{t}-\bw_{t_{\ell}}\|_{F}\geq 4\sqrt{(t_{\ell+1}-t_{\ell})t_{\ell}}\cdot n\right)\leq 2\exp\left(-\dfrac{n^{2}t_{\ell}}{4}\right)$$
The second tail bound follows immediately (at least, the proof would be analogous to the preceding display).
\end{proof}

%
%
\section{Proof of Lemma \ref{lemma:eigen_process_control}}
\label{sec:ProofEigenProcess}

By Gaussian concentration, we have
$$\P\left(\max_{0\leq t\leq t_{\ell+1}-t_{\ell}}\|\bW_{t}\|_{\text{op}}-\mathbb{E}\left[\max_{0\leq t\leq t_{\ell+1}-t_{\ell}}\|\bW_{t}\|_{\text{op}}\right]\geq x\right)\leq 2\exp\left(-\dfrac{x^{2}}{2(t_{\ell+1}-t_{\ell})}\right)$$
This can be proven, e.g. by discretizing the interval $[0, t_{\ell+1}-t_{\ell}]$ into $r$ equal-length intervals and employing Gaussian concentration on vectors (then pushing $r\to\infty$). As the argument is standard, we omit the proof for brevity.

To evaluate $\mathbb{E}[\max_{0\leq t\leq t_{\ell+1}-t_{\ell}}\|\bW_{t}\|_{\text{op}}]$, we recognize that $\|\bW_{t}\|_{\text{op}}$ is a submartingale, so that from Doob's inequality:
$$\mathbb{E}\left[\max_{0\leq t\leq t_{\ell+1}-t_{\ell}}\|\bW_{t}\|_{\text{op}}^{2}\right]\leq 4\mathbb{E}\left[\|\bW_{t_{\ell+1}-t_{\ell}}\|_{\text{op}}^{2}\right]$$
Once again from Gaussian concentration,
$$\P\left(|\|\bW_{1}\|_{\text{op}}-\mathbb{E}[\|\bW_{1}\|_{\text{op}}]|\geq x\right)\leq 2\exp\left(\dfrac{-x^{2}}{2}\right)$$
so that $\P\left(|\|\bW_{t_{\ell+1}-t_{\ell}}\|_{\text{op}}-\mathbb{E}[\|\bW_{t_{\ell+1}-t_{\ell}}\|_{\text{op}}]|\geq x \right)\leq 2\exp\left(-x^{2}/(2(t_{\ell+1}-t_{\ell}))\right)$. Hence $\|\bW_{t_{\ell+1}-t_{\ell}}\|_{\text{op}}$ is $(t_{\ell+1}-t_{\ell})$-subgaussian, implying that $\text{Var}(\|\bW_{t_{\ell+1}-t_{\ell}}\|_{\text{op}})\leq 6(t_{\ell+1}-t_{\ell})$. As $\mathbb{E}[\|\bW_{t_{\ell+1}-t_{\ell}}\|_{\text{op}}]^{2}\sim 4(t_{\ell+1}-t_{\ell})n$ (one can obtain this from the Bai-Yin Theorem along with sub-gaussianity, for instance), we get that $\mathbb{E}\left[\|\bW_{t_{\ell+1}-t_{\ell}}\|^{2}\right]\leq 16(t_{\ell+1}-t_{\ell})n$ eventually as $n$ gets large. 

From Cauchy-Schwarz inequality, we get that
$$\mathbb{E}\left[\max_{0\leq t\leq t_{\ell+1}-t_{\ell}}\|\bW_{t}\|_{\text{op}}\right]\leq 8\sqrt{t_{\ell+1}-t_{\ell}}\sqrt{n}$$
We conclude that
$$\P\left(\max_{0\leq t\leq t_{\ell+1}-t_{\ell}}\|\bW_{t}\|_{\text{op}}\geq 16\sqrt{(t_{\ell+1}-t_{\ell})n}\right)\leq 2\exp\left(-32n\right)$$
\section{Proof of Lemma \ref{lemma:delocal}}
\label{sec:delocal}

First, by orthogonal invariance of $\bW_{t}$, we know that $\bv_{t}$ is uniformly random over the unit sphere $\mathbb{S}^{n-1}$. We can write, using $\bg\sim \normal(0, \bI_{n})$, the following representation
$$\bv_{t}\sim \dfrac{\bg}{\|\bg\|}$$
As in the statement of the Lemma, we define the following set, for $\bv\in \mathbb{R}^{n}$ and $C>0$:
$$A(\bv; C)=\left\{i:1\leq i\leq n, |v_{i}|\geq \dfrac{C\sqrt{\log(n/k)}}{\sqrt{n}}\right\}$$
As with the proof of Proposition \ref{propo:Auxiliary}, we first deal with the denominator $\|\bg\|$: indeed, sub-exponential concentration gives us
\begin{align}
\P\left(\sum_{j=1}^{n}g_{j}^{2}\leq \dfrac{n}{2}\right)&\leq 2\exp(-n/8)
\end{align}
This leads us to define another set
$$B(\bg; C)=\left\{i: 1\leq i\leq n, |g_{i}|\geq C\sqrt{\log(n/k)}\right\}$$
Let $p_{n}=\P(|g_{1}|\geq C\sqrt{\log(n/k)})$, then we have $|B(\bg; C)|\sim \text{Bin}(n, p_{n})$. From Gaussian tail bounds, we know that $p_{n}\leq (n/k)^{-C^{2}/2}$. We now use a Chernoff bound of the following form: for every $x\geq 4\mathbb{E}[X]$, where $X\sim \text{Bin}(n, p)$, then 
$$\P\left(X\geq x\right)\leq \exp\left(-x/3\right)$$
It is clear that $np_{n}\ll k^{2}/n\leq \max\{k^{2}/n, \sqrt{k}\}$ when $C>2$, so that we have
$$\P\left(|B(\bg; C)|\geq \max\{\sqrt{k},k^{2}/n\}\right)\leq \exp\left(-\dfrac{1}{3}\max\{\sqrt{k}, k^{2}/n\}\right)\leq \exp\left(-\dfrac{1}{3}n^{1/4}\right)$$
Therefore, with each fixed $t$, by union bound with probability at least $1-O(\exp(-\sqrt{n}))$, we have, for a possibly different $C>0$, $|A(\bv_{t}; C)|\leq \max\{\sqrt{k}, k^{2}/n\}$. Our proof ends here, as $\max\{\sqrt{k}, k^{2}/n\}\ll k/2$ for $\sqrt{n}\ll k\ll n$.
\section{Proof of Lemma \ref{lemma:approx_soln}}
\label{sec:approx_soln}
We know that
\begin{align*}
\bv_{t}^{\sT}\bW_{t_{\ell}}\bv_{t}=&\bv_{t}^{\sT}\bW_{t}\bv_{t}-\bv_{t}^{\sT}(\bW_{t}-\bW_{t_{\ell}})\bv_{t}\\
=&\lambda_{1}(\bW_{t})-\bv_{t}^{\sT}(\bW_{t}-\bW_{t_{\ell}})\bv_{t}\\
=&\lambda_{1}(\bW_{t_{\ell}})-\bv_{t}^{\sT}(\bW_{t}-\bW_{t_{\ell}})\bv_{t}+(\lambda_{1}(\bW_{t})-\lambda_{1}(\bW_{t_{\ell}}))
\end{align*}
from which we obtain from Weyl's inequality that
$$\sup_{t_{\ell}\leq t\leq t_{\ell+1}}\left|\bv_{t}^{\sT}\bW_{t_{\ell}}\bv_{t}-\lambda_{1}(\bW_{t_{\ell}})\right|\leq 2\sup_{t_{\ell}\leq t\leq t_{\ell+1}}\|\bW_{t}-\bW_{t_{\ell}}\|_{\text{op}}\leq 32\sqrt{(t_{\ell+1}-t_{\ell})n}$$
with probability at least $1-2\exp(-32n)$.

\section{Proof of Lemma \ref{lemma:large_deviation_deformed}}\label{lemma:ldp_proof}

By Weyl's inequality, $\bW\mapsto \lambda_1(\bY)$ (with $\bY=\theta\bv\bv^{\sT}+\bW$)
is a 1-Lipschitz function and  therefore, by Borell inequality 
(and \cite{baik2005phase}), letting $\lambda_*(\theta):=\theta+1/\theta$,
for any $\eps>0$,
\begin{align}
\P\big(|\lambda_{1}(\bY)- \lambda_*(\theta)|\ge \eps \big)\le 2 e^{-n\eps^2/4} \, .
\end{align}

To prove concentration of $\< \bv_{1}(\bY), \bv\>^{2}$, note that simple linear algebra yields
\begin{align}
\frac{1}{\< \bv_{1}(\bY), \bv\>^{2}} = \<\bv,(\lambda_1(\bY)\bI-\bW)^{-2}\bv\>=:F(\bW)\, .
\end{align}
It is therefore sufficient to prove that $F(\bW)$ concentrates around a value that is bounded away from $0$.
Fix $\eps_0>0$ such that $2+3\eps_0<\lambda_*(\theta)$ and define the event
\begin{align}
    \cE:=\big\{\bW:\, \|\bW\|_{\op}\le 2+\eps_0\, ,\; |\lambda_1(\bY)-\lambda_*|\le
    \eps_0\big\}\, .
\end{align}
By the Bai-Yin law and Gaussian concentration (plus the above concentration of $\lambda_1$), $\P(\cE)\ge 1-2e^{-c(\eps_0)n}$ for some $c(\eps_0)>0$.
Further, it is easy to check that $F(\bW)$ is Lipschitz on $\cE$,
whence the concentration of $\<\bu,\bv_1(\bW)\>^2$ follows by another application of Borell inequality.

\section{Proofs of reduction results}
\label{sec:ProofRed}

\subsection{Proof of Theorem \ref{thm:Reduction2}}
\label{app:ProofReduction2}

We state and prove a more detailed version of  Theorem \ref{thm:Reduction2}.
\begin{theorem}
    Assume that $\hbm(\,\cdot\,,\,\cdot\,)$ has complexity $\chi$ 
    and that for any $T\le \theta d$,
    $D_{\sKL}(\overline\rP_{\hby}^{T,\Delta}\|\rP_{\by}^{T})\le \eps$
    (where $\overline\rP_{\hby}^{T,\Delta}$ is the continuous time process obtained by Brownian-linear interpolation of Eq.~\eqref{eq:DiscretizedDiffusion}).
    
    Then for any $\sigma>0$ 
    there exists an algorithm with complexity $O(\chi\cdot T/\Delta)$, that takes as input 
    $\by= \bx+\sigma \bg$, $(\bx,\bg)\sim\mu\otimes\normal(0,\id)$, and outputs
    $\hbx$, such that
    \begin{align}
    \E \|\rP_{\bx|\by}-\rP_{\hbx|\by}\|_{\sTV}\le \sqrt{2 \eps} +  \eps_0(\theta) =:\overline{\eps}\, ,
    \label{eq:TV_PostSampling-2}
    \end{align}
    where $\eps_0(\theta):=\E \|\rP_{\bx|\by}-\normal(\bzero,(\theta d)^{-1}\bI_d )*\rP_{\bx|\by}\|_{\sTV}$
    is the expected TV distance between $\rP_{\bx|\by}$ and the convolution of $\rP_{\bx|\by}$
with a Gaussian with variance $1/(\theta d)$.
  As a consequence, there exists a randomized algorithm $\hbm_+$ with complexity $(N\chi\cdot T/\Delta)$
    that approximates the posterior expectation:
    \begin{align}
    \E\big\{\|\hbm_{+}(\by)-\bm(\by)\|^2\big\}\le 2\overline{\eps}+2N^{-1}\, .\label{eq:PosteriorMean-2}
    \end{align}
\end{theorem}
\begin{proof}
The algorithm consists in running the discretized diffusion \eqref{eq:DiscretizedDiffusion}
    with initialization $\hby_{t_0}= \by/\sigma^2$ at $t=t_0:= 1/\sigma^2$. To avoid notational burden,
    we will assume $(T-t_0)/\Delta$ to be an integer. Let $\hby_{t_0}^*$ be generated by the 
    discretized diffusion with initialization at $\hby_0$ at $t=0$. Note that the
    distribution of $\hby_{t_0}$ is the same as the one of $t_0\bx+\sqrt{t}\bg$ and hence by Assumption
    $(b)$,
and  Pinsker's inequality
    \begin{align}
    \|\rP_{\hby_{t_0}} -\rP_{\hby^*_{t_0}}\|_{\sTV} \le 
    \sqrt{\frac{1}{2}D_{\sKL}(\rP_{\hby^*_{t_0}}\|\rP_{\hby_{t_0}})}
    \le  \sqrt{\frac{1}{2}D_{\sKL}(\overline\rP_{\hby}^{T,\Delta}\|\rP_{\by}^{T})}\le \sqrt{\frac{\eps}{2}}\, .
    \end{align}
Hence $\hby_{t_0}$, $\hby^*_{t_0}$ can be coupled so that $\P(\hby_{t_0}\neq\hby^*_{t_0})\le\sqrt{\eps/2}$.

 We  extend this to a coupling
    of $(\hby^*_t)_{t_0\le t\le T}$ and $(\hby_{t})_{t_0\le t\le T}$ in the obvious way:
    we generate the two trajectories according to the discretized diffusion  \eqref{eq:DiscretizedDiffusion}
    with the same randomness $\hbz_t$. 
    Therefore  $\P(\hby_{T}\neq\hby^*_{T})\le\sqrt{\eps/2}$.
Another application of the assumption   $D_{\sKL}(\overline\rP_{\hby}^{T,\Delta}\|\rP_{\by}^{T})\le \eps$
and Pinsker's inequality yields $\P(\by_{T}\neq\hby^*_{T})\le\sqrt{\eps/2}$, for $\by_T \ed T \bx+\sqrt{T}\bg'$ with $(\bx,\bg')\sim\mu\otimes\normal(\bzero,\bI)$.
We conclude by triangle inequality $\P(\by_{T}\neq\hby_{T})\le 2\sqrt{\eps/2}$, which coincides with the claim \eqref{eq:TV_PostSampling-2}.

 Finally, Eq.~\eqref{eq:PosteriorMean-2} follows by generating $N$ i.i.d. copies $\hbx_1,
    \dots, \hbx_N$ using the above procedure, and letting $\hbm(\by)$ be their empirical average.
\end{proof}

\subsection{Proof of Theorem \ref{thm:Reduction1}}
\label{app:ProofReduction1}

The next statement makes a weaker assumption on the accuracy of the diffusion sampler (transportation instead of KL
distance), but in exchnage assumes the approximate drift $\hbm$ to be Lipschitz. 
We note that ${\rm Lip}(\bm(\,\cdot\, ,t)) = \sup_{\by}\|{\rm Cov}(\bx|\by_t=\by)\|_{\op}$,
and the latter is of $O(1/d)$ (for instance) if the coordinates of $\bx$ are weakly dependent under 
the posterior. 
\begin{theorem}\label{thm:Reduction1}
    Assume that $\hbm(\,\cdot\,,\,\cdot\,)$ has computational complexity $\chi$ and satisfies the following: $(a)$~For every $t\ge 1/\sigma^2$, $\by \mapsto \hbm(\by,t)$ is $L/d$-Lipschitz.
    $(b)$~There is a stepsize $\Delta$ such that $W_1(\rP_{\hby}^{T,\Delta},\rP_{\by}^{T})\le \eps$
    for any $T\le \theta d$.
    
    Then for any $\sigma>0$ 
    there exists an algorithm with complexity $O(\chi\cdot T/\Delta)$, that takes as input 
    $\by= \bx+\sigma \bg$, $(\bx,\bg)\sim\mu\otimes\normal(0,\id)$, and outputs
    $\hbx$, such that
    \begin{align}
    \E_{\by} W_{1}(\rP_{\bx|\by},\rP_{\hbx|\by}) \le 2 e^{\theta L}\eps +  \frac{1}{\sqrt{\theta}}=:\overline{\eps}\, .
    \label{eq:W1_PostSampling}
    \end{align}
    As a consequence, Eq.~\eqref{eq:PosteriorMean} holds also in this case with the new definition of $\overline{\eps}$. 
\end{theorem}

 The algorithm consists in running the discretized diffusion \eqref{eq:DiscretizedDiffusion}
    with initialization $\hby_{t_0}= \by/\sigma^2$ at $t=t_0:= 1/\sigma^2$. To avoid notational burden,
    we will assume $(T-t_0)/\Delta$ to be an integer. Let $\hby_{t_0}^*$ be generated by the 
    discretized diffusion with initialization at $\hby_0$ at $t=0$. Note that the
    distribution of $\hby_{t_0}$ is the same as the one of $t_0\bx+\sqrt{t}\bg$ and hence by Assumption
    $(b)$,
    \begin{align}
    W_1(\rP_{\hby_{t_0}} ,\rP_{\hby^*_{t_0}}) \le W_1(\rP^{\hby}_{T,\Delta},\rP^{\by}_{T})\le \eps\, .
    \end{align}
    In other words there exists a coupling of  $\hby^*_{t_0}$ and $\hby_{t_0}$ such that
    $\E\|\hby^*_{t_0}-\hby_{t_0}\|_2\le \eps$. 
    
    We  extend this to a coupling
    of $(\hby^*_t)_{t_0\le t\le T}$ and $(\hby_{t})_{t_0\le t\le T}$ in the obvious way:
    we generate the two trajectories according to the discretized diffusion  \eqref{eq:DiscretizedDiffusion}
    with the same randomness $\hbz_t$. A simple recursive argument (using the Lipschitz property of $\hbm$,
    in Assumption $(a)$)
    then yields 
    \begin{align}
    \E\|\hby^*_{T}-\hby_{T}\|_2\le \Big(1+L\Delta/d\Big)^{T/\Delta}\eps\le e^{LT/d}\eps\, .
    \end{align}
    (See for instance \cite{montanari2023posterior} or \cite{alaoui2023sampling} for examples of this calculation.)
    Let now $\by_T \ed T \bx+\sqrt{T}\bg'$ for $(\bx,\bg')\sim\mu\otimes\normal(\bzero,\bI)$.
    Another application of Assumption $(a)$ implies that this can be coupled to $\hby^*_T$ so that
    $\E\|\by_T-\hby^*_T\|\le \eps$, and therefore
    \begin{align}
    \E\|\hby_{T}-\by_{T}\|_2\le 2\, e^{LT/d}\eps\, .
    \end{align}
    As output, we return $\hbx = \hby_T/T$. Using $\E\|\by_T-\bx\|=\E\|\bg\|/\sqrt{T}$ and $T=\theta d$,
     \begin{align}
    \E \|\bx-\hbx\| \le 2 e^{\theta L}\eps +  \frac{1}{\sqrt{\theta}}\, .
    \end{align}
    Since the coupling has been constructed conditionally on $\by$, the claim
    \eqref{eq:W1_PostSampling} follows.

    Finally, Eq.~\eqref{eq:PosteriorMean} follows by generating $N$ i.i.d. copies $\hbx_1,
    \dots, \hbx_N$ using the above procedure, and letting $\hbm(\by)$ be their empirical average.
    
\section{Proof of Lemma \ref{lemma:sparse_vector_alignment}}
We let $\bB\sim \normal(\bzero, \bI_{n^{2}})$ so that $\bW\sim (\bB+\bB^{\top})/2$. We consider a non-random vector $\bv$ fitting the description, and note that
$$|\langle \bv, \bW\bv\rangle|\leq \dfrac{1}{2}\left\{|\langle \bv, \bB\bv\rangle|+|\langle \bv, \bB^{\top}\bv\rangle|\right\}$$
We know that $\langle \bv, \bB\bv\rangle, \langle \bv, \bB^{\top}\bv\rangle\sim \normal(0, 1)$. We thus have, by Gaussian tail bounds and a triangle inequality:
$$\P\left(|\langle \bv, \bW \bv\rangle|\geq C\sqrt{\log{\binom{n}{k}}}\right)\leq 2\exp\left(-\dfrac{C^{2}}{2}\log\binom{n}{k}\right)$$
Union bounding over the set of all such vectors gives us the desired statement, as the cardinality of this set is $\binom{n}{k}2^{k}$.
\section{Proof of Lemma \ref{lemma:non-asymp_alignment_bound}}
\textit{Proof of Lemma \ref{lemma:non-asymp_alignment_bound}}: Using Lemma \ref{lemma:non-asymptotic-eigenval}, we can take $x=n^{-1/4}$, say, and $\theta=\sqrt{1+\delta}$ for $\delta=o_{n}(1)$, to get that
$$\P\left(\lambda_{1}(\by)\leq \theta+1/\theta-n^{-1/4}-2/n\right)\leq C\exp(-cn^{1/2})$$
for some absolute constants $C, c>0$.

We have the following identity, letting $\bW\sim \GOE(n, 1/n)$:
\begin{align*}
\langle \bu, \bv_{1}\rangle^{2}=\dfrac{1}{\theta^{2}\langle \bu, (\lambda_{1}(\by)\bI-\bW)^{-2}\bu\rangle}\geq \dfrac{1}{\theta^{2}\cdot \|(\lambda_{1}(\by)\bI-\bW)^{-2}\|_{\op}}
\end{align*}
By standard Gaussian concentration, we know that
$$\P\left(\|\bW\|_{\op}\geq 2+x\right)\leq C\exp(-cnx^{2})$$
We take
$$x=\dfrac{\theta+1/\theta-n^{-1/4}-2/n-2}{4}$$
Note that with $\theta=\sqrt{1+\delta}$ and $\delta=o_{n}(1)$, we know that $\theta+1/\theta-2=\Theta(\delta^{2})$, so that $x=\Theta(\delta^{2})$ if $\delta\gg n^{-1/8}$. Furthermore we have, by Theorem 1 above,
$$\P\left(\lambda_{\min}(\lambda_{1}(\by)\bI-\bW)\leq 2x\right)\leq C\exp(-cn^{1/2})$$
and on the complement of this event, we know that 
$$\langle \bv_{1}, \bu\rangle^{2}\geq \dfrac{4x^{2}}{\theta^{2}}=\Theta(\delta^{4})$$
since $\theta=\Omega(1)$, and so $|\langle \bv_{1}, \bu\rangle|=\Omega(\delta^{2})$. Hence we are done.
\section{Proof of Theorem \ref{thm:meta_thm_2}}\label{sec:meta_thm_2}
The optimality of $\hbm$ with respect to scalings $c\hbm$ implies,
by Pythagoras' theorem:
\begin{align*}
\E\big\{\|\hbm(\by_t,t)-\bx\|^2\big\} = \E\{\|\bx\|^2\}- \E\big\{\|\hbm(\by_t,t)\|^2\big\}\, ,
\end{align*}
whence, using assumption \eqref{eq:ErrorModifPoly},
we obtain that
\begin{align}
\sup_{t\le (1-\gamma)t_{\salg}} \E[\|\hbm(\by_{t}, t)\|^{2}]= o(t_{\salg}^{-1})\, .\label{eq:Eps_n}
\end{align}
Recall that $(\hby_{t})$ is the generated diffusion, defined in Eq.\eqref{eq:DiscretizedDiffusion}.
From Girsanov's formula on $[0, (1-\gamma)t_{\salg}]$, we get that:
$$\KL\left((\by_{t})_{t\in \naturals\Delta\cap [0, (1-\gamma)t_{\salg}]}\|(\hby_{t})_{t\in \naturals\Delta\cap [0, (1-\gamma)t_{\salg}]}\right)=\dfrac{\Delta}{2}\sum_{t\in \naturals\Delta\cap[0, (1-\gamma)t_{\salg}]}\mathbb{E}[\|\hbm(\by_{t},t)\|^{2}]=o(1)$$
From Eq. \eqref{eq:early_time}, we get from Markov's inequality that with high probability, 
$$\dfrac{\Delta}{2}\sum_{t\in \naturals\Delta\cap[0, (1-\gamma)t_{\salg}]}\|\hbm(\by_{t}, t)\|^{2}=o(1)\overset{(a)}{\Rightarrow} \dfrac{\Delta}{2}\sum_{t\in \naturals\Delta\cap[0, (1-\gamma)t_{\salg}]}\|\hbm(\by_{t}, t)\|=o(\sqrt{t_{\salg}}),$$
where $(a)$ follows by Cauchy-Schwarz. By Pinsker's inequality on Eq. \eqref{eq:kl_bound_1}, we obtain that the same event holds for $(\hby_{t})$ with high probability:
$$\dfrac{\Delta}{2}\sum_{t\in \naturals\Delta\cap[0, (1-\gamma)t_{\salg}]}\|\hbm(\hby_{t}, t)\|=o(\sqrt{t_{\salg}}).$$
Fix a constant $\eps_{0}>0$ to be chosen later. By taking the constant $\gamma$ to be close enough to $1$, we get that for $t_b:= \min\{\ell\Delta:\; \ell\Delta\ge (1+\delta)t_{\salg}\}$:
$$\hby_{t_{b}}=\bB_{t_{b}}+\Delta\sum_{t\in \mathbb{N}\Delta\cap [0, t_{b}]}\hbm_{n}(\hby_{t}, t):=\bm_{0}+\bB_{t_{b}}$$
with $\P(\|\bm_{0}\|\geq \eps_{0}t_{\salg})=o(1)$. Next we couple $(\hby_t: t\ge t_b)$  to $(\hby_t^0: t\ge t_b)$ defined by letting
$\hby^0_{t_b}=\bB_{t_b}$ and,
for $t\in \naturals\Delta\cap [t_b,\infty)$,
\begin{align*}
    \hby^0_{t+\Delta} &= \hby^0_t+ \hbm(\hby^0_t,t)\Delta +\bB_{t+\Delta}-\bB_t \, .
\end{align*}
By the assumed Lipschitz property of $\hbm$ and Gromwall's lemma:
\begin{align}
\| \hby_t -\hby^0_{t} \| &\le \prod_{t'\in \naturals\Delta\cap [t_b,t] }\Big(1+\frac{C\Delta}{t'}\Big)
\cdot\|\bm_0\|\nonumber\\
&\le \Big(\frac{t}{t_{\salg}}\Big)^{C'} \|\bm_0\| \le C'\eps_{0}t_{\salg} \, ,\label{general_eqYY0}
\end{align}
where the last inequality holds for some absolute constant $C'$ and all $t\le Ct_{\salg}$, on the high probability 
event $\|\bm_0\|\le \eps_{0}t_{\salg}$.

We are now in position to finish the proof of the theorem. We couple the process $(\hby^0_t: \; t \ge t_b)$ defined above with $(\bB_t:t\ge t_b)$
to get
\begin{align*}
    \KL(\bB_{t+\Delta}\|\hby^0_{t+\Delta}) &\le \KL(\bB_{t}\|\by^0_{t}) + C\, \E\big\{\|\hbm(\bB_t,t)\|^2\big\}\cdot \Delta
\end{align*}
Using   $\KL(\bB_{t_b}\|\hby^0_{t_b})=0$,
summing the last inequality over $t\ge t_b$, and applying Pinsker's inequality we obtain, with $C'$ a suitably large constant
\begin{align}
\sup_{t\in \naturals\Delta\cap [t_{\salg}(1+\delta),\infty)}\TV(\hby^0_{t},\bB_{t})  = o(1)\, .
\end{align}
as long as
$$\Delta\cdot \sum_{t\in \mathbb{N}\cdot\Delta\cap[t_{\salg}(1+\delta), \infty]}\E[\|\hbm(\bB_{t}, t)\|^{2}]=o(1)$$
Consequently, by using the function $\hbm$, along with Eq. \eqref{eq:general_ErrorAboveT}, we get that for some $c_{n}=o_{n}(1)$,
$$\sup_{t\in \naturals\Delta\cap [t_{\salg}(1+\delta),\infty)}\P(\|\hbm(\hby_{t}^{0}, t)\|\geq c_{n})=o(1)$$
Putting together this bound and Eq.~\eqref{general_eqYY0} (which holds with high probability)
we obtain from the Lipschitz property of 
$\hbm$, 
\begin{align}
\max_{t\in [t_{\salg}(1+\delta),Ct_{\salg}]}\prob\big(\|\hbm(\hby_t,t)\|\ge \eps_{0}\big) = o(1)\, ,
\end{align}
(we absorb $C'\eps_{0}$ from Eq. \eqref{general_eqYY0} into $\eps_{0}/2$ as it is arbitrary). Hence
$$\inf_{t\in [t_{\salg}(1+\delta), Ct_{\salg}]}W_{1}(\hbm(\hby_{t}, t),\bx)\geq \alpha-\eps_{0}+o(1)$$
By taking $\eps_{0}\downarrow 0$, we obtain the claim of the theorem.  

\section{Proof of Corollary \ref{thm:ModifDistr}}\label{sec:thm_ModifDistr}

In order to simplify some of the formulas below we 
center $\mu_{n,k}$. Namely, we redefine 
$\mu_{n,k}$ to be the distribution of $\bx = \bu\bu^{\sT}-\E[\bu\bu^{\sT}]$
when $\bu\sim \Unif(B_{n,k})$.

Throughout this proof, $C$ denotes a generic constant which depends on the constants in the assumptions, and is allowed to change from line to line.
We will write $\E_{n,k}$ for expectation under $\mu_{n,k}$
and $\E$ for expectation under $\omu_{n,k}=\frac{1}{2}\mu_{n,k}+\frac{1}{2}\delta_{\bzero}$.
Further $\by_t = t\bx+\bW_t$, where the distribution of $\bx$ is either $\mu_{n,k}$
or $\omu_{n,k}$ as indicated. 

The optimality of $\hbm_n$ with respect to scalings $c\hbm_n$ implies,
by Pythagoras' theorem:
\begin{align*}
\E\big\{\|\hbm_n(\by_t,t)-\bx\|^2\big\} = \E\{\|\bx\|^2\}- \E\big\{\|\hbm_n(\by_t,t)\|^2\big\}\, ,
\end{align*}
whence, using assumption \eqref{eq:ErrorModifPoly},
we obtain that
\begin{align}
\sup_{t\le (1-\gamma)t_{\salg}} \E[\|\hbm_{n}(\by_{t}, t)\|^{2}]= o_n(n^{-1})\, .
\end{align}
By Law of Total Probability, we have 
$$\E[\|\hbm_{n}(\by_{t}, t)\|^{2}]=\dfrac{1}{2}\E_{\bx\sim \mu_{n, k}}[\|\hbm_{n}(\by_{t}, t)\|^{2}]+\dfrac{1}{2}\mathbb{E}[\|\hbm_{n}(\bW_{t}, t)\|^{2}],$$
from which we get
\begin{align}\label{eq:early_time}
\sup_{t\leq (1-\gamma)t_{\salg}}\E[\|\hbm_{n}(\bW_{t}, t)\|^{2}]=o_{n}(n^{-1})
\end{align}
From Girsanov's formula on $[0, (1-\gamma)t_{\salg}]$, we get that
\begin{align}\label{eq:kl_bound_1}
\KL\left((\bW_{t})_{t\in \naturals\Delta\cap [0, (1-\gamma)t_{\salg}]}\|(\hby_{t})_{t\in \naturals\Delta\cap [0, (1-\gamma)t_{\salg}]}\right)=\dfrac{\Delta}{2}\sum_{t\in \naturals\Delta\cap[0, (1-\gamma)t_{\salg}]}\mathbb{E}[\|\hbm_{n}(\bW_{t},t)\|^{2}]=o_{n}(1)
\end{align}
due to the fact that $t_{\salg}=n/2$. From Eq. \eqref{eq:early_time}, we get from Markov's inequality that with high probability, 
$$\dfrac{\Delta}{2}\sum_{t\in \naturals\Delta\cap[0, (1-\gamma)t_{\salg}]}\|\hbm_{n}(\bW_{t}, t)\|^{2}=o_{n}(1)\overset{(a)}{\Rightarrow} \dfrac{\Delta}{2}\sum_{t\in \naturals\Delta\cap[0, (1-\gamma)t_{\salg}]}\|\hbm_{n}(\bW_{t}, t)\|=o_{n}(\sqrt{n}),$$
where $(a)$ follows by Cauchy-Schwarz. By Pinsker's inequality on Eq. \eqref{eq:kl_bound_1}, we obtain that the same event holds for $(\hby_{t})$ with high probability:
$$\dfrac{\Delta}{2}\sum_{t\in \naturals\Delta\cap[0, (1-\gamma)t_{\salg}]}\|\hbm_{n}(\hby_{t}, t)\|=o_{n}(\sqrt{n}).$$
Fix a constant $\eps_{0}>0$ to be chosen later. By taking the constant $\gamma$ to be close enough to $1$, we get that for $t_b:= \min\{\ell\Delta:\; \ell\Delta\ge (1+\delta)t_{\salg}\}$:
$$\hby_{t_{b}}=\bB_{t_{b}}+\Delta\sum_{t\in \mathbb{N}\Delta\cap [0, t_{b}]}\hbm_{n}(\hby_{t}, t):=\bm_{0}+\bB_{t_{b}}$$
with $\P(\|\bm_{0}\|\geq \eps_{0}n)=o_{n}(1)$, and $(\hby_{t})$ is the generated diffusion, defined in Eq.\eqref{eq:DiscretizedDiffusion}.
%
Next we couple $(\hby_t: t\ge t_b)$  to $(\hby_t^0: t\ge t_b)$ defined by letting
$\hby^0_{t_b}=\bB_{t_b}$ and,
for $t\in \naturals\Delta\cap [t_b,\infty)$,
\begin{align*}
    \hby^0_{t+\Delta} &= \hby^0_t+ \hbm_n(\hby^0_t,t)\Delta +\bB_{t+\Delta}-\bB_t \, .
\end{align*}
By the assumed Lipschitz property of $\hbm$ and Gromwall's lemma:
\begin{align}
\| \hby_t -\hby^0_{t} \| &\le \prod_{t'\in \naturals\Delta\cap [t_b,t] }\Big(1+\frac{C\Delta}{t'}\Big)
\cdot\|\bm_0\|\nonumber\\
&\le \Big(\frac{t}{t_{\salg}}\Big)^{C'} \|\bm_0\| \le C'\eps_{0}n \, ,\label{eqYY0}
\end{align}
where the last inequality holds for some absolute constant $C'$ and all $t\le Cn=(2C)t_{\salg}$, on the high probability 
event $\|\bm_0\|\le \eps_{0}n$.

In order to finish the proof, we state and prove a useful lemma. In a nutshell, $\hbm_{n}$ resists improvements from eigenvalue hypothesis tests:
\begin{lemma}\label{lem:OptimalityAboveTalg}
Under the assumptions of Theorem  \ref{thm:ModifDistr}, assume that $\delta_n$ vanishes
slowly enough. Then, for $t\ge (1+\delta)t_{\salg}$,
\begin{align}
\E\big\{\|\hbm_n(\bB_t,t)\|^2\big\} \le C e^{-(\sqrt{t}-\sqrt{t_{\salg}})^4/Cn}\, .
\end{align}
\end{lemma}
\begin{proof}
Let $\lambda_1(\by_t)$ be the maximum eigenvalue of  $(\by_t+\by_T^{\sT})/\sqrt{2}$, 
$\lambda_*(t) := \sqrt{2}(\sqrt{t}+\sqrt{t_{\salg}})^2$ and
$\phi(\by_t) :=\bfone(\lambda_1(\by_t)>\lambda_*(t))$. Concentration 
results about spiked GOE matrices imply, for all $t\ge (1+\delta)t_{\salg}$,
\begin{align}
\prob_{n,k}\big(\phi(\by_t)= 0\big)\le  C e^{-n(\sqrt{t}-\sqrt{t_{\salg}})^4/C}\, ,
\;\;\;\; \prob\big(\phi(\bB_t)= 0\big)\le  C\, \exp\Big\{-\frac{1}{Cn}(\sqrt{t}-\sqrt{t_{\salg}})^4\Big\}\, .
\label{eq:EM-Bound-0}
\end{align}
(To simplify notations, we omit the dependence of $\phi$ on $t$.) 

By assumption, the MSE of $\hbm_n(\by_t,t)$ is not larger than the one of $\hbm_n(\by_t,t)\phi(\by_t)$.
Letting $\ophi(\by_t) :=1-\phi(\by_t)$:
\begin{align}
\E\big\{\|\hbm(\by_t,t)-\bx\|^2\big\}&\le \E\big\{\|\hbm(\by_t,t)\phi(\by_t)-\bx\|^2\big\}\nonumber\\
& = \E\big\{\|\hbm(\by_t,t)-\bx\|^2\phi(\by_t)\big\} + \E\big\{\|\bx\|^2\ophi(\by_t)\big\}\nonumber\\
& = \E\big\{\|\hbm(\by_t,t)-\bx\|^2\phi(\by_t)\big\} + \prob_{n,k}\big(\phi(\by_t)= 0\big)\, ,
\end{align}
whence
\begin{align}
 \E\big\{\|\hbm(\by_t,t)-\bx\|^2\ophi(\by_t)\big\}\le \prob_{n,k}\big(\phi(\by_t)= 0\big)\, .
\label{eq:EM-Bound-1}
\end{align}
On the other hand
\begin{align}
\E\big\{\|\hbm(\by_t,t)-\bx\|^2\ophi(\by_t)\big\} &\ge \E\big\{\|\hbm(\by_t,t)\|^2\bfone_{\bx=0}\ophi(\by_t)\big\}
\nonumber\\
& = \frac{1}{2}\E\big\{\|\hbm(\bB_t,t)\|^2\ophi(\bW_t)\big\}\nonumber\\
&\ge \frac{1}{2}\E\big\{\|\hbm(\bB_t,t)\|^2\big\} -\frac{1}{2} \prob\big(\phi(\bW_t)= 1\big)\, .
\label{eq:EM-Bound-2}
\end{align}
Putting together Eqs.~\eqref{eq:EM-Bound-0}, \eqref{eq:EM-Bound-1}, \eqref{eq:EM-Bound-2},
we obtain (eventually adjusting the constant $C$)
\begin{align*}
\E\big\{\|\hbm(\bB_t,t)\|^2\big\} &\le \prob\big(\phi(\bB_t)= 1\big) +2\, \prob_{n,k}\big(\phi(\by_t)= 0\big)\\
&\le C \exp\Big\{-\frac{1}{Cn}(\sqrt{t}-\sqrt{t_{\salg}})^4\Big\}\, .
\end{align*}
\end{proof}

From Lemma \ref{lem:OptimalityAboveTalg}, Condition 2 of Theorem \ref{thm:meta_thm_2} is satisfied. Hence we are done.

\section{Details of numerical simulations}

Our GNN architecture uses node embeddings that are generated by $3$ iterations
of the power method and $10$ message passing layers.
Each message passing layer comprises of the `message' and `node-update'
multi-layer perceptrons (MLPs), both of which are $2$-layer neural networks with LeakyReLU nonlinearity. We simply use the complete graph with self-loops for node embedding updates.  We find that  `seeding' the node embeddings with iterations of
power method is crucial for effective training.

During training of the denoiser, we sample time points $t$ as follows: choose a time threshold $t_{\star}$, and sample so that times $t>t_{\star}$ are picked with total probability $0.95$ (and times $t\leq t_{\star}$ are picked with total probability $0.05$). Within each interval $(0, t_{\star}]$ and $(t_{\star}, T)$, times are chosen at random. This allows the neural network to initially prioritize learning in a low-noise regime. Several fine-tuning steps are taken, for which $t_{\star}$ is gradually decreased to refine the network on lower SNR.

Empirically, training directly with $10$ layers is difficult, due to its depth. We find that training initially with $7$ layers, then subsequently introducing the later layers results in more stable training.

We train such a network using $N=30000$ samples $\bx_{i}$ from the distribution $\omu_{n, k}$,
and evaluate their MSE on $N_{\text{test}} = 15000$ samples.

\end{document}